\renewcommand{\[}{\begin{eqnarray}}
\renewcommand{\]}{\end{eqnarray}}
\DeclareMathOperator*{\E}{\mathbb{E}}
\newcommand{\sgn}{\textnormal{sgn}}
\newtheorem{lemma}{Lemma}
\newtheorem{remark}{Remark}
\title{On Infinite-Width Hypernetworks}
\author{%
Etai Littwin\thanks{Equal Contribution} \\
School of Computer Science \\
Tel Aviv University \\
Tel Aviv, Israel \\
\texttt{etai.littwin@gmail.com} \\
\And
Tomer Galanti$^*$ \\
School of Computer Science \\
Tel Aviv University \\
Tel Aviv, Israel \\
\texttt{tomerga2@tauex.tau.ac.il} \\
\AND
Lior Wolf \\
%Facebook AI Research (FAIR) \& \\
School of Computer Science \\
Tel Aviv University \\
Tel Aviv, Israel \\
%\texttt{wolf@fb.com} 
\texttt{wolf@cs.tau.ac.il} 
\And
Greg Yang \\
Microsoft Research AI \\
\texttt{gregyang@microsoft.com} \\
}
\begin{document}

\maketitle

\begin{abstract}
{\em Hypernetworks} are architectures that produce the weights of a task-specific {\em primary network}. A notable application of hypernetworks in the recent literature involves learning to output functional representations. In these scenarios, the hypernetwork learns a representation corresponding to the weights of a shallow MLP, which typically encodes shape or image information. While such representations have seen considerable success in practice, they remain lacking in the theoretical guarantees in the wide regime of the standard architectures. In this work, we study wide over-parameterized hypernetworks. We show that unlike typical architectures, infinitely wide hypernetworks do not guarantee convergence to a global minima under gradient descent. We further show that convexity can be achieved by increasing the dimensionality of the hypernetwork's output, to represent wide MLPs. In the dually infinite-width regime, we identify the functional priors of these architectures by deriving their corresponding GP and NTK kernels, the latter of which we refer to as the {\em hyperkernel}. As part of this study, we make a mathematical contribution by deriving tight bounds on high order Taylor expansion terms of standard fully connected ReLU networks.
\end{abstract}

% We show that when the meta network is infinitely wide, training dynamics in parameter space do not simplify to a convex problem, and is largely dependant on the depth and width of the implicit network. 
% However, for an infinitely wide meta and implicit networks, the learning dynamics is determined by a linear model obtained from the first-order Taylor expansion of the network around its initial parameters.

%% rearrange later
%\newpage

% \greg{we have to put much more emphasis on our experimental results. The current way we are writing it, there is very little surprise: the theoretical results have some nuance to them but overall it's not completely surprising, but the empirical results contain more punch, in particular the image generation stuff, so we should put more spotlight on those. }
% \etai{We trimmed down a lot in the theory part. I agree regarding the experimental results, the problem is we dont have much of it at the moment. After the deadline we can scale up and do many more experiments}
\section{Introduction}
\begin{wrapfigure}{r}{0.28\textwidth}
\includegraphics[width=0.28\textwidth]{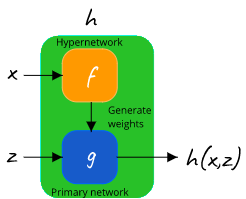}
\end{wrapfigure}
In this work, we analyze the training dynamics of over-parameterized meta networks, which are networks that output the weights of other networks, often referred to as {\em hypernetworks}.
In the typical framework, a function $h$ involves two networks, $f$ and $g$. The \emph{hypernetwork} $f$ takes the input $x$ (typically an image) and returns the weights of the \emph{primary network}, $g$, which then takes the input $z$ and returns the output of $h$.

The literature of hypernetworks is roughly divided into two main categories. {In the functional representation literature~\cite{Littwin_2019_ICCV,sitzmann2020implicit,wu2018pay,9054042,Klocek_2019} the input to the hypernetwork $f$ is typically an image. For shape reconstruction tasks, the  network $g$ represents the shape via a signed distance field, where the input are coordinates in 3D space. In image completion tasks, the inputs to $g$ are image coordinates, and the output is the corresponding pixel intensity.} {In these settings, $f$ is typically a large network and $g$ is typically a shallow fully connected network.}  

% These publication typically focus on computer vision tasks, such as, implicit image representation and shape reconstruction. 
In the second category~\cite{brock2018smash,lorraine2018stochastic,Oswald2020Continual,zhang2018graph}, hypernetworks are typically used for hyper-parameter search, where $x$ is being treated as a hyperparameter descriptor and is optimized alongside with the network's weights. In this paper, we consider models corresponding to the first group of methods.

% Since the implicit network $g$ depends on the output of $f$, the hypernetwork architecture is a natural choice for modeling conditional relationships, and has been used extensively in various meta learning tasks, holding state of the art results on numerous popular benchmarks~\cite{Littwin_2019_ICCV,wu2018pay,bertinetto2016learning,Oswald2020Continual,brock2018smash,zhang2018graph,lorraine2018stochastic}. 

%REMOVE OR DISTRIBUTE A longstanding question in deep learning is how different architectures affect the ability to train deep networks. A popular approach for studying this question, is the ``infinite width'' setting~\cite{pooleExponentialExpressivityDeep2016,Schoenholz2016DeepIP,NIPS2017_6879,jacotNTK,NIPS2019_9063,meiMeanFieldView2018,Sirignano2019MeanFA,conf/nips/AroraDH0SW19}, which provides a convenient framework for analyzing deep neural networks. For instance,~\cite{lee2018deep,Yang2019ScalingLO,Yang2019TensorPI} suggested that for various architectures, when the weights are assumed to be i.i.d samples, the pre-activations converge to Gaussian Processes (GP). 

Following a prominent thread in the recent literature, our study takes place in the regime of wide networks.~\cite{jacotNTK} recently showed that, when the width of the network approaches infinity, the gradient-descent training dynamics of a fully connected network $f$ can be characterized by a kernel, called the \emph{Neural Tangent Kernel} (or NTK for short). {In other words, as the width of each layer approaches infinity,
provided with proper scaling and initialization of the weights, it holds that:
\begin{equation}
\frac{\partial f(x;w)}{\partial w} \cdot \frac{\partial^{\top} f(x';w)}{\partial w} \to \Theta^f(x,x')
\end{equation}
as the width, $n$ of $f$ tends to infinity. Here, $w$ are the weights of the network $f(x;w)$. As shown in~\cite{jacotNTK}, as the width tends to infinity, when minimizing the squared loss using gradient descent, the evolution through time of the function computed by the network follows the dynamics of kernel gradient descent with kernel $\Theta^f$.} To prove this phenomenon, various papers~\cite{NIPS2019_9063,Bai2020TaylorizedTT,Bai2020BeyondLO} introduce a Taylor expansion of the network output around the point of initialization and consider its values.
It is shown that the first-order term is deterministic during the SGD optimization and the higher-order terms converge to zero as the width $n$ tends to infinity. 

% \begin{figure}[t]
%     \centering
%     \begin{tabular}{c}
%     \includegraphics[width=.8\linewidth]{}  \\
%     \end{tabular}
%     \caption{{\bf Digit reconstructions in low training data regime.}}
% \end{figure}

A natural question that arises when considering hypernetworks is whether a similar ``wide'' regime exists, where trained and untrained networks may be functionally approximated by kernels. If so, since this architecture involves two networks, the ``wide'' regime needs a more refined definition, taking into account both networks.

Our contributions:
\begin{enumerate}[leftmargin=*]
    \item We show that infinitely wide hypernetworks can induce highly non-convex training dynamics under gradient descent. The complexity of the optimization problem is highly dependent on the architecture of the primary network $g$, which may considerably impair the trainability of the architecture if not defined appropriately.
    % We show the dynamics of the function computed by a hypernetwork can not be accurately approximated by its linearization {\color{red}(i.e., kernel regression)} if only the hypernetwork $f$, but not the implicit network $g$ is infinitely wide. The deeper $g$ is, the more nonlinear the evolution is.
    \item However, when the widths of both the hypernetwork $f$ and the primary network $g$ tend to infinity, the optimization dynamics of the {hypernetwork simplifies}, and its neural tangent kernel (which we call the \emph{hyperkernel}) has a well defined infinite-width limit governing the network evolution.
    \item We verify our theory empirically and also demonstrate the utility of this \emph{hyperkernel} on several {functional representation} tasks. Consistent with prior observations on kernel methods, {the hypernetwork induced kernels} also outperforms a trained hypernetwork when training data is small. 
    % \item {\color{red}Even though, we show empirically that increasing the depth of $g$ is detrimental to the performance, while increasing the depth of $f$ has a negligible effect on the performance. This is in contrast to standard neural networks, where depth is not detrimental to the performance in the infinite-width regime.}
    \item We make a technical contribution by deriving asymptotically tight bounds on high order Taylor expansion terms in ReLU MLPs. Our result partially settles a conjecture posed in \cite{Dyer2020Asymptotics} regarding the asymptotic behavior of general correlation functions.
    % In particular, we find hyperkernels to be surprisingly good image completers when there is little data. By the compositional nature of the hyperkernel, it can be efficiently computed in typical meta-learning settings.
\end{enumerate}

\subsection{Related Works}

\paragraph{Hypernetworks} 
% \greg{This should not just be a laundry list of things regarding hypernetworks but rather a compelling motivation for why hypernetworks are interesting, i.e. what have they done in practice that could not be achieved before, and why should one, as either a practitioner or theoretician, want to study it? I THINK IT IS BETTER NOW}
Hypernetworks were first introduced under this name in~\cite{Ha2017HyperNetworks}, are networks that generate the weights of a second {\em primary} network that computes the actual task. However, the idea of having one network predict the weights of another was proposed earlier and has reemerged multiple times~\cite{klein2015dynamic,7410424,jia2016dynamic}. % The earlier works employed it in order to adapt convolutional layers to the input image in the context of vision tasks. %proposed a framework in which the convolution weights are specified as the output of a separate neural network based on the input in order to adapt the lower layers to the motion of the image input. This was later extended for multiple layers by~\cite{jia2016dynamic}, for video frame and stereo view prediction.
{The tool can naturally be applied for image representations tasks. %For instance, in~\cite{bertinetto2016learning}, the authors employed hypernetworks for few-shot learning tasks.
In~\cite{Littwin_2019_ICCV}, they applied hypernetworks for 3D shape reconstruction from a single image. In~\cite{sitzmann2020implicit} hypernetworks were shown to be useful for learning shared image representations. Hypernetworks were also shown to be effective in non-image domains. For instance, hypernetworks achieve state of the art results on the task of decoding error correcting codes~\cite{NIPS2019_8504}.
}

{Several publications consider a different framework, in which, the inputs $x$ of the hypernetwork are optimized alongside to the weights of the hypernetwork.} In this setting, hypernetworks were recently used for continuous learning by~\cite{Oswald2020Continual}.  Hypernetworks can be efficiently used for neural architecture search, as was demonstrated by~\cite{brock2018smash,zhang2018graph}, where a feedforward regression (with network $f$) replaces direct gradient-based learning of the weights of the primary network while its architecture is being explored. Lorraine et al. applied hypernetworks for hyperparameters selection~\cite{lorraine2018stochastic}.

Despite their success and increasing prominence, little theoretical work was done in order to better
understand hypernetworks and their behavior. A recent paper~\cite{jayakumar2020multiplicative} studies the role of multiplicative interaction within a unifying framework to describe a range of classical and modern neural network architectural motifs, such as gating, attention layers,
hypernetworks, and dynamic convolutions amongst others. It is shown that standard neural networks are a strict subset of neural networks with multiplicative interactions. {In~\cite{galanti2020modularity} the authors theoretically study the modular properties of hypernetworks. In particular, they show that compared to standard embedding methods, hypernetworks are exponentially more expressive when the primary network is of small complexity. In this work, we provide a complementary perspective and show that a shallow primary network is a requirement for successful training. }\cite{chang2020principled} showed that applying standard initializations on a hypernetwork produces sub-optimal initialization of the primary network. A principled technique for weight initialization in hypernetworks is then developed.

\paragraph{Gaussian Processes and Neural Tangent Kernel}
The connection between infinitely wide neural networks, Gaussian processes and kernel methods, has been the focus of many recent papers~\cite{jacotNTK,lee2018deep,Yang2019ScalingLO,NIPS2017_6879,Schoenholz2016DeepIP,Rudner2018OnTC,Woodworth2020KernelAR,Wei2018OnTM,Novak2018BayesianCN}. Empirical support has demonstrated the power of CNTK (convolutional neural tangent kernel) on popular datasets, demonstrating new state of the art results for kernel methods~\cite{conf/nips/AroraDH0SW19,li2019enhanced}. \cite{littwin2020random} showed that ReLU ResNets~\cite{He2016DeepRL} can have NTK convergence occur even when depth and width simultaneously tend to infinity, provided proper initialization. In this work, we extend the kernel analysis of networks to hypernetworks, and characterize the regime in which the kernels converge and training dynamics simplify.

\section{Setup}

In this section, we introduce the setting of the analysis considered in this paper. We begin by defining fully connected neural networks and hypernetworks in the context of the NTK framework. %\TG{In the supplementary we define additional notations that are used in the paper.}

\noindent{\bf Neural networks \quad} In the NTK framework, a fully connected neural network, $f(x;w) = y^L(x)$, is defined in the following manner: 
\begin{equation}
\begin{cases}  
y^{l}(x) = \sqrt{\frac{1}{n_{l-1}}} W^l q^{l-1}(x) \\
q^{l}(x) = \sqrt{2} \cdot \sigma(y^l(x))
\end{cases} 
\textnormal{ and } q^0(x) = x\, ,
\end{equation}
where $\sigma : \mathbb{R} \to \mathbb{R}$ is the activation function of $f$. Throughout the paper, we specifically take $\sigma$ to be a piece-wise linear function with a finite number of pieces (e.g., the ReLU activation $\textnormal{ReLU}(x) := \max(0,x)$ and the Leaky ReLU activation $\textnormal{ReLU}_{\alpha}(x) = 
\begin{cases} 
x  &\textnormal{ if } x \geq 0\\
\alpha x &\textnormal{ if } x<0
\end{cases} 
$). The weight matrices $W^l \in \mathbb{R}^{n_l \times n_{l-1}}$ are trainable variables, initialized independently according to a standard normal distribution, $W^l_{i,j} \sim \mathcal{N}(0,1)$. The width of $f$ is denoted by $n := \min(n_1,\dots,n_{L-1})$. The parameters $w$ are aggregated as a long vector $w = (vec(W^1),\dots,vec(W^L))$. The coefficients $\sqrt{1/n_{l-1}}$ serve for normalizing the activations of each layer. This parametrization is nonstandard, and we will refer to it as the NTK parameterization. It has already been employed in several recent works~\cite{karras2018progressive,DBLP:journals/corr/Laarhoven17b,48223}. For simplicity, in many cases, we will omit to specify the weights $w$ associated with our model.

\noindent{\bf Hypernetworks \quad} Given the input tuple $u = (x,z) \in \mathbb{R}^{n_0 + m_0}$, {we consider models of the form: $h(u;w) := g(z;f(x;w))$, where $f(x;w)$  and $g(z;v)$ are two neural network architectures with depth $L$ and $H$ respectively. The function $f(x;w)$ referred to as {\em hypernetwork}, takes the input $x$ and computes the weights $v = f(x;w)$ of a second neural network $g(z;v)$, referred as the {\em primary network}, which is assumed to output a scalar.} As before, the variable $w$ stands for a vector of trainable parameters ($v$ is not trained directly and is given by $f$).

We parameterize the primary network $g(z;v) = g^{H}(z;v) $ as follows:
\begin{equation}
\begin{cases}  
g^l(z;v) = \sqrt{\frac{1}{m_{l-1}}} V^l \cdot a^{l-1}(z;v) \\
a^{l}(z;v) = \sqrt{2} \cdot \phi(g^l(z;v))
\end{cases} 
\textnormal{ and } a^0(z) = z
\end{equation}
Here, the weights of the primary network $V^l(x) \in \mathbb{R}^{m_l \times m_{l-1}}$ are given in a concatenated vector form by the output of the hypernetwork $f(x;w) =  v = (vec(V^1),\dots,vec(V^{H}))$. The output dimension of the hypernetwork $f$ is therefore $n_{L} = \sum^{H}_{i=1} m_{i} \cdot m_{i-1}$. We denote by $f^d(x;w) := V^d(x;w) := V^d$ the $d$'th output matrix of $f(x;w)$. The width of $g$ is denoted by $m := \min(m_1,\dots,m_{H-1})$. The function $\phi$ is an element-wise continuously differentiable function or a piece-wise linear function with a finite number of pieces.

\noindent{\bf Optimization \quad} Let $S = \{(u_i,y_i)\}^{N}_{i=1}$, where $u_i = (x_i,z_i)$ be some dataset and let $\ell(a,b) := |a-b|^p$ be the $\ell^p$-loss function. For a given hypernetwork $h(u;w)$, we are interested in selecting the parameters $w$ that minimize the empirical risk:
\begin{equation}
c(w) %:= \sum^{N}_{i=1} \vert \ell_i\vert^p 
%:= \ell(h(U;w),Y) 
:= \sum^{N}_{i=1} \ell(h(u_i;w), y_i)
\end{equation}
%where $\ell_i := h(u_i;w)-y_i$. 
%where $U = \{u_i\}^{N}_{i=1}$ and $Y=\{y_i\}^{N}_{i=1}$.
For simplicity, oftentimes we will simply write $\ell_{i}(a) := \ell(a,y_i)$ and $h_i(w) := h(u_i) := h(u_i;w)$, depending on the context. In order to minimize the empirical error $c(w)$, we consider the SGD method with learning rate $\mu > 0$ and step of the form $w_{t+1} \leftarrow w_{t} - \mu \nabla_w \ell_{j_t}(h_{j_t}(w_t))$ {for some index $j_t \sim U[N]$ that is selected uniformly at random for the $t$'th iteration}. A continuous version of the GD method is the gradient flow method, in which $\dot{w} = -\mu \nabla_{w} c(w)$. In recent works~\cite{karras2018progressive,NIPS2019_9063,conf/nips/AroraDH0SW19,DBLP:journals/corr/Laarhoven17b}, the optimization dynamics of the gradient method for standard fully-connected neural networks was analyzed, as the network width tends to infinity. In our work, since hypernetworks consist of two interacting neural networks, there are multiple ways in which the size can tend to infinity. We consider two cases: (i) the width of $f$ tends to infinity and that of $g$ is fixed and (ii) the width of both $f$ and $g$ tend to infinity. 

% \noindent{\bf Terminology and Notations\quad }
% % \greg{which notations are needed in the main text, and which are needed for the appendix only?}
% Throughout the appendix, we denote by $A \otimes B$ and $A \odot B$ the outer and Hadamard products of the tensors $A$ and $B$ (resp.). When considering the outer products of a sequence of tensors $\{A_i\}^{k}_{i=1}$, we denote, $\bigotimes^{k}_{i=1} A_i = A_1 \otimes \dots \otimes A_k$. We denote by $\sgn(x) := x/\vert x\vert$ the sign function. The notation $X_n \sim a_n$ states that $X_n/a_n$ converges in distribution to some non-zero random variable $X$. %bounded, i.e., for any $\epsilon>0$, there exists $M>0$ and $N \in \mathbb{N}$, such that, for all $n \in \mathbb{N}$, we have: $\mathbb{P}[|X_n/a_n|>M] < \epsilon$. 
% A convenient property of this notation is that it satisfies: $X_n \cdot Y_n \sim a_n \cdot b_n$ when $X_n\sim a_n$ and $Y_n \sim b_n$.
% %$\mathcal{O}_p(a_n)\cdot \mathcal{O}_p(b_n) = \mathcal{O}_p(a_n\cdot b_n)$ and $\mathcal{O}_p(a_n) + \mathcal{O}_p(b_n) = \mathcal{O}_p(a_n + b_n)$. 
% Throughout the paper, we will make use of sequential limits and denote $n_k,\dots,n_1 \to \infty$ to express that $n_1$ tend to infinity, then $n_2$, and so on. For a given sequence of random variable $\{X_n\}^{\infty}_{n=1}$, we denote by $X_n\stackrel{d}{\longrightarrow} X$ ($X_n\stackrel{p}{\longrightarrow} X$), when $X_n$ converges in distribution (probability) to a random variable $X$.%For a given multivariate function $f:\mathbb{R}^n \to \mathbb{R}^m$, we denote by $f_i(x)$ the $i$'th coordinate of $f(x)$.

\section{Dynamics of Hypernetworks}\label{sec:init}

\paragraph{Infinitely wide $f$ without infinitely wide $g$ induces non-convex optimization}
% The optimization dynamics at the first few steps of gradient descent can be studied using the Taylor expansion of the cost function on in the trajectories of the SGD. 
In the NTK literature, it is common to adopt a functional view of the network evolution by analyzing the dynamics of the output of the network along with the cost, typically a convex function, as a function of that output. 
% Under gradient flow, the weights evolve in continuous time according to $\dot{w} = -\mu \nabla_w c(w)$.
% The evolution of the output $f \in \mathbb{R}^N = [f(x_1),...,f(x_N)]^{\top}$ of a network can be characterized by its NTK:
% \begin{equation}
% \dot{c}(w) = \frac{\partial c(w)}{\partial f}\cdot \mathcal{K}_f\cdot \frac{\partial^\top c(w)}{\partial f}
% \end{equation}
% where $\mathcal{K}^f := \nabla_w f \cdot \nabla^{\top}_w f$. 
In the hypernetwork case, this presents us with two possible viewpoints of the same optimization problem of $h(u)=g(z;f(x))$. On one hand, since only the hypernetwork $f$ contains the trainable parameters, we can view the optimization of $h$ under the loss $\ell$ as training of $f$ under the loss $\ell \circ g$.
The classical NTK theory would imply that $f$ evolves linearly when its width tends to infinity, but because $\ell \circ g$ is in general not convex anymore, even when $\ell$ originally is, an infinitely wide $f$ without an infinitely wide $g$ does not guarantee convergence to a global optimum.
% its output can be understood as the function space, while the entire implicit network $g$ can be viewed as a non-linear image of that space, essentially absorbing the implicit network into the cost, turning it highly non-convex. In that case, an infinitely wide meta network does not guarantee convergence to a global optimum .
In what follows, we make this point precise by characterising how nonlinear the dynamics becomes in terms of the depth of $g$.

% Therefore, in this work, we 
% We therefore view the output of the implicit network as the function space, leaving us with a convex loss operating on that space. \\
% Opposed to the functional view of the hypernetwork dynamics, parameter dynamics is invariant to the chosen viewpoint, and can be directly analyzed through the Taylor expansion of the cost function at initialization.
% Taylor expansion of the cost function for three different models is analyzed: (i) standard neural networks, (ii) neural networks with a gating operation and (ii) hypernetworks. 
%Let $\mathcal{F} \in \mathbb{R}^N := [\mathcal{F}(u_1;w),\mathcal{F}(u_2;w)...\mathcal{F}(u_N;w)]^\top$ (e.g., a neural network or a hypernetwork) be a function with parameters initialized to be $w$. 
After a single stochastic gradient descent step with learning rate $\mu$, the hypernetwork output for example $i$ is given by $h_i\big(w - \mu \nabla_w \ell_j\big)$. When computing the Taylor approximation around $w$ with respect to the function $h$ at the point $w' = w - \mu \nabla_w \ell_j$, it holds that:
% \begin{equation}\label{eq:Taylor}
% c\big(w - \mu\nabla_w c(w)\big)
% =\sum_{r=0}^\infty \frac{(-\mu)^r}{r!}\cdot \left\langle \nabla^{(r)}_w c(w), \nabla_w c(w)^r \right\rangle
% \end{equation}
% \greg{Can we phrase things in terms of expanding $f$ instead of $c$, because expanding a nonlinear cost will usually not get you a linear Taylor expansion even for the NTK case.
% So something like this
\begin{equation}
\begin{aligned}
h_i\big(w - \mu \nabla_w \ell_j\big) 
= \sum_{r=0}^\infty \frac{1}{r!}\langle \nabla^{(r)}_w h_i, (-\mu \nabla_w \ell_j)^{r} \rangle  = \sum_{r=0}^\infty \frac{1}{r!}\left(-\mu\frac{\partial \ell_j}{\partial h_j}\right)^r \cdot \mathcal K^{(r)}_{i,j}
\end{aligned}
\label{eq:taylorexpansion}
\end{equation}
where $\mathcal{K}^{(r)}_{i,j} := \langle \nabla^{(r)}_w h_i, (\nabla_w h_j)^{r} \rangle$,
% where
% \begin{equation}
%     .
% \end{equation}
% Let $\mathcal{F}(u) := \mathcal{F}(u;w)$ (e.g., a neural network or a hypernetwork) be a function with parameters initialized to be $w$. After a single stochastic gradient descent step with learning rate $\mu$, the cost is given by:
% \begin{equation}
% \begin{aligned}
% c\big(w - \mu\nabla_w \ell_{i_t}(\mathcal{F}(u_{i_t}))\big) %:= \sum^{N}_{i=1} |\hat{\ell}_i| 
% := \sum^{N}_{i=1} \ell_i(\mathcal{F}(u_i;w - \mu\nabla_w \ell_{i_t}(\mathcal{F}(u_{i_t})))) \\
% \end{aligned}
% \end{equation}
% %where $\hat{\ell}_i := \mathcal{F}(u_i;w-\mu \nabla_w c(w)) - y_i$. 
% Using Taylor approximation at point $w$ with respect to the function $c(w)$, it holds that:
% \begin{equation}\label{eq:Taylor}
% c\big(w - \mu\nabla_w \ell_{i_t}(\mathcal{F}(u_{i_t})) \big) - c(w)
% =\sum_{r=1}^\infty \sum^{N}_{i=1} \frac{(-\mu)^r}{r!}\cdot \left\langle \nabla^{(r)}_w \ell_i(\mathcal{F}(u_i)), (\nabla_w \ell_{i_t}(\mathcal{F}(u_{i_t})))^r \right\rangle
% \end{equation}
% For $\ell = \ell^1$, we have:
% \begin{equation}
% \nabla^{(r)}_w \ell(h(u_i;w);y_i) = \sgn(\ell_i) \cdot  \nabla^{(r)}_w \mathcal{F}(u_i;w)
% \end{equation}
% and for $\ell = \ell^2$, we have:
% \begin{equation}
% \nabla^{(r)}_w \ell(h(u_i;w);y_i) = \nabla^{(r)}_w \mathcal{F}(u_i;w)^2 -2y \nabla^{(r)}_w \mathcal{F}(u_i;w)    
% \end{equation}
and $\nabla^{(r)}_w h_i$ is the $r$ tensor that holds the $r$'th derivative of the output $h_i$. {The terms $\langle \nabla^{(r)}_w h_i, (-\mu \nabla_w \ell_j)^{r} \rangle$ are the multivariate extensions of the Taylor expansion terms $h^{(r)}_i(w) (w'-w)^r$, and take the general form of correlation functions as introduced in Eq.~5 in the appendix. This equation holds for neural networks with smooth activation functions (including hypernetworks), and holds in approximation for piece-wise linear activation functions.}
% with $w' = w-\mu (\partial \ell_j/\partial w)$.}
% \greg{In the future, try to get rid of the $i,j$ indices here, because the point you wnat to make here is relatively independent of which sample or which output you are talking about.}

Previous works have shown that, if $h$ is a wide fully connected network, the first order term ($r=1$) converges to the NTK, while higher order terms ($r>1$) scale like $\mathcal{O}(1/\sqrt{n})$~\cite{NIPS2019_9063,Dyer2020Asymptotics}.
Hence, for large widths and small learning rates, these higher order terms vanish, and the loss surface appears deterministic and linear at initialization, and remains so during training.

However, the situation is more complex for hypernetworks.
As shown in the following theorem, {for infinitely wide hypernetworks and finite primary network}, the behaviour depends on the depth and width of the generated primary network. Specifically, when the primary network is deep and narrow, the higher order terms in Eq.~\ref{eq:taylorexpansion} may not vanish, and parameter dynamics can be highly non-convex.

\begin{restatable}[Higher order terms for hypernetworks]{theorem}{inithypernet}\label{thm:hyper}
Let $h(u) = g(z;f(x))$ for a hypernetwork $f$ and an primary network $g$. Then, we have:
% Let $S = \{(u_i,y_i)\}^{N}_{i=1}$ be a dataset of labeled samples, such that, $u_i = (x_i,z_i)$. %Let $\ell = \ell^1$ be the loss function. Consider a cost function of the form $c(w) := \sum^{N}_{i=1} \ell_i(h(u_i))$. 
\begin{equation}
\mathcal{K}^{(r)}_{i,j}  \sim \begin{cases}
     n^{H-r} & \text{if $r > H$}\\
     1 & \text{otherwise.}
\end{cases}
\end{equation}
% \begin{equation}
% \mathcal{K}^{(r)}_{i,j} = \Omega_p\left(1/n^{r-H}\right) 
% % \left\langle \nabla^{(r)}_w c(w), \nabla_w c(w)^r \right\rangle = \mathcal{O}_p\left(1/n^{r-H}\right)
% \end{equation}
% For $r=H=1$, $\mathcal{K}^{(r)}_{i,j}$ is a constant. Otherwise, for any $r\leq H$ it converges to a non-constant random variable.
% \greg{You really want to say $\Omega$ instead of $O$ right? Do you have a proof with $\Omega$ replacing $O$?} \etai{yes}
\end{restatable}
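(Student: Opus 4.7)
The plan is to expand $\mathcal{K}^{(r)}_{i,j}$ by a multivariate chain rule, reduce everything to correlation functions of the hypernetwork $f$ alone, and then apply the known $n$-scalings for such correlations in wide MLPs. First, writing $v := f(x_i;w)$ and applying Fa\`a di Bruno's formula to the composition $h_i(w) = g(z_i;v)$ gives
\begin{equation*}
\nabla^{(r)}_w h_i \;=\; \sum_{\pi \in \mathcal{P}([r])} \nabla^{(|\pi|)}_v g(z_i;v) \,\cdot\, \bigotimes_{B \in \pi} \nabla^{(|B|)}_w f(x_i;w),
\end{equation*}
where $\mathcal{P}([r])$ denotes the set of set-partitions of $\{1,\dots,r\}$ and the products involve the appropriate contractions over output indices of $f$. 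Substituting this together with the analogous first-order expansion of each $\nabla_w h_j$ into $\mathcal{K}^{(r)}_{i,j} = \langle \nabla^{(r)}_w h_i, (\nabla_w h_j)^{\otimes r}\rangle$ rewrites it as a double sum---over a partition $\pi$ and contraction indices---whose summand factors cleanly into a piece depending only on $g$ and a mixed correlation of $f$ and its higher $w$-derivatives.

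The crucial structural observation is that, on every region of $v$-space with fixed activation pattern, $g(z;V^1,\dots,V^H)$ is linear separately in each $V^l$. Hence any $v$-derivative that differentiates twice with respect to entries of a single layer $V^l$ vanishes, and in particular $\nabla^{(k)}_v g \equiv 0$ for $k > H$. The Fa\`a di Bruno sum thus truncates to partitions $\pi$ with $|\pi| \le H$, so whenever $r > H$ at least one block of any contributing $\pi$ must have size $\ge 2$. This is the single combinatorial source of the break at $r = H$.

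For the scaling itself, the factor $\nabla^{(|\pi|)}_v g(z_i;v)$ depends on $n$ only through $v = f(x_i;w)$, whose entries are $\Theta(1)$ at initialization under NTK parameterization; hence the $g$-factors are $O(1)$ in $n$. All $n$-dependence therefore lives in a correlation function
\begin{equation*}
\Bigl\langle \bigotimes_{B\in\pi} \nabla^{(|B|)}_w f_{\alpha_B}\,,\; \bigotimes_{k=1}^{r} \nabla_w f_{\beta_k}\Bigr\rangle,
\end{equation*}
contracted over $w$-indices in the pattern dictated by $\mathcal{K}^{(r)}_{i,j}$. Invoking the scaling results for correlation functions of standard wide MLPs (Dyer--Gur-Ari, together with the tight ReLU Taylor-bound contribution advertised in the paper), each such correlation scales as $n^{|\pi|-r}$: one power of $1/n$ is paid for each higher-order block beyond what a single NTK-style chain of contractions can absorb. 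Maximising over admissible partitions with $|\pi|\le\min(r,H)$ gives $n^{0}$ when $r \le H$ and $n^{H-r}$ when $r > H$, matching the claim.

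The main obstacle is justifying the precise $n^{|\pi|-r}$ scaling for the mixed $f$-correlation. For smooth activations this follows from standard Feynman-diagram / tensor-program arguments (essentially the Dyer--Gur-Ari conjecture in the relevant diagrammatic class). For ReLU and other piecewise-linear $\sigma$ the higher $w$-derivatives are distributional and Fa\`a di Bruno only holds a.e., so making the bound rigorous is exactly what the paper's auxiliary tight Taylor-term result is for, likely combined with a smoothing-then-limit argument in $\sigma$.
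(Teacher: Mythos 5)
Your reduction is essentially the paper's: their Lemma on the expansion of $\nabla^{(r)}_w h_i$ (a multinomial/Leibniz expansion over the layer outputs $f^1,\dots,f^H$ of the hypernetwork, valid because the primary network is multilinear in its weight matrices on each activation region) is exactly the concrete form of your Fa\`a di Bruno sum over partitions with at most $H$ blocks, and both routes then reduce $\mathcal{K}^{(r)}_{i,j}$ to correlation functions of $f$ contracted blockwise against gradients of $f$, followed by the same power counting $\sum_{B}(|B|-1)=r-|\pi|$ and the optimization over $|\pi|\le\min(r,H)$. So the combinatorial skeleton is sound and matches the paper.

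The genuine gap is the step you defer: the tight scaling $\bigl\langle \nabla^{(k)}_w f^{d},\,\bigotimes_{t=1}^{k}\nabla_w f^{d_t}\bigr\rangle \sim n^{-(k-1)}$ cannot be "invoked" from Dyer--Gur-Ari, and citing "the tight ReLU Taylor-bound contribution advertised in the paper" is circular, because that tight bound \emph{is} the $H=1$ specialization of the very theorem being proved. Dyer--Gur-Ari supply only a conjectured $\mathcal{O}(1/n)$ upper bound (proved only for deep linear or shallow ReLU networks), which is neither tight in $k$ nor two-sided, so it cannot yield the asymptotic equivalence $\mathcal{K}^{(r)}_{i,j}\sim n^{H-r}$ (which requires a nonzero limiting random variable after rescaling). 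The paper's actual work is precisely this missing lemma: an explicit tensor formula for the higher $w$-derivatives of a ReLU MLP, factorization of the blockwise contraction into forward products $q^{\top}q/n$ and backward products $PP^{\top}$ whose limits are given by the GP/NTK recursions, a Tensor-Program/CLT argument showing the residual factors converge jointly to nondegenerate Gaussians, and Mann--Wald plus Slutsky to pass to the product --- crucially also establishing that the limit carries a factor $\delta_{d_1=\dots=d_k=d}$, i.e.\ blocks with mismatched output indices of $f$ are subleading, which is what guarantees the surviving leading terms are the claimed ones and that the rescaled limit is nonzero. Without proving this lemma (and without the index-matching structure), your argument establishes at best a heuristic upper bound conditional on an open conjecture, not the stated two-sided asymptotics; the piecewise-linearity issue you flag at the end is comparatively minor, since the paper works directly with the a.e.\ explicit derivative formulas rather than a smoothing limit.
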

Thm.~\ref{thm:hyper} illustrates the effect of the depth of the primary network $g$ on the evolution of the output $h$. The larger $H$ is, the more non-linear the evolution is, even when $f$ is infinitely wide. Indeed, we observe empirically that when $f$ is wide and kept fixed, a deeper $g$ incurs slower training, and lower overall test performance as illustrated in Fig.~\ref{fig:var}. 
%\greg{fig 4 now doesn't have depth plots. We don't have time for that.. we will do it in the supplementary or something}

As a special case of this theorem, when taking $H=1$, we can also derive the asymptotic behaviour of $\mathcal{K}^{(r)}_{i,j} \sim n^{1-r}$ for a neural network $h$. This provides a tighter bound than the previously conjectured $\mathcal{O}(1/n)$ upper bound~\cite{Dyer2020Asymptotics}. The following remark is a consequence of this result and is validated in the supplementary material.

\begin{remark}
The $r$'th order term of the Taylor expansion in Eq.~\ref{eq:taylorexpansion} is of order $\mathcal{O}(\frac{\mu^{r}}{r! \cdot n^{r-1}})$ instead of the previously postulated $\mathcal{O}(\frac{\mu^{r}}{r! \cdot n})$. Therefore, it is evident that for any choice $\mu = o(\sqrt{n})$, all of the high order terms tend to zero as $n \to \infty$. This is opposed the previous bound, which guarantees that all of the high order terms tend to zero as $n \to \infty$ only when $\mu$ is constant. 
\end{remark}

\section{Dually Infinite Hypernetworks}\label{sec:infinite}

{It has been shown by~\cite{jacotNTK,lee2018deep} that NNGPs and neural tangent kernels fully characterise the training dynamics of infinitely wide networks. As a result, in various publications~\cite{littwin2020random,Hanin2020Finite}, these kernels are being treated as functional priors of neural networks.} In the previous section, we have shown that the Taylor expansion of the hypernetwork is non-linear when the size of the primary network is finite. In this section, we consider the case when \emph{both} hyper and primary networks are infinitely wide, with the intention of gaining insight into the functional prior of wide hypernetworks. For this purpose, we draw a formal correspondence between infinitely wide hypernetworks and GPs, and use this connection to derive the corresponding neural tangent kernel. 
% {\color{red}As we show, both the NNGP and NTK kernels elegantly decompose into the kernels corresponding to the hypernetwork and the implicit network. This is especially appealing when considering image representation and shape reconstruction tasks because of their conditional nature.}

% {\color{red}TODO: FIX -->} We then conduct experiments on various real world \etai{implicit image representation} tasks using our derived kernel, and demonstrate their surprising performance in low data regime when compared to other methods, including a trained hypernetwork. We note that this is in line with previous observations on the performance of neural tangent kernels in this regime~\cite{ntk_uci}.

% competitive results.
% \greg{``demonstrate competitive results'' is kinda weak statement. We want to sharpen the message regarding experiments. What is surprising about our results?}

\subsection{The NNGP kernel}

Previous work have shown the equivalence between popular architectures, and Gaussian processes, when the width of the architecture tends to infinity. This equivalence has sparked renewed interest in kernel methods, through the corresponding NNGP kernel, and the Neural Tangent Kernel (NTK) induced by the architecture, which fully characterise the training dynamics of infinitely wide networks. This equivalence has recently been unified to encompass most architectures which use a pre-defined set of generic computational blocks \cite{Yang2019ScalingLO,Yang2019TensorPI}. Hypernetworks represent a different class of neural networks where the parameters contain randomly initialized matrices except the last layer whose parameters are aggregated as a rank 3 tensor. All of the matrices/tensors dimensions tend to infinity.
This means the results of \cite{Yang2019ScalingLO,Yang2019TensorPI} do not apply to hypernetworks.
% weights of the implicit network are not sampled independently from a normal distribution, and current results on GP behaviour do no immediately apply. 
Nevertheless, by considering sequential limit taking, where we take the limit of the width of $f$ ahead of the width of $g$, we show the output of $f$ achieves a GP behaviour, essentially feeding $g$ with Gaussian distributed weights with adaptive variances.
A formal argument is presented in the following theorem.
% \greg{Minor note on notation: I prefer convergence statements $x \xrightarrow{d} y$ most of the time to $\lim x = y$ statements, because the right mathematical claim is that there is a ``convergence in distribution'' and therefore there is a limit of this type, whereas $\lim x$ already presumes that the limit exists, and this notation doesn't tell you what is the type of limit that exists. $\lim x \stackrel{d}{\longrightarrow} y$ also makes it look like $\lim x$ is equal to $y$ in distribution, but that doesn't tell you that $\lim x$ converges in distribution in the first place. Finally, here you should clarify somewhere that convergence is in finite dimensional distribution (which is the usual notion of convergence of a infinite-dimensional GP, but it's good to be clear).}

\begin{figure}[ht]
    \centering
    \begin{tabular}{cc}
    \includegraphics[width=.40\linewidth]{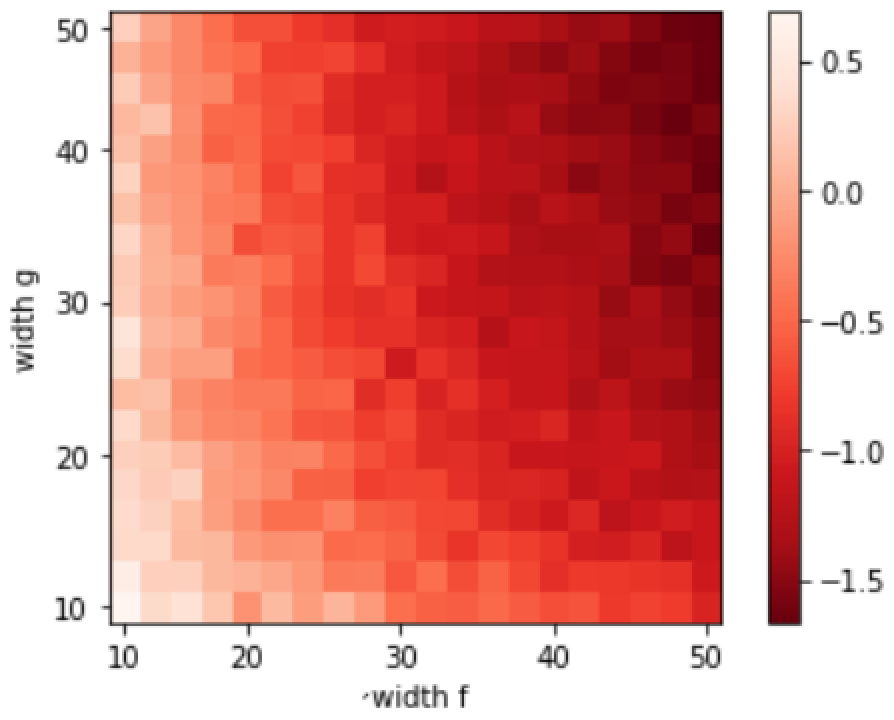}&
    \includegraphics[width=.40\linewidth]{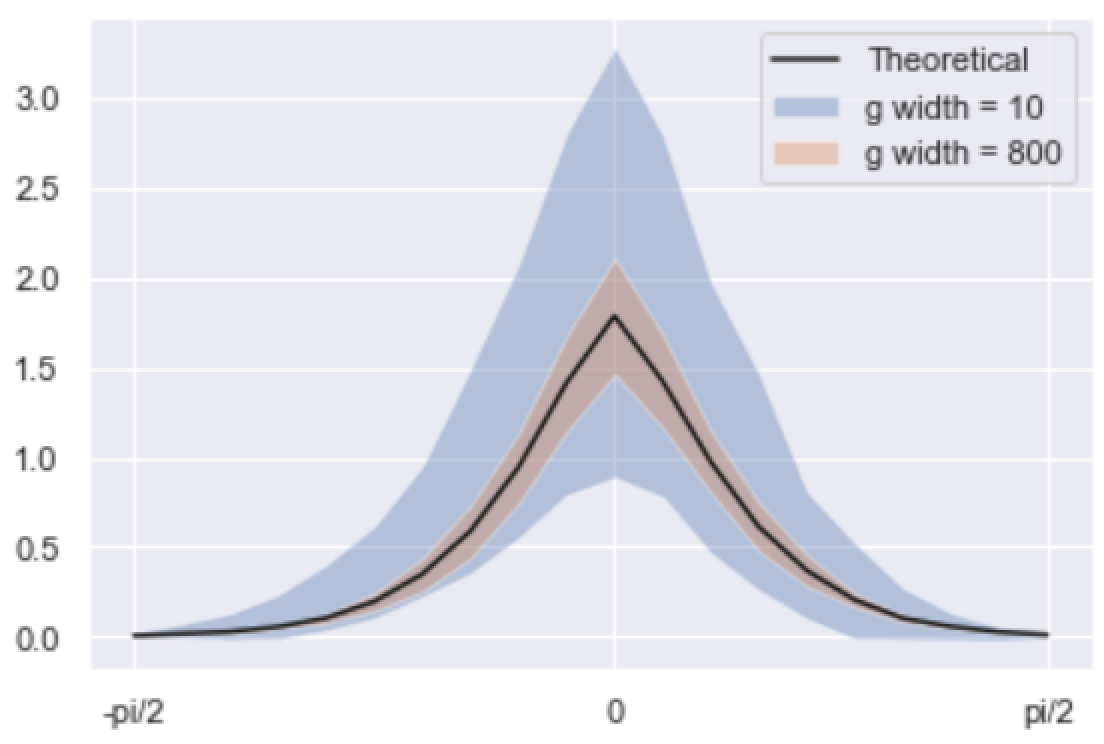}\\
    (a)&(b)
    \\
    \end{tabular}
    \caption{{\bf Convergence to the hyperkernel.}
    \textbf{(a)} Empirical variance of kernel values in log-scale for a single entry for varying width $f$ and $g$. Variance of the kernel converges to zero only when the widths of $f$ and $g$ both increase.
    \textbf{(b)} Empirical kernel value for $z = (1,0), z' = (\cos(\theta),\sin(\theta))$, and $x = x' =  (1,0)$ for different values of $\theta \in [-\frac{\pi}{2},\frac{\pi}{2}]$. Convergence to a deterministic kernel is observed only when both $f$ and $g$ are wide.} 
    \label{fig:kernel}
\end{figure}

\begin{restatable}[Hypernetworks as GPs]{theorem}{GP}\label{thm:gp}
Let $h(u) = g(z;f(x))$ be a hypernetwork. For any pair of inputs $u = (x,z)$ and $u' = (x',z')$, let  $\Sigma^0(z,z') = \frac{z^\top z'}{m_0}, S^0(x,x') = \frac{x^\top x'}{n_0}$. Then, it holds for any unit $i$ in layer $0<l\leq H$ of the primary network:
\begin{equation}
g^l_i(z;f(x)) \stackrel{d}{\longrightarrow} \mathcal{G}_i^l(u)
\end{equation}
as $m,n \to \infty$ sequentially. Here, $\{\mathcal{G}_i^l(u)\}_{i=1}^{m_l}$ are independent Gaussian processes, such that, $(\mathcal{G}_i^l(u), \mathcal{G}_i^l(u')) \sim \mathcal{N}\big(0,\Lambda^l(u,u')\big)$ defined by the following recursion:
\begin{align}\label{mf}
\Lambda^{l+1}(u,u') = 
\begin{pmatrix}
\Sigma^{l}(u,u) & 
\Sigma^{l}(u',u)  \\
\Sigma^{l}(u,u') & 
\Sigma^{l}(u',u')  
\end{pmatrix}\bigodot \begin{pmatrix}
S^L(x,x)& 
S^L(x',x) \\
S^L(x,x')& 
S^L(x',x')  
\end{pmatrix} 
\end{align}
\begin{equation}\label{s}
\Sigma^{l}(u,u') =  2\E_{(u,v)\sim \mathcal{N}(0,\Lambda^l)}[\sigma(u)\cdot \sigma(v)]
\end{equation}
where $S^L(x,x')$ is defined recursively:
\begin{align}
S^{l}(x,x') = 2\E_{(u,v)\sim \mathcal{N}(0,\Gamma^l)}[\sigma(u)\cdot \sigma(v)] ~\textnormal{ and }~
\Gamma^l(x,x') = \begin{pmatrix}
S^{l}(x,x) & 
S^{l}(x',x)  \\
S^{l}(x,x') & 
S^{l}(x',x')  
\end{pmatrix}
\end{align}
\end{restatable}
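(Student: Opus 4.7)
The plan is to realize the sequential limit $m,n\to\infty$ as first taking $n\to\infty$, which reduces the hypernetwork to a Gaussian ensemble on the primary weights, and then taking $m\to\infty$, at which point the primary network becomes a Gaussian process by the classical NNGP argument adapted to $x$-dependent Gaussian weights. In the first stage, the hypernetwork $f$ is a standard fully-connected MLP in the NTK parameterization, so the usual NNGP correspondence applies: each scalar output coordinate $f_k(x;w)$ converges to a centered Gaussian process in $x$ with covariance $S^L(x,x')$ defined by the $S$-recursion in the theorem, and distinct output coordinates become mutually independent in the limit because the rows of the last weight matrix of $f$ are i.i.d.\ Interpreting the $n_L=\sum_d m_d m_{d-1}$ outputs as the reshaped primary weights $V^1(x),\dots,V^H(x)$, every entry $V^d_{i,j}(x)$ is therefore a centered Gaussian process in $x$ with kernel $S^L$, mutually independent across $(d,i,j)$; in particular $V^{l+1}$ is independent of $V^{\le l}$.

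With this Gaussian law on the primary weights, I would induct on $l$ to show $g^l_i(z;f(x)) \stackrel{d}{\longrightarrow} \mathcal G^l_i(u)$ with covariance $\Lambda^l$. The base case $l=1$ is direct: $g^1_i(z;f(x)) = \frac{1}{\sqrt{m_0}} \sum_j V^1_{i,j}(x)\, z_j$ is an exact (not merely asymptotic) linear combination of i.i.d.\ Gaussian processes in $x$, hence Gaussian, with covariance $\Sigma^0(z,z') \cdot S^L(x,x')$, matching $\Lambda^1(u,u')$ after reading off the Hadamard structure. For the inductive step, write
\begin{equation*}
g^{l+1}_i(z;f(x)) \;=\; \tfrac{\sqrt 2}{\sqrt{m_l}}\sum_{j=1}^{m_l} V^{l+1}_{i,j}(x)\,\phi\!\bigl(g^l_j(z;f(x))\bigr).
\end{equation*}
Because $V^{l+1}$ is independent of $g^l$ by Stage 1, the sum is conditionally Gaussian in $x$ given $\{g^l_j\}_j$, with conditional covariance at $(u,u')$ equal to $\tfrac{2}{m_l}\,S^L(x,x')\sum_j \phi(g^l_j(z;f(x)))\,\phi(g^l_j(z';f(x')))$. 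By the inductive hypothesis the units $g^l_j$ are asymptotically i.i.d.\ Gaussian processes with covariance $\Lambda^l$, so the empirical average concentrates on $S^L(x,x')\cdot 2\,\E_{(a,b)\sim\mathcal N(0,\Lambda^l)}[\phi(a)\phi(b)] = S^L(x,x')\cdot \Sigma^l(u,u')$, which is $\Lambda^{l+1}(u,u')$. Independence across units $i$ follows because the rows of $V^{l+1}$ are independent.

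The main obstacle is justifying the inductive CLT at the level of joint finite-dimensional distributions over arbitrary finite input sets $\{u_k\}$, rather than just for a single pair $(u,u')$. One must combine a CLT for sums of conditionally independent random vectors indexed by $j$ with the concentration of an empirical covariance, and then push these limits through the possibly non-smooth activation $\phi$; this is where the assumption of piecewise linearity with finitely many pieces (or continuous differentiability) supplies the integrability and uniform continuity needed to pass the limit through the Gaussian expectation defining $\Sigma^l$. The twist compared to the classical NNGP proof is that each weight $V^{l+1}_{i,j}(x)$ is itself a Gaussian \emph{process} in $x$ rather than a scalar Gaussian, so the CLT must be applied jointly over the finitely many $x$-evaluations of interest; the independence of the Gaussian processes $V^{l+1}_{i,j}(\cdot)$ across $(i,j)$, established in Stage 1, is exactly what makes this multivariate/function-valued CLT go through.
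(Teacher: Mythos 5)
Your proposal is correct and follows essentially the same route as the paper's proof: first send $n\to\infty$ so that the entries $V^d_{i,j}(x)$ become i.i.d.\ centered Gaussian processes with kernel $S^L$ (the paper invokes the Tensor Programs/NNGP result for $f$), then induct over the layers of $g$ as $m\to\infty$ exactly as in the standard NNGP argument, with the exact Gaussian base case at $l=1$ and the conditional-Gaussianity plus empirical-covariance concentration step giving $\Lambda^{l+1} = S^L \odot \Sigma^l$. If anything, your write-up makes explicit the conditional CLT and finite-dimensional-distribution details that the paper compresses into ``in a similar fashion to the standard feed forward case.''
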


In other words, the NNGP kernel, governing the behaviour of wide untrained hypernetworks, is given by the Hadamard product of the GP kernels of $f$ and $g$ (see Eq.~\ref{mf}). 

As a consequence of the above theorem, we observe that the NNGP kernel of $h$ at each layer, $\Lambda^l(u,u')$, is simply a function of $\Sigma^0(z,z'), S^0(x,x')$.

\begin{restatable}{corollary}{compGP}
Let $h(u) = g(z;f(x))$ be a hypernetwork.
% , such that the activation in the first layer of $g$ are $\cos$ and $g$ has additional biases $b^l_i \sim U[-\pi,\pi]$ in each layer. 
For any $0< l\leq H$, there exists a function $\mathcal{F}^l$, such that, for all pairs of inputs $u = (x,z)$ and $u' = (x',z')$, it holds that:
\begin{equation}
\Lambda^H(u,u') = \mathcal{F}\left(\Sigma^0(z,z'),S^0(x,x')\right)
\end{equation}
\end{restatable}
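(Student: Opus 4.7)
The plan is to proceed by two nested inductions, exploiting the fact that the recursions in Theorem~\ref{thm:gp} produce each new kernel as a deterministic function of the preceding kernel evaluated only at the pair of inputs under consideration. The only general fact needed is that a Gaussian expectation of the form $2\E_{(u,v)\sim\mathcal{N}(0,K)}[\sigma(u)\sigma(v)]$ depends on $(u,v)$ only through the $2\times 2$ covariance matrix $K$, so passing through such an expectation cannot reintroduce a dependence on the raw inputs that was not already present in $K$.

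First, I would handle the hypernetwork side: show by induction on $l = 0, 1, \dots, L$ that the four scalars $S^l(x,x), S^l(x,x'), S^l(x',x), S^l(x',x')$ are each a deterministic function of the triple $(S^0(x,x), S^0(x,x'), S^0(x',x'))$. The base case is immediate. For the inductive step, $S^{l+1}$ is obtained from the Gaussian expectation above with $K = \Gamma^l$, and by hypothesis every entry of $\Gamma^l$ is already a function of that initial triple. Setting $l = L$ gives that every entry of the matrix appearing in the second factor of Eq.~\ref{mf} is a function of $S^0(x,x), S^0(x,x'), S^0(x',x')$.

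Second, I would carry out the analogous induction on the primary-network layer $l = 0, 1, \dots, H$ for the pair $(\Sigma^l, \Lambda^l)$. The inductive hypothesis is that each of the four entries of the $2\times 2$ matrix built from $\Sigma^l$ evaluated at $(u,u)$, $(u,u')$, $(u',u)$, $(u',u')$ is a deterministic function of $\Sigma^0(z,z), \Sigma^0(z,z'), \Sigma^0(z',z')$ together with the $S^0$ triple from step one. The base case is trivial because $\Sigma^0(z,z') = z^\top z'/m_0$ depends only on itself. For the step, $\Lambda^{l+1}(u,u')$ is by Eq.~\ref{mf} the Hadamard product of the current $\Sigma^l$ matrix and the already-handled $S^L$ matrix, so each of its entries inherits the required dependence; applying the Gaussian expectation of Eq.~\ref{s} entrywise then yields $\Sigma^{l+1}$ with the same property. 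Taking $l = H-1$ and reading off $\Lambda^H$ produces the desired function $\mathcal{F}$.

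The argument is essentially bookkeeping. The main subtlety, rather than a genuine obstacle, is that one must carry the diagonal entries $\Sigma^l(u,u), \Sigma^l(u',u'), S^l(x,x), S^l(x',x')$ along in the inductive hypothesis, since these feed into the covariance matrix at each subsequent Gaussian expectation even though they are suppressed in the statement of the corollary. Once the induction tracks the full $2\times 2$ kernel (not merely the off-diagonal entry) as the inductive object, composing the two inductions yields a single function $\mathcal{F}$ of $(\Sigma^0(z,z'), S^0(x,x'))$ — with the diagonal scalars absorbed into $\mathcal{F}$ — as required.
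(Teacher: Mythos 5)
Your proposal is correct and follows essentially the same route as the paper's own proof: an induction showing that the $S^l$ kernels are determined by $S^0(x,x')$, composed with an induction over the primary-network layers showing that $\Lambda^l$ (via Eq.~\ref{mf} and Eq.~\ref{s}) is determined by $\Sigma^0$ and $S^L$, hence by $\Sigma^0(z,z')$ and $S^0(x,x')$. Your explicit tracking of the diagonal entries of the $2\times 2$ covariance matrices is in fact slightly more careful than the paper's argument, which suppresses them.
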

 
The factorization of the NNGP kernel into a function of $\Sigma^0(z,z')$ and $S^0(x,x')$ provides a convenient way to explicitly encode useful invariances into the kernel.
% {\color{red} The decomposition of the GP kernel of $h$ into the GP kernels of $f$ and $g$ is especially appealing when considering implicit image representations tasks, due to their conditional nature. In these settings, the input $z$ is typically vector of coordinates. XX COMPLETE XX

{As an example, in the following remark, we investigate the behaviour of the NNGP kernel of $h$, when the inputs $z$ are preprocessed random random Fourier features as suggested by~\cite{NIPS2007_3182,tancik2020Fourier}. 

\begin{restatable}{remark}{periodicGP}\label{remark:perGP}
Let $p(z) = [\cos(W^1_i z + b^1_i)]^{k}_{i=1}$ be a Fourier features preprocessing, where $W^1_{i,j} \sim \mathcal{N}(0,1)$ and biases $b_{i} \sim U[-\pi,\pi]$. Let $h(u) = g(p(z);f(x))$ be a hypernetwork, with $z$ preprocessed according to $p$. Let $u = (x,z)$ and $u'=(x',z')$ be two pairs of inputs. Then,  $\Lambda^l(u,u')$ is a function of $\exp[-\|z-z'\|^2_2/2]$ and $S^L(x,x')$. 
\end{restatable}

The above remark shows that for any given inputs $x,x'$, the NNGP kernel depends on $z,z'$ only through the distance between $z$ and $z'$, which has been shown to be especially useful in implicit neural representation~\cite{tancik2020Fourier}.}

% \begin{restatable}[]{theorem}{periodicGP}
% Let $h(u) = g(z;f(x))$ be a hypernetwork, such that the activation in the first layer of $g$ are $\cos$ and $g$ has additional biases $b^l_i \sim U[-\pi,\pi]$ in each layer. Let $u = (x,z)$ and $u' = (x,z')$ be two inputs. Then, 
% \begin{equation}
% \Sigma^1(u,u') = \exp\left(-(S^L(x,x'))^2 \cdot \|z-z'\|^2_2 \right)
% \end{equation}
% \end{restatable}

We next derive the corresponding neural tangent kernel of hypernetworks, referred to as hyperkernels.

\begin{figure*}[ht]
    \centering
    \begin{tabular}{ccc}
        \includegraphics[width=.3\linewidth]{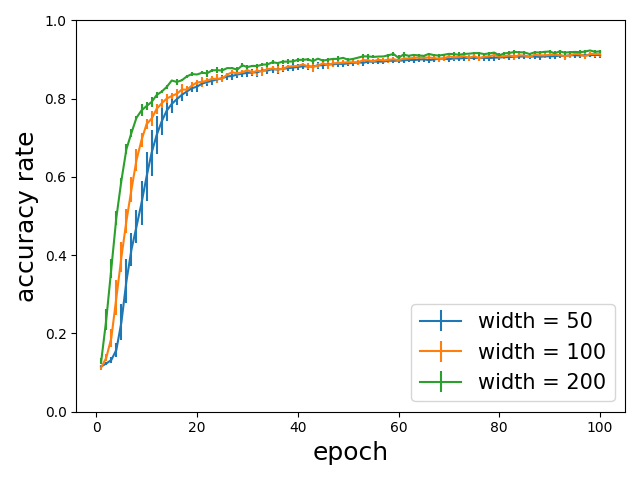} & 
    \includegraphics[width=.3\linewidth]{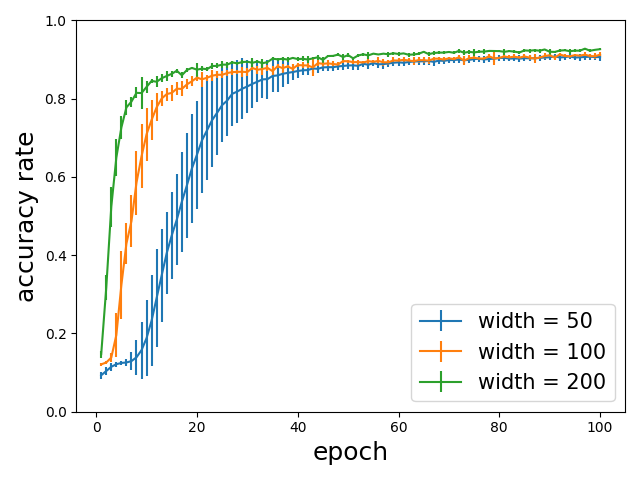} &
    \includegraphics[width=.3\linewidth]{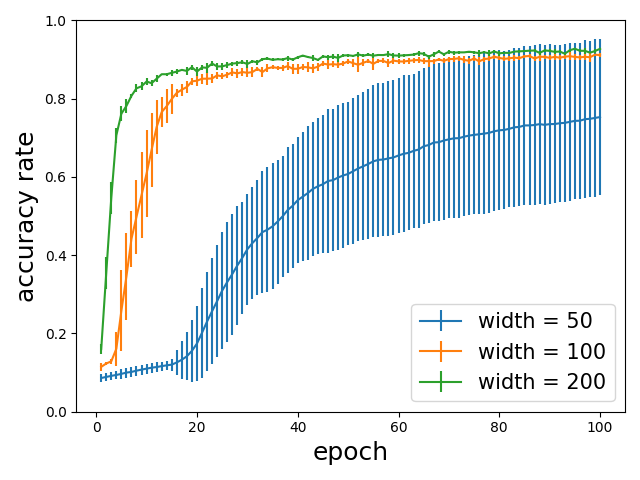} \\
   % (a) & (b) & (c)\\
    \includegraphics[width=.3\linewidth]{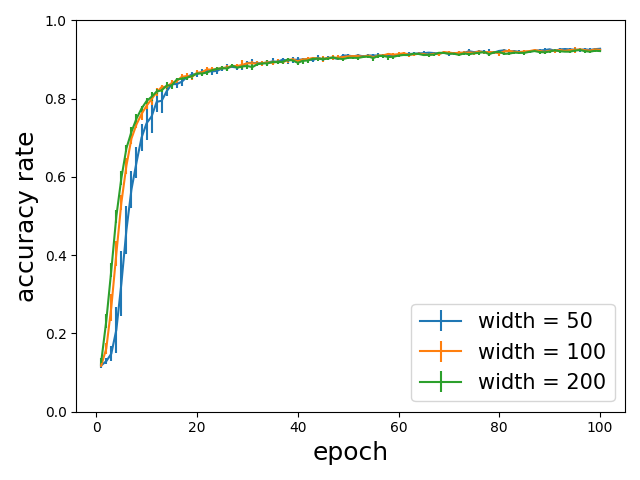} & 
    \includegraphics[width=.3\linewidth]{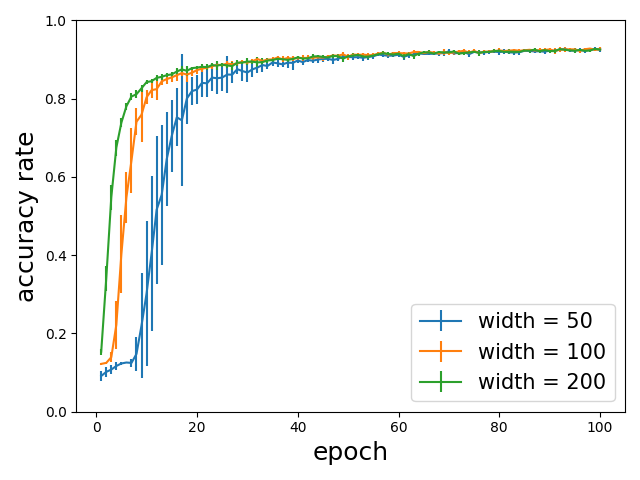} &
    \includegraphics[width=.3\linewidth]{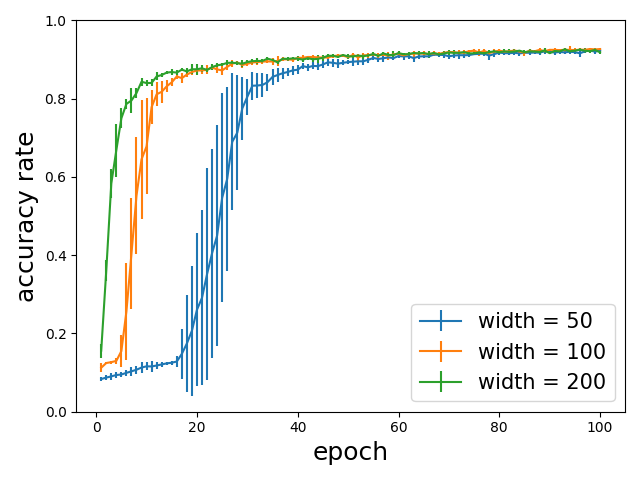}  \\
    (a) & (b) & (c)\\
    \end{tabular}
    \caption{
    \textbf{A hypernetwork with a wider and \emph{shallower} primary network $g$ trains faster and achieves better test performance} on the MNIST {\bf(top)} and CIFAR10 {\bf(bottom)} rotations prediction task.
    We fix the hypernetwork $f$ and the depth of $g$ at {\bf (a)} 3, {\bf (b)} 6 and {\bf (c)} 8, while varying the width $m$ of $g$. The x-axis specifies the epoch and the y-axis the accuracy at test time.  }
    \label{fig:var}
\end{figure*}

% \begin{figure*}[ht]
%     \centering
%     \begin{tabular}{ccc}
%     \includegraphics[width=.3\linewidth]{figures_kernels_var_epoch/nngp.png} & 
%     \includegraphics[width=.3\linewidth]{figures_kernels_var_epoch/ntk.png}
%      \\
%     (a) & (b)\\
%     \end{tabular}
%     \caption{
%     \textbf{Comparing the performance of NNGP and NTK kernel regression for $f$ and $g$ of varying number of layers.} }
%     \label{fig:var}
% \end{figure*}

% \begin{figure*}\label{fig:varfg}
%   \begin{minipage}[c]{0.60\textwidth}
%   \begin{tabular}{@{}c@{}c@{}}
%      \includegraphics[width=.5\linewidth]{figures_kernels_var_epoch/cifar_nngp.png} & \includegraphics[width=.5\linewidth]{figures_kernels_var_epoch/cifar_ntk.png}\\
%      (a)&(b)\\
%   \end{tabular}
%  \end{minipage}%
%  \hfill
%   \begin{minipage}[c]{0.39\textwidth}
%     \caption{{\color{red}\textbf{Comparing the performance of kernel regression with (a) NNGP and (b) NTK kernels for $f$ and $g$ of varying number of layers.} The x-axis specifies the depth of $f$ and the y-axis specifies the accuracy rate at test time. As can be seen, when fixing the depth of $g$, the performance is relatively constant for any depth of $f$. On the other hand, increasing the depth of $g$ is detrimental to the performance.}}
%   \end{minipage}
% \end{figure*}

\subsection{The Hyperkernel}\label{sec:hyperkernel}

Recall the definition of the NTK as the infinite width limit of the Jacobian inner product given by:
\begin{equation}
\begin{aligned}
\mathcal{K}^h(u,u') = \frac{\partial h(u)}{\partial w}\cdot \frac{\partial^{\top} h(u')}{\partial w} = \frac{\partial g(z;f(x))}{\partial f(x)}\cdot \mathcal{K}^f(x,x') \cdot \frac{\partial^{\top} g(z';f(x'))}{\partial f(x')}
\end{aligned}
\end{equation}
where $\mathcal{K}^f(x,x') := \frac{\partial f(x)}{\partial w}\cdot \frac{\partial^{\top} f(x')}{\partial w}$ and $\mathcal{K}^g(u,u') := \frac{\partial g(z;f(x))}{\partial f(x)}\cdot \frac{\partial^{\top} g(z';f(x'))}{\partial f(x')}$. In the following theorem we show that $\mathcal{K}^h(u,u')$ converges in probability at initialization to a limiting kernel in the sequentially infinite width limit of $f$ and $g$, denoted by $\Theta^h(u,u')$. 
% , and that $\mathcal{K}^h(u,u') \approx \Theta^h(u,u')$ for $t>0$. 
Furthermore, we show that the hyperkernel is decomposed to the Hadamard product between the kernels corresponding to $f$ and $g$. In addition, we show that the derivative of the hyperkernel with respect to time tends to zero at initialization.
% \TG{As a note, one could also prove converge in probability using a more delicate analysis.}
%\greg{What I meant is that you get convergence in probability for free just because the limit is deterministic. It's a trivial upgrade.} \etai{fixed}

\begin{restatable}[Hyperkernel decomposition and convergence at initialization]{theorem}{hyperkernel}\label{thm:hyperkernel}
Let $h(u;w) = g(z;f(x;w))$ be a hypernetwork. Then,
% \greg{This is convergence in distribution? FYI, since the limit is deterministic, this can be upgraded to convergence in probability, but you can just mention this as a footnote or something since it's not that important from an ML perspective.}
\begin{equation}
\mathcal{K}^h(u,u') \stackrel{p}{\longrightarrow} \Theta^h(u,u') 
\end{equation}
where:
\begin{equation}\label{eq:comp}
\Theta^h(u,u') = \Theta^f(x,x') \cdot \Theta^g(u,u',S^L(x,x'))
\end{equation}
such that:
\begin{equation}
\mathcal{K}^f(x,x') \stackrel{p}{\longrightarrow} \Theta^f(x,x')\cdot I \textnormal{ and } \mathcal{K}^g(u,u') \stackrel{p}{\longrightarrow} \Theta^g(u,u',S^L(x,x'))
\end{equation}
% \begin{align}
% \Theta^f(x,x') &=& \sum_{l=1}^{L+1}\Big(S^{l-1}(x,x')\prod_{l'=1}^{L+1}\dot{S}^{l'}(x,x')\Big)
% \\ \Theta^g(u,u') &=& \sum_{h=1}^{H+1}\Big(\Sigma^{h-1}(u,u')\prod_{h'=1}^{H+1}\dot{\Sigma}^{h'}(u,u')\Big)
% \end{align}
moreover, if $w$ evolves throughout gradient flow, we have:
\begin{equation}
\frac{\partial \mathcal{K}^h(u,u')}{\partial t}\Big\vert_{t=0} \stackrel{p}{\longrightarrow} 0
\end{equation}
where the limits are taken with respect to $m,n\to \infty$ sequentially.
% Under gradient flow, it holds that: 
% \begin{equation}
% \lim_{n \to \infty} \frac{\partial \mathcal{K}^h(u,u')}{\partial t}\Big\vert_{t=0} = \mathcal{O}_p(1/m)  
% \end{equation}
\end{restatable}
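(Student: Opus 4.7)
The plan is to prove the three claims (convergence, decomposition, and vanishing time derivative) by exploiting the sequential structure of the limit $n\to\infty$ followed by $m\to\infty$. The starting point is the chain-rule factorization
\begin{equation*}
\mathcal{K}^h(u,u') \;=\; \frac{\partial g(z;f(x))}{\partial f(x)} \,\mathcal{K}^f(x,x')\, \frac{\partial^{\top} g(z';f(x'))}{\partial f(x')},
\end{equation*}
which holds because the trainable parameters $w$ influence $h$ only through $f(x)$. Note that $\mathcal{K}^f(x,x')$ is an $n_L\times n_L$ matrix (not a scalar), since $f$ has vector output of dimension $n_L = \sum_{i=1}^H m_i m_{i-1}$.

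First I would take $n\to\infty$ with $m$ fixed. Since $f$ is a standard NTK-parameterized MLP and $n_L$ is then a fixed integer, the classical vector-valued NTK result of~\cite{jacotNTK} applies coordinatewise: $\mathcal{K}^f(x,x') \stackrel{p}{\to} \Theta^f(x,x')\cdot I_{n_L}$. The off-diagonal vanishing follows from the independence of distinct rows of $W^L$ and the absence of weight sharing across output units of $f$; the common diagonal value is the scalar NTK $\Theta^f(x,x')$. Substituting into the factorization collapses the middle factor and yields $\mathcal{K}^h(u,u') \stackrel{p}{\to} \Theta^f(x,x')\cdot \mathcal{K}^g(u,u')$, where $\mathcal{K}^g(u,u')$ is the NTK of $g$ evaluated at the (random) weights $v=f(x)$.

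Next I would take $m\to\infty$. By Theorem~\ref{thm:gp}, after the first limit the weights $f(x)$ feeding $g$ are Gaussian with variance modulated by the scalar $S^L(x,x')$ rather than being iid $\mathcal{N}(0,1)$. I would run the usual layerwise NTK induction of~\cite{jacotNTK} through $g$, propagating this adaptive-variance modulation through both the forward kernel recursion in Eq.~\ref{s} and the backward gradient recursion. Because the modulation acts as a multiplicative rescaling of otherwise-Gaussian signals, the induction closes and produces a deterministic limit $\Theta^g(u,u',S^L(x,x'))$ depending on $x,x'$ only through $S^L(x,x')$; combined with step one this gives the decomposition in Eq.~\ref{eq:comp}. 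For the time-derivative claim I would differentiate the factorization in $t$ using $\dot w = -\mu \nabla_w c(w)$ and the chain rule, producing sums of correlation functions involving second and third derivatives of $h$ (the multivariate analogues of the $\mathcal{K}^{(r)}_{i,j}$ in Eq.~\ref{eq:taylorexpansion}). A width-tracking refinement of Theorem~\ref{thm:hyper} covering the dual limit bounds each such term by a negative power of $\min(m,n)$, so all contributions vanish in probability.

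The main obstacle is the NTK induction for $g$ with adaptive-variance first-layer weights: one must verify that the $x$-dependent variance propagates cleanly through the nonlinear recursion, yielding a closed-form kernel that depends on $x,x'$ only through the scalar $S^L(x,x')$ (and not through the full joint law of $f(x),f(x')$). A parallel difficulty arises in the time-derivative argument, where the original Theorem~\ref{thm:hyper} was stated for finite $g$: one must show that, once $m$ also tends to infinity, the relevant second-order correlation functions acquire an additional factor of $1/m$ beyond the $1/n$ decay, so the derivative vanishes uniformly in the depth $H$ of the primary network. The tensor-programs framework of~\cite{Yang2019ScalingLO,Yang2019TensorPI} would be the natural language for these computations, but as the excerpt notes, it does not directly handle the rank-three-tensor structure of the hypernetwork's last layer, so bespoke bookkeeping on this layer is unavoidable.
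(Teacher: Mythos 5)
Your proposal follows essentially the same route as the paper: the chain-rule factorization through $f(x)$, the diagonal limit $\mathcal{K}^f\to\Theta^f\cdot I$ as $n\to\infty$, the NTK induction for $g$ fed with adaptive-variance Gaussian weights (the content of Lem.~\ref{lem:arora2}) giving $\Theta^g(u,u',S^L(x,x'))$, and a correlation-function scaling argument (the machinery behind Thm.~\ref{thm:hyper}/Lem.~\ref{lem:d}, re-run with $f(x)$ playing the role of the weights) for the vanishing time derivative. The only minor imprecisions are that no third derivatives actually arise (only second-derivative tensors contracted with two gradients), and that the two families of terms decay differently — the $\partial^2 f/\partial w^2$ terms vanish at rate $1/n$ while the $\partial^2 h/\partial f^2$ terms are $O(1)$ in $n$ and decay only at rate $1/m$ — rather than each term carrying a $1/n$ factor plus an extra $1/m$.
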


% {\color{blue}TODO: a few words on why product is good.}

{As a consequence of Thm.~\ref{thm:hyperkernel}, when applying a Fourier features preprocessing to $z$, one obtains that $\Theta^g(u,u')$ becomes shift invariant. 

\begin{restatable}{remark}{periodicNTK}\label{remark:perNTK}
Let $p(z)$ be as in Remark~\ref{remark:perGP}. Let $h(u) = g(p(z);f(x))$ be a hypernetwork, where $z$ is preprocessed according to $p$. Let $u=(x,z)$ and $u'=(x',z')$ be two pairs of inputs. Then, $\Theta^g(u,u')$ is a function of $\exp[-\|z-z'\|^2_2/2]$ and $S^0(x,x')$.
\end{restatable}
Note that $\Theta^f$ is the standard limiting NTK of $f$ and depends only on the inputs $\{x_i\}_{i=1}^N$. However from Eq.~\ref{mf}, the term $\Theta^g$ requires the computation of the NNGP kernel of $f$ in advance in order to compute the terms $\{\Sigma^l,\dot{\Sigma}^l\}$. This form provides an intuitive factorization of the hyperkernel into a term $\Theta^f$ which depends on the meta function and data, and $\Theta^g$ which can be though of as a conditional term.

% \paragraph{Computing The Hyperkernel} From Theorem.~\ref{thm:hyperkernel} it is evident that the hyperkernel is composed of the Hadamard product of two NTK terms. When both $f$ and $g$ are realized by MLPs, the hyperkernel can be computed in closed form by the following \cite{conf/nips/AroraDH0SW19}:
% \begin{equation}\label{eq:thetafg}
% \begin{aligned}
% \Theta^f(x,x') &= \sum_{l=1}^{L+1}\Big(S^{l-1}(x,x')\prod_{l'=1}^{L+1}\dot{S}^{l'}(x,x')\Big) \\ 
% \Theta^g(u,u') &= \sum_{h=1}^{H+1}\Big(\Sigma^{h-1}(u,u')\prod_{h'=1}^{H+1}\dot{\Sigma}^{h'}(u,u')\Big) 
% \end{aligned}
% \end{equation}
% where $\dot{S}^{l+1} = 2\E_{(v_1,v_2)\sim \mathcal{N}(0,\Lambda^l)}[\dot{\sigma}(v_1)\cdot \dot{\sigma}(v_2)]$ and $\dot{\Sigma}^{l+1} = 2\E_{(v_1,v_2)\sim \mathcal{N}(0,\Gamma^l)}[\dot{\phi}(v_1)\cdot \dot{\phi}(v_2)]$. Note that $\Theta^f$ is the standard limiting NTK of $f$ and depends only on the inputs $\{x_i\}_{i=1}^N$. However from Eq.~\ref{mf}, the term $\Theta^g$ requires the computation of the NNGP kernel of $f$ in advance in order to compute the terms $\{\Sigma^l,\dot{\Sigma}^l\}$. This form provides an intuitive factorization of the hyperkernel into a term $\Theta^f$ which depends on the meta function and data, and $\Theta^g$ which can be though of as a conditional term.
}
% \TG{what does it mean?} 
% \paragraph{Computational Advantage of Hyperkernel}
% From its definition as the NTK of the model, the hyperkernel serves as a functional prior for hypernetworks trained with gradient descent, providing a powerful "out of the box" kernel for meta learning tasks.  
% It is worth noting that an important feature of the hyperkernel is its computational advantage. In many tasks, each sample $x$ is associated with multiple inputs $z_1,\dots,z_k$. In such settings, the compositional structure in Eq.~\ref{eq:comp}, makes the hyperkernel especially  efficient as we can compute $\Theta^f(x,x')$ once and for each pair $z_i,z'_i$ that correspond to $x$ and $x'$ (resp.) we compute $\Theta^g(z,z',S^L_{x,x'})$. Since typically, the meta network $f$ is much larger then the implicit network, the computation of $\Theta^g(z,z',S^L_{x,x'})$ is much easier than the computation of $\Theta^f(x,x')$. 

\section{Experiments}

{Our experiments are divided into two main parts. In the first part, we validate the ideas presented in our theoretical analysis and study the effect of the width and depth of $g$ on the optimization of a hypernetwork. In the second part, we evaluate the performance of the NNGP and NTK kernels on image representation tasks. For further implementation details on all experiments see Appendix~A.} 

\begin{table*}[ht]
\centering
\begin{tabular}{{lllll|llll}}
\hline
             & \multicolumn{4}{l|}{Representation} & \multicolumn{4}{l}{Inpainting} \\\hline
\multicolumn{1}{l|}{$N$}  &50       &100         &200        &500        &50       &100         &200        &500       \\\hline
            %  & $N=50$   & $N=100$   & $N=200$  & $N=500$  & $N=50$  & $N=100$  & $N=200$ & $N=500$ \\\hline
\multicolumn{1}{l|}{HK}  &0.055       &0.050         &0.043        &0.032        &0.057       &0.051         &0.047        &0.038       \\
\multicolumn{1}{l|}{NNGP}         &0.051        &0.045         &0.037        &0.026        &0.054        &0.047         &0.043        &0.034       \\
% \multicolumn{1}{l|}{GP} &        &         &        &        &       &        &       & \\
% \multicolumn{1}{l|}{Neural Network} &        &         &        &        &       &        &       & \\
\multicolumn{1}{l|}{HN} &0.12        &0.08         &0.052        &0.041        &0.16      &0.098        &0.066       &0.49 \\   \hline  
\end{tabular}
\caption{
    {\textbf{Results on image representation and inpainting.} Reported are the MSE of the reconstructed image on test set where $N$ is the number of training samples. As can be seen, in low data regime the kernels outperform a trained hypernetwork on both tasks, and the NNGP consistently outperforms the rest.}}
    \label{tab:implicit}
\end{table*}

\subsection{Convergence of the Hyperkernel}

We verified our results of Thm.~\ref{thm:hyperkernel} by constructing a simple hypernetwork, for which both $f$ and $g$ are four layered fully connected networks with ReLU activations. For the input of $f$, we used a fixed 2D vector $x = (1,-1)$. The input $z(\theta)$ of $g$ varied according to $z(\theta) = (\sin(\theta),\cos(\theta))$, where $\theta \in [-\frac{\pi}{2},\frac{\pi}{2}]$. We then compute the empirical hyperkernel as follows, while varying the width of both $f$ and $g$:
\begin{equation}
\mathcal{K}^h(u,u') = \nabla_w h(x,z(\theta))\cdot \nabla_w^\top h(x,z(\theta))
\end{equation}
Results are presented in Fig.~\ref{fig:kernel}. As can be seen, convergence to a fixed kernel is only observed in the dually wide regime, as stated in Thm. \ref{thm:hyperkernel}.

\subsection{Training Dynamics}

We consider a rotation prediction task. In this task, the hypernetwork $f$ is provided with a randomly rotated image $x$ and the primary network $g$ is provided with a rotated version $z$ of $x$ with a random angle $\alpha$. The setting is cast into a classification task, where the goal is to predict the closest value to $\alpha/360$ within $\{\alpha_i = 30i/360 \mid i = 0,\dots,11\}$. We experimented with the MNIST~\cite{lecun-mnisthandwrittendigit-2010} and CIFAR10~\cite{Krizhevsky10convolutionaldeep} datasets. For each dataset we took $10000$ training samples only.

We investigate the effect of the depth and width of $g$ on the training dynamics of a hypernetwork. We compared the performance of hypernetworks of various architectures to investigate the effect of the depth and width of $g$. The architectures of the hypernetwork and the primary network are as follows. The hypernetwork, $f$, is a fully-connected ReLU neural network of depth $4$ and width $200$. The inputs of $f$ are flattened vectors of dimension $c \cdot h^2$, where $c$ specifies the number of channels and $h$ the height/width of each image ($c=1$ and $h=28$ for MNIST and $c=3$ and $h=32$ for CIFAR10). The primary network $g$ is a fully-connected ReLU neural network of depth $\in \{3,6,8\}$. Since the MNIST rotations dataset is simpler, we varied the width of $g$ in $\in \{10,50,100\}$ and for the the CIFAR10 variation we selected the width of $g$ to be $\in \{100,200,300\}$. The network outputs $12$ values and is trained using the cross-entropy loss. 

We trained the hypernetworks for 100 epochs, using the SGD method with batch size 100 and learning rate $\mu=0.01$. For completeness, we conducted a sensitivity study on the learning rate for both datasets, to show that the reported behaviour is consistent for any chosen learning rate, see appendix. In Fig.~\ref{fig:var}(a-c) we compare the performance of the various architectures on the MNIST and CIFAR10 rotations prediction tasks. The performance is computed as an average and standard deviation (error bars) over $100$ runs. As can be seen, we observe a clear improvement in test performance as the width of $g$ increases, especially at the initialization. When comparing the three plots, we observe that when $f$ is wide and kept fixed, a deeper $g$ incurs slower training, and lower overall test performance. This is aligned with the conclusions of Thm.~\ref{thm:hyper}. %Interestingly, the performance of the hypernetwork converge in both cases to that of the hyperkernel. The results on MNIST are reported in the appendix.

\subsection{Image representation and Inpainting}
{We compare the performance of a hypernetwork and kernel regression with the hyperkernel on two visual tasks: functional image representation and inpainting. In the MNIST image representation task, the goal of the hypernetwork $f$ is to represent an input image via the network $g$, which receives image pixel coordinates and outputs pixel values. In the inpainting task, the goal is the same where only half of the image is observed by $f$.
\newpage
\label{sec:imgcomp}
\begin{figure}[ht]
    \centering
    \begin{tabular}{ccc}
    \includegraphics[width=.69\linewidth]{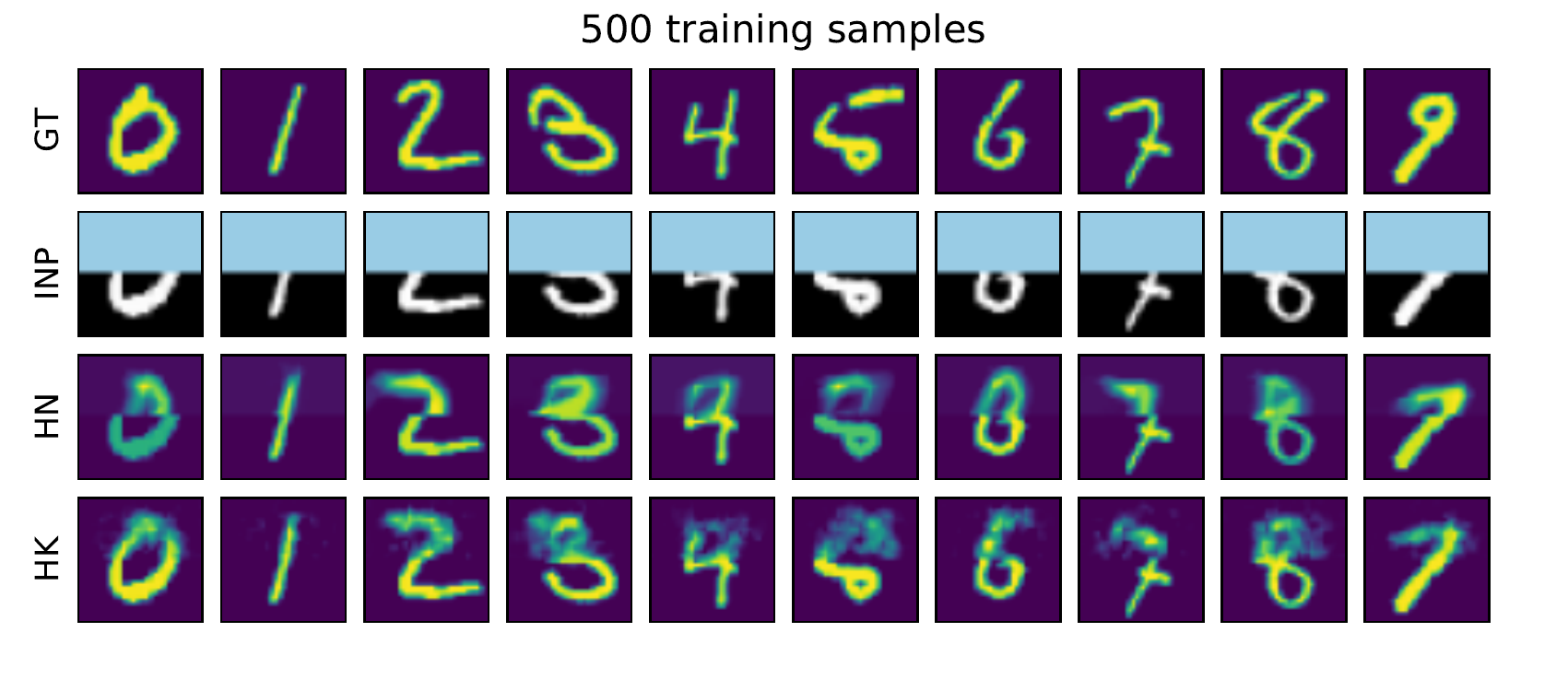} \\
    \end{tabular}
    \caption{{\bf Results on image inpainting.} \textbf{(Row 1)} ground-truth images.
    \textbf{(Row 2)} corresponding inputs of meta-network $f$. 
    \textbf{(Row 3)} reconstruction by the hypernetwork.
    \textbf{(Row 4)} reconstruction by the hyperkernel.
    See Section \ref{sec:imgcomp} for experimental details.}
    \label{fig:completion300}
\end{figure}

\noindent{\bf Problem Setup\quad}
We cast these problems as a meta-learning problem, where $f$ receives an image, and the goal of the primary network $g:[28]^2 \rightarrow [0,1]$ is then to learn a conditional mapping from pixel coordinates to pixel values for all the pixels in the image, with the MSE as the metric. Our training dataset $S = \{(u_i,y_i)\}_{i=1}^N$ then consists of samples $u_i = (x_i,z_i)$, such that, $x_i$ are images, and $z_i$ are random pixel location (i.e., a tuple $\in [28]^2$), and $y_i$ is a label specifying the pixel value at the specified location (normalized between 0 and 1). In both experiments, training was done on randomly sampled training data of varying size, specified by $N$.

\noindent{\bf Evaluation\quad}
We evaluate the performance of both training a hypernetwork, and using kernel regression with the hyperkernel. For kernel regression, we use the following formula to infer the pixel value of a test point $u$:
\begin{equation}
(\Theta^h(u,u_1),..., \Theta^h(u,u_N))\cdot \big(\Theta^h(U,U) + \epsilon\cdot I\big)^{-1}\cdot Y
\end{equation}
where $\Theta^h(U,U) = (\Theta^h(u_i,u_j))_{i,j\in [N]}$ is the hyperkernel matrix evaluated on all of the training data and $Y = (y_i)^{N}_{i=1}$ is the vector of labels in the training dataset and $\epsilon = 0.001$. 
% For further comparison, we compare our results to vanilla GP with zero mean and RBF kernel, and GP with a learned kernel and mean functions.

% For the image completion task, we summarize the quantitative results in Tab.~\ref{table:mnistcomp}. Surprisingly, the hyperkernel method surpasses all when the number of context points is low ($k=3,50$).
In Tab.~\ref{tab:implicit}, we compare the results of the hyperkernel and the NNGP kernel with the corresponding hypernetwork. The reported numbers are averages over $20$ runs. As can be seen, in the case of a small dataset, the kernels outperforms the hypernetwork, and the NNGP outperforms the rest.} %and in the latter case, they achieve comparable results. 

\section{Conclusions}

% We claim that the optimization dynamics of multiplicative models are more complicated then the standard ReLU networks. Specifically, for a fixed sized implicit network of depth $H$, the first $H$ terms of the Taylor approximation of the cost function are non-zero, even when the meta-network is infinitely wide. This is in contrast to standard ReLU networks, where the terms of degree $\geq 1$ of the Taylor approximation tend to zero as the width tends to infinity. 

% However, for an infinitely wide meta and implicit networks, the learning dynamics is determined by a linear model obtained from the first-order Taylor expansion of the network around its initial parameters and the kernel of this process is given by the Hadamard product of the kernels induced by the meta and implicit networks. 

In this paper, we apply the well-established large width analysis to hypernetwork type models. For the class of models analyzed, we have shown that a wide hypernetwork must be coupled with a wide primary network in order achieve a simplified, convex training dynamics as in standard architectures. 
% We have shown that the dynamics of the function computed by a hypernetwork can not be accurately approximated by its linearization if only the hypernetwork $f$, but not the implicit network $g$ is infinitely wide.
The deeper $g$ is, the more complicated the evolution is. In the dually infinite case, when the widths of both the hyper and primary networks tend to infinity, the optimization of the hypernetwork become convex and is governed by the proposed hyperkernel.
The analysis presented in this paper is limited to a specific type of hypernetworks used in the literature, typically found in the functional neural representation literature, and we leave the extension of this work to additional types of hyper models to future work.\\
% We demonstrate the utility of this \emph{hyperkernel} in several meta-learning tasks. Consistent with common wisdom on kernel methods, our hyperkernel outperforms trained hypernetworks in low data settings. In particular, we find hyperkernels to be surprisingly good image completers when there is little data.
Some of the tools developed in this study, also apply for regular NTKs. Specifically,~\cite{Dyer2020Asymptotics} provide a conjecture, for which one of its consequences is that 
$\mathcal{K}^{(r)}_{i,j} = \mathcal{O}(1/n)$. In Thm.~\ref{thm:hyper} we prove that this hypothesized upper bound is increasingly loose as $r$ increases, and prove an asymptotic behaviour in the order of $\mathcal{K}^{(r)}_{i,j} \sim 1/n^{r-1}$.

\clearpage

\section*{Broader Impact}
This work improves our understanding {and design} of hypernetworks and hopefully will help us improve the transparency of machine learning involving them. {Beyond that, this work falls under the category of basic research and does not seem to have particular societal or ethical implications.}

\section*{Acknowledgements and Funding Disclosure}

This project has received funding from the European Research Council (ERC) under the European
Union’s Horizon 2020 research and innovation programme (grant ERC CoG 725974). The contribution of Tomer Galanti is part of Ph.D. thesis research conducted at Tel Aviv University.

\bibliography{arxiv_version}
\bibliographystyle{plain}

\newpage
\appendix
\onecolumn

\section{Implementation Details}

\subsection{Convergence of the Hyperkernel}

% We verified our results in Thm.~\ref{thm:hyperkernel} by constructing a simple hypernetwork, for which both $f$ and $g$ are four layered fully connected networks with ReLU activations. For the input of $f$, we used a fixed 2D vector $x = (1,-1)$. The input $z(\theta)$ of $g$ varied according to $z(\theta) = (\sin(\theta),\cos(\theta))$, where $\theta \in [-\frac{\pi}{2},\frac{\pi}{2}]$. We then compute the empirical hyperkernel as follows, while varying the width of both $f$ and $g$:
% \begin{equation}
% \mathcal{K}^h(u,u') = \nabla_w h(x,z(\theta))\cdot \nabla_w^\top h(x,z(\theta))
% \end{equation}
In Fig.~1(a) (main text) we plot the variance of the kernel values $\mathcal{K}^h(u,u')$ in log-scale, as a function of the width of both $f$ and $g$. The variance was computed empirically over $k=100$ normally distributed samples $w$. As can be seen, the variance of the kernel tends to zero only when both widths increase. In Fig.~1(b) (main text) we plot the value of $\mathcal{K}^h(u,u')$ and its variance for a fixed hypernetwork $f$ of width $500$ and $g$ of width $10$ or $800$. The $x$-axis specifies the value of $\theta \in [-\frac{\pi}{2},\frac{\pi}{2}]$ and the y-axis specifies the value of the kernel. As can be seen, the expected value of the empirical kernel, $\mathcal{K}^h(u,u')$, is equal to the width-limit kernel (e.g., theoretical kernel) for both widths $10$ and $800$. In addition, convergence of the width-limit kernel is guaranteed only when the widths of both networks increase, highlighting the importance of wide architectures for both the hyper and implicit networks for stable training. 

\subsection{Image Completion and Impainting}
\noindent{\bf Architectures\quad}
In both tasks, we used fully connected architectures, where $f$ contains two hidden layers, and $g$ contains one hidden layer. The hyperkernel used corresponds to the infinite width limit of the same architecture. For the input of $g$, we used random Fourier features~\cite{tancik2020Fourier} of the pixel coordinates as inputs for both the hyperkernel and the hypernetwork. To ease on the computational burden of computing the full kernel matrix $\Theta^h(U,U)$ when evaluating the hyperkernel, we compute smaller kernel matrices on subsets of the data $\{U^s_i\} = \{x_i^s,z_i^s\}_{s\in [10]}$, where each subset contains 1k input images $\{x^s_i\}$, and 20 random image coordinates per input, producing a kernel matrix of size $20k\times20k$. The final output prediction is then given by:  
\begin{equation}
\frac{1}{10}\sum_s(\Theta^h(u,u_1^s),..., \Theta^h(u,u_N^s))\cdot \big(\Theta^h(U^s,U^s) + \epsilon\cdot I\big)^{-1}\cdot Y^s
\end{equation}
where $Y^s$ are the corresponding labels of the subset $U^s$.
For the hypernetwork evaluation, we used the same inputs $\{U^s_i\}_{s\in[10]}$ to train the hypernetwork using a batchsize of $20$, and a learning rate of $0.01$ which was found to produce the best results.  

\section{Additional Experiments}

% \begin{figure*}[ht]
%     \centering
%     \begin{tabular}{ccc}
%     \includegraphics[width=.3\linewidth]{figures_var_epoch/mnist_depth_3.png} & 
%     \includegraphics[width=.3\linewidth]{figures_var_epoch/mnist_depth_6.png} &
%     \includegraphics[width=.3\linewidth]{figures_var_epoch/mnist_depth_8.png} 
%      \\
%     (a) & (b) & (c)\\
%     \end{tabular}
%     \caption{
%     \textbf{A hypernetwork with a wider and \emph{shallower} implicit network $g$ trains faster and achieves better test performance} on the MNIST rotations prediction task.
%     We fix the hypernetwork $f$ and the depth of $g$ at {\bf (a)} 3, {\bf (b)} 6 and {\bf (c)} 8, while varying the width $m$ of $g$. The x-axis specifies the epoch and the y-axis the accuracy at test time.  }
%     \label{fig:var2}
% \end{figure*}

% \begin{figure*}[ht]
%     \centering
%     \begin{tabular}{cc}
%      \includegraphics[width=.4\linewidth]{} & \includegraphics[width=.4\linewidth]{}\\
%      (a)&(b)\\
%     \end{tabular}
%     \caption{{\color{red}\textbf{Comparing the performance of kernel regression with (a) NNGP and (b) NTK kernels for $f$ and $g$ of varying number of layers.} The x-axis specifies the depth of $f$ and the y-axis specifies the accuracy rate at test time. As can be seen, when fixing the depth of $g$, the performance is relatively constant for any depth of $f$. On the other hand, increasing the depth of $g$ is detrimental to the performance.}}
%     \label{fig:depthfg}
% \end{figure*}

\subsection{Sensitivity Study}

To further demonstrate the behavior reported in Fig.~2 (main text), 
we verified that it is consistent regardless of the value of the learning rate. We used the same architectures as in the rotations prediction experiments, i.e., $f$ is a fully-connected ReLU neural network of depth $4$ and width $200$ and $g$ is of depth $\in \{3,6,8\}$ and width $\in \{50,100,200\}$. We vary the learning rate: $\mu = 10^{j-7}$, for $j=0,\dots,7$. For each value of the learning rate, we report the average performance (and standard deviation over $100$ runs) of the various architectures after $40$ epochs of training. 

As can be seen in Fig.~\ref{fig:varLr}, when $f$ is wide and kept fixed, there is a clear improvement in test performance as the width of $g$ increases, for every learning rate in which the networks provide non-trivial performance. When $f$ is wide and kept fixed, a deeper $g$ incurs slower training and lower overall test performance. We note that it might seem that the width of $g$ does not affect the performance when the learning rate is $\mu=0.01$ in all settings except Figs.~\ref{fig:varLr}(c,f). Indeed, we can verify from Fig.~2 (main text) that the performance at epoch $40$ is indeed similar for different widths. However, for earlier epochs, the performance improves for shallower and wider architectures.

\begin{figure*}[ht]
    \centering
    \begin{tabular}{ccc}
    \includegraphics[width=.3\linewidth]{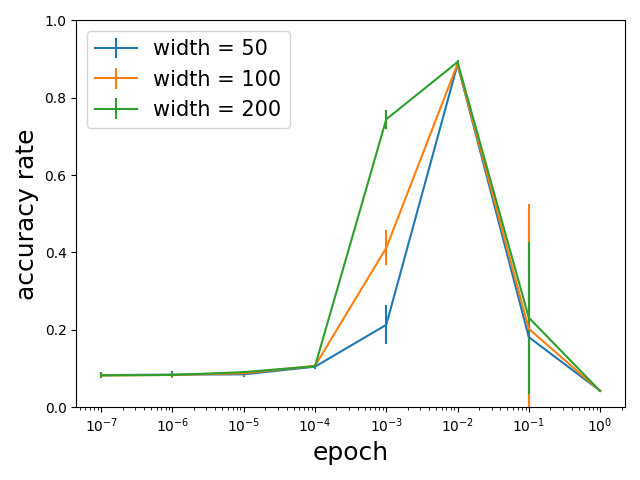} & 
    \includegraphics[width=.3\linewidth]{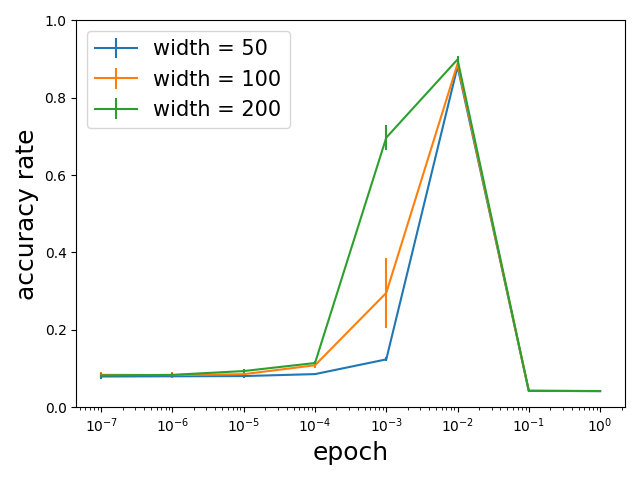} &
    \includegraphics[width=.3\linewidth]{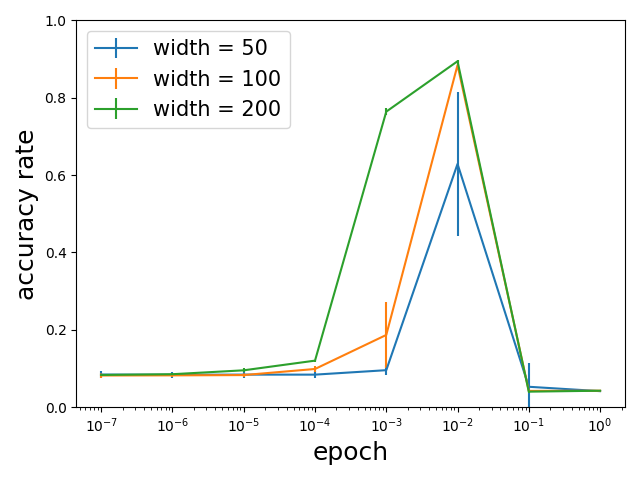} 
     \\
    (a) $g$ of depth 3 & (b) $g$ of depth 6 & (c) $g$ of depth 8\\
    \includegraphics[width=.3\linewidth]{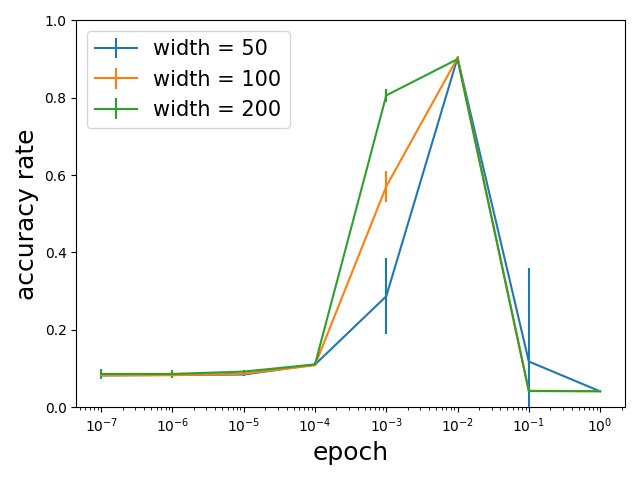} & 
    \includegraphics[width=.3\linewidth]{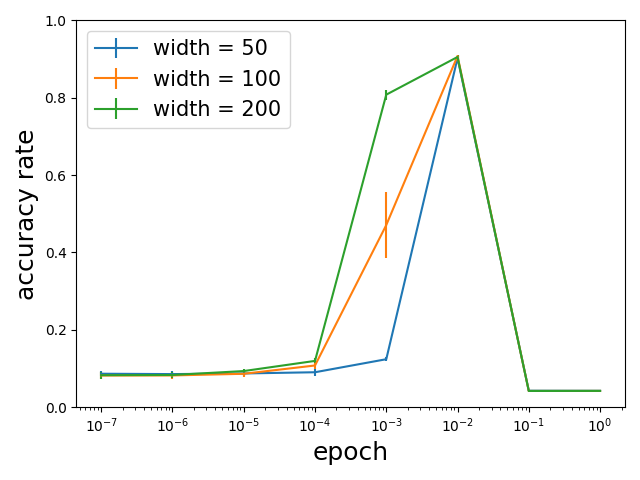} &
    \includegraphics[width=.3\linewidth]{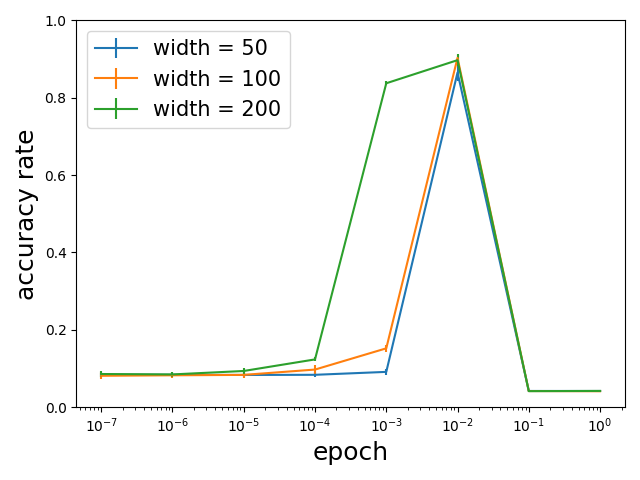} 
    \\
    (d) $g$ of depth 3 & (e) $g$ of depth 6 & (f) $g$ of depth 8\\
    \end{tabular}
    \caption{
    {Sensitivity experiment. We compare the performance of hypernetworks with an implicit network of different widths and depths after $40$ epochs, when varying the learning rate. The x-axis specifies the value of the learning rate and the y-axis specifies the averaged accuracy rate at test time. (a-c) Results on MNIST and (d-f) Results on CIFAR10. In the first column, $g$'s depth is $3$, in the second, it is $6$ and in the third, it is $8$.}}
    \label{fig:varLr}
\end{figure*}

% \paragraph{Image inpainting} In Fig.~\ref{fig:inpaint500} we present the visual results of the hyperkernel compared to a hypernetwork on the image inpainting task. For experimental details, see Sec.~5.2 in the main text.

\subsection{Training wide networks with a large learning rate} Remark~1 (main text) states that one is able to train wide networks with a learning rate $\mu = o(n)$. To validate this remark, we trained shallow networks of varying width $n \in \{10^2,10^3,10^4,2.5\cdot 10^5\}$ with learning rate $\mu = \sqrt{n}$ on MNIST. As can be seen in Fig.~\ref{fig:lr}, training those networks is possible despite the very large learning rate. In fact, we observe that the accuracy rate and loss improve as we increase the width of the network.

% \begin{figure}[t]
%     \centering
%     \begin{tabular}{ccc}
%     \includegraphics[width=0.8\linewidth]{}
%     \end{tabular}
%     \caption{blah }
%     \label{fig:gp2}
% \end{figure}
% \begin{figure}[t]
%     \centering
%     \begin{tabular}{ccc}
%     \includegraphics[width=0.8\linewidth]{}
%     \end{tabular}
%     \caption{blah }
%     \label{fig:gp3}
% \end{figure}

% \begin{figure}[ht]
%     \centering
%     \begin{tabular}{ccc}
%     \includegraphics[width=0.45\linewidth]{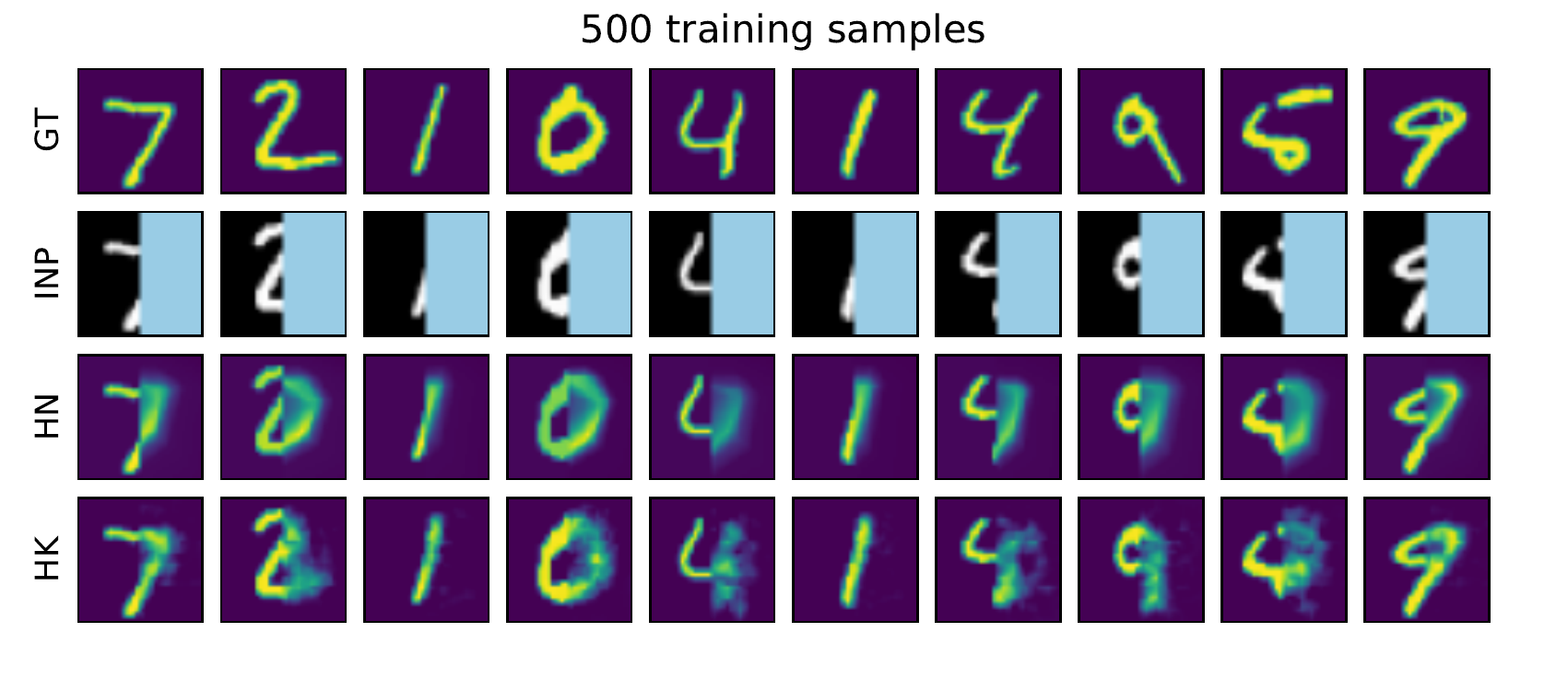} &
%     \includegraphics[width=0.45\linewidth]{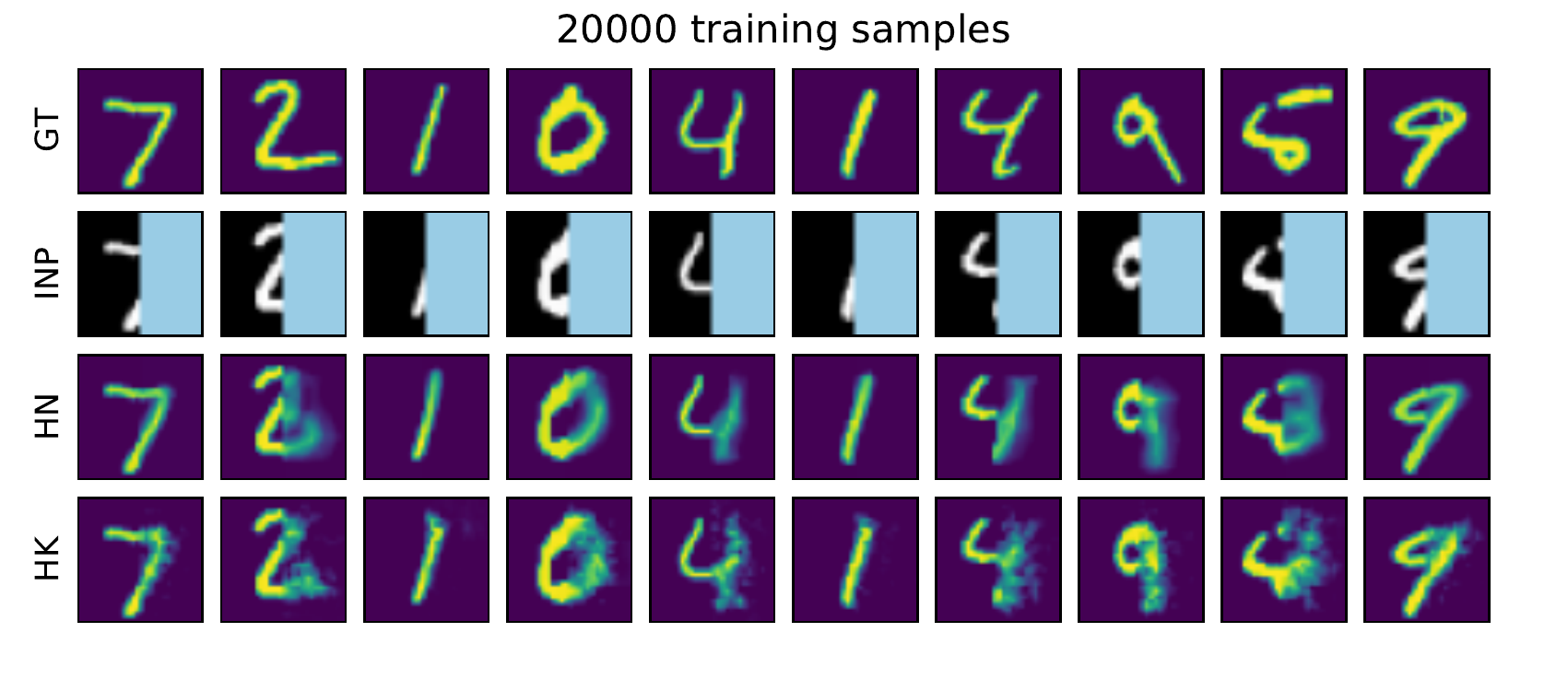}
%     \end{tabular}
%     \caption{{\bf Results on image inpainting.} \textbf{(Row 1)} ground-truth images.
%     \textbf{(Row 2)} corresponding inputs of hypernetwork $f$. 
%     \textbf{(Row 3)} reconstruction by the hypernetwork.
%     \textbf{(Row 4)} reconstruction by the hyperkernel.
%     }
%     \label{fig:inpaint500}
% \end{figure}

\begin{figure}[ht]
    \centering
    \begin{tabular}{cc}
\includegraphics[width=0.35\linewidth]{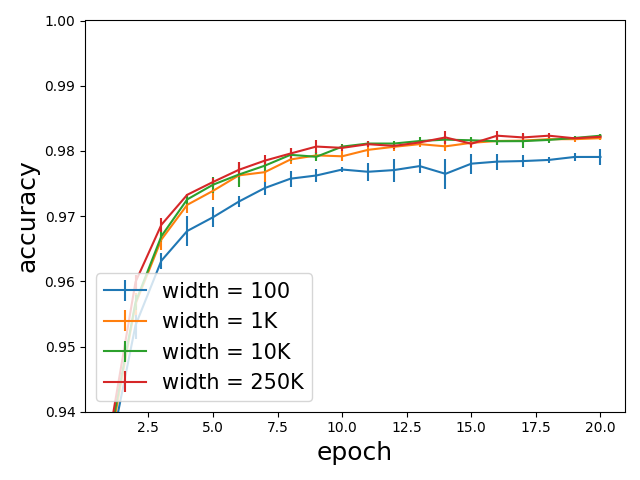} &  
\includegraphics[width=0.35\linewidth]{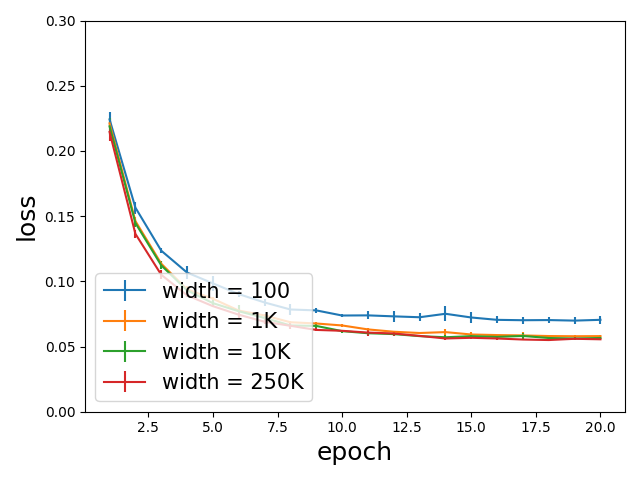} \\
      (a) & (b) 
    \end{tabular}
    \caption{{\bf Results of training wide networks with a large learning rate.} The y-axis is the {\bf (a)} accuracy rate or {\bf (b)} the average loss at test time. We vary the width $n \in \{10^2,10^3,10^4,2.5\cdot 10^5\}$ and take the learning rate to be $\sqrt{n}$.}
    \label{fig:lr}
\end{figure}

\section{Correlation Functions}

Correlation functions are products of general high order tensors representing high order derivatives of a networks output with respect to the weights. 
In~\cite{Dyer2020Asymptotics} a conjecture is posed on the order of magnitude of general correlation functions involving high order derivative tensors, which arise when analysing the dynamics of gradient descent. 
Roughly speaking, given inputs $\{x_i\}_{i=1}^r$, the outputs of a neural network $f(x_1;w),...,f(x_r;w) \in \mathbb{R}$ with normally distributed parameters $w\in \mathbb{R}^N$, correlation functions takes the form:
\begin{equation}
\sum_{\eta_{k_0},...,\eta_{k_r} \in [N]} \prod_{j=1}^{r}\Gamma_{\eta_{k_j+1},...,\eta_{k_{j+1}}}(x_j)
\end{equation}
where 
\begin{equation}
\Gamma_{\eta_{1},...,\eta_{k}}(x_j) := \frac{\partial^k f(x_j;w)}{\partial w_{\eta_1}...\partial w_{\eta_k}}
\end{equation}
For instance, the following are two examples of correlation functions,
\begin{equation}
f(x_1;w) \cdot \frac{\partial f(x_2;w)}{\partial w_{\mu_1}},
\frac{\partial^2 f(x_1;w)}{\partial w_{\mu_1} \partial w_{\mu_2}} \cdot \frac{\partial f(x_2;w)}{\partial w_{\mu_1}}
\end{equation}
Computing the expected value of these correlation functions involve keeping track of various moments of normally distributed weights along paths, as done in recent finite width correction works~\cite{Hanin2020Finite,littwin2020random}.  
\cite{Dyer2020Asymptotics} employ the Feynman diagram to efficiently compute the expected values (order of magnitude) of general correlation functions, albeit at the cost of only being provably accurate for deep linear, or shallow ReLU networks. 
%Understanding the asymptotic behaviour of these terms can be crucial for understanding training dynamics, as the derivative of the NTK is composed of these terms. 
In this work, we analyze the asymptotic behaviour correlation functions of the form: 
\begin{equation}\label{terms}
\begin{aligned}
\mathcal{T}^r(x_0,...,x_r) &:= \sum_{\eta_{k_0}...\eta_{k_r}\in [N]} \Gamma_{\eta_{k_1},...,\eta_{k_r}}(x_0)\prod_{j=1}^{r}\Gamma_{\eta_{k_j}}(x_j) \\
 &= \bigg\langle \nabla^{(r)}_w f(x_0), \bigotimes^{r}_{j=1}\nabla_w f(x_j) \bigg\rangle
 \end{aligned}
 \end{equation}
% \begin{equation}\label{terms}
% \begin{aligned}
% \mathcal{T}^r(x_0,...,x_r) :=  \bigg\langle \nabla^{(r)}_w f(x_0), \bigotimes^{r}_{j=1}\nabla_w f(x_j) \bigg\rangle
% \end{aligned}
% \end{equation}
where $\nabla^{(r)}_w f(x_0)$ is a rank $r$ tensor, representing the $r$'th derivative of the output, and $\bigotimes^{r}_{j=1}\nabla_w f(x_j)$ denotes outer products of the gradients for different examples. 
terms of the form in Eq.~\ref{terms} represent high order terms in the multivariate Taylor expansion of outputs, and are, therefore, relevant for the full understanding of training dynamics. As a consequence of Thm.~\ref{thm:hyper}, we prove that $\mathcal{T}^r(x_0,...,x_r) \sim 1/n^{\max(r-1,0)}$ for vanilla neural networks, where $n$ is the width of the network.
As we have shown in Sec.~3, terms of the form in Eq.~\ref{terms} represent high order terms in the multivariate Taylor expansion of outputs, and are, therefore, relevant for the full understanding of training dynamics. As a consequence of Thm.~\ref{thm:hyper}, we prove that $\mathcal{T}^r(x_0,...,x_r) \sim 1/n^{\max(r-1,0)}$ for vanilla neural networks, where $n$ is the width of the network.

This result is a partial solution to the open problem suggested by~\cite{Dyer2020Asymptotics}. In their paper, they conjecture the asymptotic behaviour of general correlation functions, and predict an upper bound on the asymptotic behaviour of terms of the form in Eq.~\ref{terms} in the order of $\mathcal{O}(1/n)$. Our results therefore proves a stronger version of the conjecture, while giving the exact behaviour as a function of width.

\newpage

\section{Proofs of the Main Results}
\noindent{\bf Terminology and Notations\quad }
% \greg{which notations are needed in the main text, and which are needed for the appendix only?}
Throughout the appendix, we denote by $A \otimes B$ and $A \odot B$ the outer and Hadamard products of the tensors $A$ and $B$ (resp.). When considering the outer products of a sequence of tensors $\{A_i\}^{k}_{i=1}$, we denote, $\bigotimes^{k}_{i=1} A_i = A_1 \otimes \dots \otimes A_k$. We denote by $\sgn(x) := x/\vert x\vert$ the sign function. The notation $X_n \sim a_n$ states that $X_n/a_n$ converges in distribution to some non-zero random variable $X$. %bounded, i.e., for any $\epsilon>0$, there exists $M>0$ and $N \in \mathbb{N}$, such that, for all $n \in \mathbb{N}$, we have: $\mathbb{P}[|X_n/a_n|>M] < \epsilon$. 
A convenient property of this notation is that it satisfies: $X_n \cdot Y_n \sim a_n \cdot b_n$ when $X_n\sim a_n$ and $Y_n \sim b_n$.
%$\mathcal{O}_p(a_n)\cdot \mathcal{O}_p(b_n) = \mathcal{O}_p(a_n\cdot b_n)$ and $\mathcal{O}_p(a_n) + \mathcal{O}_p(b_n) = \mathcal{O}_p(a_n + b_n)$. 
Throughout the paper, we will make use of sequential limits and denote $n_k,\dots,n_1 \to \infty$ to express that $n_1$ tend to infinity, then $n_2$, and so on. For a given sequence of random variable $\{X_n\}^{\infty}_{n=1}$, we denote by $X_n\stackrel{d}{\longrightarrow} X$ ($X_n\stackrel{p}{\longrightarrow} X$), when $X_n$ converges in distribution (probability) to a random variable $X$.%For a given multivariate function $f:\mathbb{R}^n \to \mathbb{R}^m$, we denote by $f_i(x)$ the $i$'th coordinate of $f(x)$.
\subsection{Useful Lemmas}

\begin{lemma}\label{lem:indicator}
Let $X_n\stackrel{d}{\longrightarrow} X$. Then, $\sgn(X_n) \stackrel{d}{\longrightarrow} \sgn(X)$.
\end{lemma}

\begin{proof}
We have: 
\begin{equation}
\lim_{n\to \infty}\mathbb{P}[\sgn(X_n) = 1] = \lim_{n\to \infty}\mathbb{P}[X_n \geq 0] = \mathbb{P}[X \geq 0] = \mathbb{P}[\sgn(X) = 1]    
\end{equation}
Hence, $\sgn(X_n)$ converges in distribution to $\sgn(X)$.
\end{proof}

\subsection{Main Technical Lemma}

% \begin{lemma}\label{lem:convO}
% Let $X_n$ be a sequence of random variables converging to a random variable $X$ in distribution. Then, $X_n = \mathcal{O}_p(1)$.
% \end{lemma}

% \begin{proof}
% For all $M > 0$, we have:
% \begin{equation}
% \begin{aligned}
% \lim_{n\to\infty} \mathbb{P}[|X_n| > M] 
% &= 1 + \lim_{n\to \infty} (F_{X_n}(-M) - F_{X_n}(M)) \\
% &= 1+F_X(-M)-F_X(M) \\
% &= \mathbb{P}[|X| > M]
% \end{aligned}
% \end{equation}
% For any $\epsilon>0$, we let $M_{\epsilon} > 0$ be a large enough real number, such that, $\mathbb{P}[|X| > M_{\epsilon}] < \epsilon/2$. In addition, let $N_{\epsilon} > 0$ be a large enough positive integer, such that, for all $n > N_{\epsilon}$, we have: $\vert \mathbb{P}[|X_n| > M_{\epsilon}] - \mathbb{P}[|X| > M_{\epsilon}] \vert < \epsilon/2$. Therefore, for all $n > N_{\epsilon}$, we have:
% \begin{equation}
% \mathbb{P}[|X_n| > M_{\epsilon}] < \epsilon
% \end{equation}
% Hence, $X_n = \mathcal{O}_p(1)$ by definition.
% \end{proof}

In this section, we prove Lem.~\ref{lem:d}, which is the main technical lemma that enables us proving Thm.~\ref{thm:hyper}. Let $f(x;w)$ be a neural network with $H$ outputs $\{f^d(x;w)\}^{H}_{d=1}$. We would like to estimate the order of magnitude of the following expression:
\begin{equation}\label{eq:T}
\mathcal{T}^{\boldsymbol{l,i,d}}_{n,i,d} :=
\left\langle \frac{\partial^k f^d(x_i;w)}{\partial W^{l_1} \dots \partial W^{l_k} } , \bigotimes^{k}_{t=1}\frac{ \partial f^{d_1}(x_{i_t};w)}{\partial W^{l_t}}  \right\rangle
\end{equation}
where $\boldsymbol{d} = (d_1,\dots,d_k)$, $\boldsymbol{i}=(i_1,\dots,i_k)$ and $\boldsymbol{l} = (l_1,\dots,l_k)$. For simplicity, when, $i_1=\dots=i_k=j$, we denote: $\mathcal{T}^{\boldsymbol{l,d}}_{n,i,j,d} := \mathcal{T}^{\boldsymbol{l,i,d}}_{n,i,d}$ and $\mathcal{T}^{\boldsymbol{l}}_{n,i,j,d} := \mathcal{T}^{\boldsymbol{l,i,d}}_{n,i,d}$ when  $d_1=\dots=d_k=d$ as well. 

To estimate the order of magnitude of the expression in Eq.~\ref{eq:T}, we provide an explicit expression for $\frac{\partial^k f^d(x_i;w)}{\partial W^{l_1} \dots \partial W^{l_k} }$. First, we note that for any $w$, such that, $f^d(x_i;w)$ is $k$ times continuously differentiable at $w$, for any set $\boldsymbol{l} := \{l_1,\dots,l_k\}$, we have:
\begin{equation}
\frac{\partial^k f^d(x_i;w)}{\partial W^{l_1} \dots \partial W^{l_k} } = \frac{\partial^k f^d(x_i;w)}{\partial W^{l'_1} \dots \partial W^{l'_k} }
\end{equation}
where the set $\boldsymbol{l}' := \{l'_1,\dots,l'_k\}$ is an ordered version of $\boldsymbol{l}$, i.e., the two sets consist of the same elements but $l'_1 < \dots < l'_k$. In addition, we notice that for any multi-set $\boldsymbol{l}$, such that, $l_i = l_j$ for some $i\neq j$, then, 
\begin{equation}
\frac{\partial^k f^d(x_i;w)}{\partial W^{l_1} \dots \partial W^{l_k} } = 0    
\end{equation}
since $f^d(x_i;w)$ is a neural network with a piece-wise linear activation function. Therefore, with no loss of generality, we consider $\boldsymbol{l} = \{l_1,\dots,l_k\}$, such that, $l_1<\dots<l_k$.  It holds that:
\begin{equation}\label{der}
\frac{\partial^k f^d(x_i;w)}{\partial W^{l_1} \dots \partial W^{l_k} } = \frac{1}{\sqrt{n_{l_1-1}}}q^{l_1-1}_{i,d} \otimes \mathcal{A}^{l_1\rightarrow l_2}_{i,d}
\end{equation}
where $\mathcal{A}^{l_1\rightarrow l_2}_{i,d}$ is a $2k-1$ tensor, defined as follows:
\begin{equation}
\mathcal{A}^{l_j\rightarrow l_{j+1}}_{i,d} = 
\begin{cases}  
\frac{1}{\sqrt{n_{l_{j+1}-1}}}C^{l_j\rightarrow l_{j+1}}_{i,d} \otimes \mathcal{A}^{l_{j+1}\rightarrow l_{j+2}}_{i,d} & 1<j<k-1\\
\frac{1}{\sqrt{n_{l_{k}-1}}}C^{l_{k-1}\rightarrow l_{k}}_{i,d} \otimes C^{l_{k}\rightarrow L}_{i,d} & j=k-1\\
\end{cases} 
\end{equation}
where:
\begin{equation}\label{eq:c}
C^{l_j\rightarrow l_{j+1}}_{i,d} = \begin{cases}  
\sqrt{2}Z^{l_{j+1}-1}_{i,d}P_{i,d}^{l_j\rightarrow l_{j+1} - 1} & l_{j+1} \neq L\\
P_{i,d}^{l_j\rightarrow L} & else
\end{cases}
\end{equation}
and:
\begin{equation}
P_i^{u\rightarrow v} = \prod_{l = u}^{v-1}(\sqrt{\frac{2}{n_{l}}}W^{l+1}Z^{l}_i) \textnormal{ and } Z^l_i = \textnormal{diag} (\dot\sigma(y^{l}(x_i)))
\end{equation}
The individual gradients can be expressed using:
\begin{equation}\label{grad}
\frac{ \partial f^{d_j}_w(x_{i_j})}{\partial W^{l_j}}= \frac{q^{l_j-1}_{i_j,d_j} \otimes C^{l_j\rightarrow L}_{i_j,d_j}}{\sqrt{n_{l_j-1}}}
\end{equation}
Note that the following holds for any $u<v<h\leq L$:
\begin{equation}\label{eq:factor}
C^{u\rightarrow h}_{i,d} = C^{v\rightarrow h}\frac{W^v}{\sqrt{n_{v-1}}}C^{u\rightarrow v}_{i,d} \textnormal{ and } C^{u\rightarrow L}_{i,d} = C^{v-1\rightarrow L}_{i,d} P^{u\rightarrow v-1}_{i,d}
\end{equation}
In the following, given the sets $\boldsymbol{l} = \{l_1,\dots,l_k\}$, $\boldsymbol{i}=\{i_1,\dots,i_k\}$ and $\boldsymbol{d} = \{d_1,\dots,d_k\}$, we derive the limit of $\mathcal{T}^{\boldsymbol{l,i,d}}_{n,i,d}$ using elementary tensor algebra. By Eqs.~\ref{grad} and~\ref{der}, we see that:
\begin{equation}\label{eq:order}
\begin{aligned}
\mathcal{T}^{\boldsymbol{l,i,d}}_{n,i,d}
&= \Big\langle \bigotimes^{k}_{t=1}\frac{ \partial f^{d_t}(x_{i_t};w)}{\partial W^{l_t}},\frac{q^{l_1-1}_{i,d}}{\sqrt{n_{l_1-1}}} \otimes  \frac{C^{l_1\rightarrow l_{2}}_{i,d}}{\sqrt{n_{l_{2}-1}}} \otimes ...\otimes  \frac{C^{l_{r-1} \rightarrow l_k}_{i,d}}{\sqrt{n_{l_k-1}}} \otimes C^{l_k\rightarrow L}_{i,d}\Big\rangle
\\
&=\frac{1}{n_{l_1-1}}\left\langle q^{l_1-1}_{i,d} , q^{l_1-1}_{i_1,d_1}  \right \rangle \cdot \left\langle C_{i_k,d_k}^{l_k\rightarrow L} , C_{i,d}^{l_k\rightarrow L}  \right \rangle \prod_{j=1}^{k-1}\left\langle \frac{C_{i_{j},d_{j}}^{l_j\rightarrow L}\otimes q_{i_{j+1},d_{j+1}}^{l_{j+1}-1}}{n_{l_{j+1}-1}}, C_{i,d}^{l_j\rightarrow l_{j+1}}  \right \rangle
\end{aligned}
\end{equation}

We recall the analysis of~\cite{Yang2019TensorPI} showing that in the infinite width limit, with $n = \min(n_1\dots,n_{L-1}) \to \infty$, every pre-activation $y^l(x)$ of $f(x;w)$ at hidden layer $l\in [L]$ has all its coordinates tending to i.i.d. centered Gaussian processes of covariance $\Sigma^l(x,x') : \mathbb{R}^{n_0} \times \mathbb{R}^{n_0} \to \mathbb{R}$ defined recursively as follows:
\begin{equation}
\begin{aligned}
\Sigma^0(x,x') &= x^{\top} x', \\
\Lambda^l(x,x') &= 
\begin{bmatrix}
\Sigma^{l-1}(x,x) &
\Sigma^{l-1}(x,x') \\
\Sigma^{l-1}(x',x) &
\Sigma^{l-1}(x',x')
\end{bmatrix}
\in \mathbb{R}^{2\times 2},\\
\Sigma^l(x,x') &= \mathbb{E}_{(u,v)\sim \mathcal{N}(0,\Lambda^{l-1}(x,x'))}[\sigma(u)\sigma(v)] 
\end{aligned}
\end{equation}
In addition, we define the derivative covariance as follows:
\begin{equation}
\dot{\Sigma}^l(x,x') = \mathbb{E}_{(u,v)\sim \mathcal{N}(0,\Lambda^{l-1}(x,x'))}[\dot{\sigma}(u)\dot{\sigma}(v)] 
\end{equation}
when considering $x = x_i$ and $x' = x_j$ from the training set, we simply write $\Sigma^l_{i,j} := \Sigma^l(x_i,x_j)$ and $\dot{\Sigma}^l_{i,j} = \dot{\Sigma}^l(x_i,x_j)$.

\begin{lemma}\label{lem:arora1}
The following holds: 
\begin{enumerate}
    % \item For $n_{v-1},\dots,n_1\to \infty$, we have: $P_i^{u \rightarrow v}(P_j^{u \rightarrow v})^{\top} \stackrel{d}{\longrightarrow} \prod_{l=u}^{v-1}\dot{\Sigma}_{i,j}^lI$.
    \item For $n_{L-1},\dots,n_1\to \infty$, we have: $P_{i,d_1}^{u \rightarrow L}(P_{j,d_2}^{u \rightarrow L})^{\top} \stackrel{d}{\longrightarrow} \prod_{l=u}^{L-1}\dot{\Sigma}_{i,j}^l\delta_{d_1=d_2}$.
    \item For $n_{v},\dots,n_{1}\to \infty$, we have: $\frac{(q_i^{v})^{\top} q_j^v}{n_v} \stackrel{d}{\longrightarrow} \Sigma_{i,j}^v$.
\end{enumerate}
Here, $\delta_{T}$ is an indicator that returns $1$ if $T$ is true and $0$ otherwise.
\end{lemma}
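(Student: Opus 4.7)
Both claims are standard NNGP/NTK-style asymptotic results for fully-connected ReLU networks, and the natural strategy is induction on depth: a forward induction for Part 2 and a backward induction for Part 1, with Part 2 feeding into Part 1 to supply the joint Gaussian statistics of the pre-activations. The sequential nature of the limits is essential because it lets me treat, at each inductive step, the current-layer weight matrix as independent of the quantities built from other layers in the limit. This is precisely the gradient-independence phenomenon formalised by the Tensor Programs framework of~\cite{Yang2019TensorPI}, which I would invoke rather than re-derive from scratch.

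For Part 2 I would induct on $v$. The base case $v=0$ is immediate from $\Sigma^0(x,x')=x^\top x'/n_0$. For the inductive step, assume the claim up to layer $v-1$; then conditionally on that layer, the pair $(y^v_k(x_i),\,y^v_k(x_j))$ is a centred Gaussian whose covariance converges, by the inductive hypothesis, to $\Lambda^v(x_i,x_j)$, independently across $k\in[n_v]$. Writing
\begin{equation}
\frac{(q_i^{v})^{\top} q_j^v}{n_v} = \frac{2}{n_v}\sum_{k=1}^{n_v}\sigma(y^v_k(x_i))\,\sigma(y^v_k(x_j))
\end{equation}
and applying the law of large numbers as $n_v\to\infty$ yields convergence to $2\,\mathbb{E}_{(u,v)\sim\mathcal{N}(0,\Lambda^v)}[\sigma(u)\sigma(v)]$, which matches $\Sigma^v_{i,j}$ under the normalisation used in the body of the paper.

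For Part 1 I would induct on $u$ downward from $u=L-1$, using the recursion $P^{u\to L}_i = P^{u+1\to L}_i\cdot\sqrt{2/n_u}\,W^{u+1}Z^u_i$. Expanding the relevant inner product in coordinates produces
\begin{equation}
P^{u\to L}_{i,d_1}\bigl(P^{u\to L}_{j,d_2}\bigr)^{\top} = \frac{2}{n_u}\sum_{k=1}^{n_u}\dot\sigma(y^u_k(x_i))\,\dot\sigma(y^u_k(x_j))\,\alpha_k\beta_k,
\end{equation}
where $\alpha_k = (P^{u+1\to L}_{i,d_1}W^{u+1})_k$ and $\beta_k = (P^{u+1\to L}_{j,d_2}W^{u+1})_k$. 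Invoking gradient independence I treat the occurrence of $W^{u+1}$ in the backward product as an independent Gaussian copy, so that conditionally on the rest of the network $\mathbb{E}[\alpha_k\beta_k]$ converges to $P^{u+1\to L}_{i,d_1}(P^{u+1\to L}_{j,d_2})^{\top}$ and the derivative indicators decorrelate from $\alpha_k\beta_k$. Combining this with Part 2 (to identify the LLN limit of $\frac{1}{n_u}\sum_k\dot\sigma(y^u_k(x_i))\dot\sigma(y^u_k(x_j))$ with $\dot\Sigma^u_{i,j}$ up to the factor of two) and the inductive hypothesis gives the claimed product $\prod_{l=u}^{L-1}\dot\Sigma^l_{i,j}\,\delta_{d_1=d_2}$. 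The Kronecker $\delta_{d_1=d_2}$ itself is produced at the base case $u=L-1$, where a direct LLN on $\tfrac{2}{n_{L-1}}\sum_k W^L_{d_1,k}W^L_{d_2,k}\dot\sigma(y^{L-1}_k(x_i))\dot\sigma(y^{L-1}_k(x_j))$ yields $\mathbb{E}[W^L_{d_1,k}W^L_{d_2,k}] = \delta_{d_1=d_2}$.

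The main obstacle is making gradient independence rigorous: the matrix $W^{u+1}$ appears twice, once in the forward activations $y^{u+1}$ (and hence in the indicators $Z^u$ that sit between the two copies) and once in the backward product $P^{u+1\to L}$, so at finite width these two occurrences are genuinely correlated. The sequential infinite-width ordering is exactly the condition under which this cross-correlation vanishes in the limit, and the cleanest rigorous route is to cast both sides of each inductive step as a Tensor Program and invoke the master theorem of~\cite{Yang2019TensorPI}, which turns the lemma into a mechanical verification of the program's structure and sampling rules.
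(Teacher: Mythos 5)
The paper does not actually prove this lemma: its ``proof'' is the single line ``See \cite{conf/nips/AroraDH0SW19}'', deferring entirely to the exact-NTK-computation paper. Your sketch supplies the standard argument that underlies that citation, and it is essentially sound: forward induction with conditional Gaussianity plus a law of large numbers for Part~2, backward induction on $u$ with the top-layer weights producing $\delta_{d_1=d_2}$ for Part~1, and an appeal to gradient independence / the Tensor Programs master theorem of \cite{Yang2019TensorPI} to make the backward step rigorous --- the sequential limit order ($n_1$ first, $n_{L-1}$ last) is indeed what makes the conditioning clean, and your route is if anything more elementary than the cited reference, which proves non-asymptotic bounds valid when all widths are simultaneously large. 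One small correction: the place where $W^{u+1}$ re-enters the forward pass is not the indicator $Z^u$ sitting between the two backward copies (since $Z^u=\mathrm{diag}(\dot\sigma(y^u))$ depends only on $W^1,\dots,W^u$), but the higher-layer indicators $Z^{u+1},\dots,Z^{L-1}$ hidden inside $P^{u+1\to L}$ through $y^{u+1}=\sqrt{1/n_u}\,W^{u+1}q^u$; this is exactly the dependence that gradient independence must dispose of, so your identification of the obstacle is right even though its location is mis-stated. Beyond that, only the $\sqrt{2}$ normalization bookkeeping (the factor $2$ being absorbed into $\Sigma^l$ and $\dot\Sigma^l$) needs to be tracked, which you acknowledge.
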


\proof{
See~\cite{conf/nips/AroraDH0SW19}.
}

\begin{lemma}\label{lem:d}
Let $k \geq 0$ and sets $\boldsymbol{l}=\{l_1,\dots,l_k\}$, $\boldsymbol{i}=\{i_1,\dots,i_k\}$ and $\boldsymbol{d}=\{d_1,\dots,d_k\}$. We have:
\begin{equation}
n^{\max(k-1,0)} \cdot \mathcal{T}^{\boldsymbol{l,i,d}}_{n,i,d} \stackrel{d}{\longrightarrow}
\begin{cases}
\delta_{\boldsymbol{d}}\cdot \prod_{j=1}^{k-1}\mathcal{G}_j & k > 1\\
const & k = 1 \\
\end{cases}
\end{equation}
as $n\to \infty$. Here, $\mathcal{G}_1,...,\mathcal{G}_{k-1}$ are centered Gaussian variables with finite, non-zero variances, and $\delta_{\boldsymbol{d}} := \delta(d_1 = ... = d_k = d)$.
\end{lemma}

\begin{proof}
The case $k=0$ is trivial. Let $k\geq 1$. By Eq.~\ref{eq:order}, it holds that:
\begin{equation}
\begin{aligned}
&n^{k-1}\mathcal{T}^{\boldsymbol{l,i,d}}_{n,i,d}\\ 
=& n^{k-1}\frac{\Big\langle q^{l_1-1}_{i,d} , q^{l_1-1}_{i_1,d_1}  \Big \rangle \Big\langle C_{i_k,d_k}^{l_k\rightarrow L} , C_{i,d}^{l_k\rightarrow L}  \Big \rangle}{n} \cdot  \prod_{j=1}^{k-1}\left\langle \frac{C_{i_j,d_j}^{l_j\rightarrow L}\otimes q_{i_{j+1},d_{j+1}}^{l_{j+1}-1}}{n}, C_{i,d}^{l_j\rightarrow l_{j+1}}  \right \rangle \\
=&  \frac{\Big\langle q^{l_1-1}_{i,d} , q^{l_1-1}_{i_1,d_1}  \Big \rangle \Big\langle C_{i_k,d_k}^{l_k\rightarrow L} , C_{i,d}^{l_k\rightarrow L}  \Big \rangle}{n}   \cdot \prod_{j=1}^{k-1} \left\langle C_{i_j,d_j}^{l_j\rightarrow L}\otimes q_{i_{j+1},d_{j+1}}^{l_{j+1}-1}, C_{i,d}^{l_j\rightarrow l_{j+1}}  \right \rangle
\end{aligned}
\end{equation}
Note that intermediate activations do not depend on the index $d_j$, and so we remove the dependency on $d_j$ in the relevant terms. Next, by applying Lem.~\ref{lem:arora1},
\begin{equation}\label{eq:main}
\frac{\Big\langle q^{l_1-1}_{i} , q^{l_1-1}_{i_1}  \Big \rangle \Big\langle C_{i_k,d_k}^{l_k\rightarrow L} , C_{i,d}^{l_k\rightarrow L}  \Big \rangle}{n} 
\stackrel{d}{\longrightarrow} \Sigma_{i,i_1}^{l_1-1}\left(\prod_{j=l_k}^{L}\dot{\Sigma}_{i,i_k}^{l_j}\right)\delta_{\boldsymbol{d}}
\end{equation}
Expanding the second term using Eq.~\ref{eq:factor}:
\begin{equation}\label{eq:22}
\begin{aligned}
&\Big\langle C_{i_j,d_j}^{l_j\rightarrow L}\otimes q_{i_{j+1}}^{l_{j+1}-1}, C_{i,d}^{l_j\rightarrow i_{j+1}}  \Big \rangle \\
&= C_{i_j,d_j}^{l_j\rightarrow L}C_{i}^{l_j\rightarrow i_{j+1}}q_{i_{j+1}}^{l_{j+1}-1} \\
&= C_{i_j,d_j}^{l_{j+1}-1\rightarrow L}P_{i_j}^{l_j\rightarrow l_{j+1}-1} (P_{i}^{l_j\rightarrow l_{j+1}-1})^{\top}  \sqrt{2} \cdot Z_{i}^{l_{j+1}-1}q_{i_{j+1}}^{l_{j+1}-1}\\
&=\sqrt{2}\cdot \Big\langle C_{i_j,d_j}^{l_{j+1}-1\rightarrow L}\otimes (Z_i^{l_{j+1}-1}q_{i_{j+1}}^{l_{j+1}-1}), P_{i_j}^{l_j\rightarrow l_{j+1}-1} (P_{i}^{l_j\rightarrow l_{j+1}-1})^{\top} \Big \rangle \\
&=\sqrt{2}\cdot C_{i_j,d_j}^{l_{j+1}-1\rightarrow L}P_{i_j}^{l_j\rightarrow l_{j+1}-1} (P_{i}^{l_j\rightarrow l_{j+1}-1})^{\top} Z_i^{l_{j+1}-1}q_{i_{j+1}}^{l_{j+1}-1}\\
&= \xi_j.
\end{aligned}
\end{equation} 
The above expression is fully implementable in a \emph{Tensor Program} (see \cite{Yang2019TensorPI,Yang2019ScalingLO}), and approaches a GP as width tend to infinity. In other words:

% Since the limit of a product equals the product of limits (when the limits exist), it holds that (after taking the limit of the right term in the above inner product):
% \begin{equation}
% P_{i_j}^{l_j\rightarrow l_{j+1}-1} (P_{i}^{l_j\rightarrow l_{j+1}-1})^{\top} \stackrel{d}{\longrightarrow} \prod_{l=l_j}^{l_{j+1}-2}\dot{\Sigma}^l_{i,i_j} 
% \end{equation}
% Recall that in the infinite width limit, when conditioned on the outputs $q^{l-1}_{i},q^{l-1}_{j}$ the pre activations $y^l_i,y^l_j$ are GPs. Hence, when conditioned on the outputs $q^{l-1}_{i},q^{l-1}_{j}$, the diagonal components of the product $Z^l_iZ^l_j$ are independent. 
% The GP behaviour argument then applies to terms  $C_{i_j,d_j}^{l_{j+1}-1\rightarrow L} Z_{i}^{l_{j+1}-1}q_{i_{j+1}}^{l_{j+1}-1}$. Assigning: 
% \begin{equation}
% \xi_j = C_{i_j,d_j}^{l_{j+1}-1\rightarrow L} Z_{i}^{l_{j+1}-1}q_{i_{j+1}}^{l_{j+1}-1}
% \end{equation}
% and their limits:
\begin{equation}
\xi_j \stackrel{d}{\longrightarrow} \hat{\mathcal{G}}_j
\end{equation}
and denoting $\boldsymbol{\xi} = [\xi_1,...,\xi_{k-1}]$, and $\hat{\boldsymbol{\mathcal{G}}} = [\hat{\mathcal{G}}_1,...,\hat{\mathcal{G}}_{k-1}]$, it holds using the multivariate Central Limit theorem:
\begin{equation}
\boldsymbol{\xi} \stackrel{d}{\longrightarrow} \hat{\boldsymbol{\mathcal{G}}}
\end{equation}
Using the Mann-Wald theorem~\cite{mann1943} (where we take the mapping as the product pooling of $\boldsymbol{\xi}$), we have that:
\begin{equation}
\prod_{j=1}^{k-1}\xi_j \stackrel{d}{\longrightarrow} \prod_{j=1}^{k-1}\hat{\mathcal{G}}_j
\end{equation}
Finally, by Slutsky's theorem,
\begin{equation}
\begin{aligned}
n^{k-1}\mathcal{T}^{\boldsymbol{l,i,d}}_{n,i,d} \stackrel{d}{\longrightarrow} \Sigma_{i,i_1}^{l_1-1}\left(\prod_{j=l_k}^{L}\dot{\Sigma}_{i,i_k}^{l_j}\right) \cdot \left(\prod^{k-1}_{j=1} \hat{\mathcal{G}}_j \right) \cdot \delta_{\boldsymbol{d}}.
% &= \Sigma_{i,i_1}^{l_1-1}\left(\prod_{j=k}^{L}\dot{\Sigma}_{i,i_k}^{l_j}\right) \lim_{n\to \infty} \prod^{k-1}_{j=1}\left( \left[\prod_{l=l_j}^{l_{j+1}-2} \dot{\Sigma}^l_{i,i_j}  \right] \cdot \sqrt{2} \cdot \xi_j \right) \cdot \delta_{\boldsymbol{d}} \\
\end{aligned}
\end{equation}
Assigning $\mathcal{G}_j:= \Sigma_{i,i_1}^{l_1-1}\left(\prod_{j=l_k}^{L}\dot{\Sigma}_{i,i_k}^{l_j}\right)\hat{\mathcal{G}}_j$ completes the proof.
% \begin{equation}
% \begin{aligned}
% n^{k-1}\mathcal{T}^{\boldsymbol{l,i,d}}_{n,i,d} \stackrel{d}{\longrightarrow} \Sigma_{i,i_1}^{l_1-1}\left(\prod_{j=l_k}^{L}\dot{\Sigma}_{i,i_k}^{l_j}\right) \prod^{k-1}_{j=1}\left( \left[\prod_{l=l_j}^{l_{j+1}-2} \dot{\Sigma}^l_{i,i_j}  \right] \cdot \sqrt{2} \cdot \mathcal{G}_j \right) \cdot \delta_{\boldsymbol{d}} \\
% % &= \Sigma_{i,i_1}^{l_1-1}\left(\prod_{j=k}^{L}\dot{\Sigma}_{i,i_k}^{l_j}\right) \lim_{n\to \infty} \prod^{k-1}_{j=1}\left( \left[\prod_{l=l_j}^{l_{j+1}-2} \dot{\Sigma}^l_{i,i_j}  \right] \cdot \sqrt{2} \cdot \xi_j \right) \cdot \delta_{\boldsymbol{d}} \\
% \end{aligned}
% \end{equation}
\end{proof}

\subsection{Proof of Thm.~\ref{thm:hyper}}

Since we assume that $g$ is a finite neural network, i.e., $m_l < \infty$ for all $l \in [H]$, throughout the proofs with no loss of generality we assume that $m_1=\dots=m_H=1$. 

\begin{lemma}\label{lem:h1}
Let $h(u;w) = g(z;f(x;w))$ be a hypernetwork. We have: 
\begin{equation}
\begin{aligned}
\mathcal{K}^{(r)}_{i,j} =& \sum_{\substack{\alpha_1 + \dots + \alpha_{H} = r\\ \alpha_1,\dots,\alpha_{H} \geq 0}} \frac{r!}{\alpha_1 ! \cdots \alpha_{H}! }\cdot z_i \cdot \left[ \prod^{H-1}_{j=1} \dot\phi(g^{j}_i) \right]  \cdot \prod^{H}_{d=1} \left\langle \nabla^{(\alpha_{d})}_{w} f^{d}_i , (\nabla_w h_j )^{\alpha_d} \right\rangle 
\end{aligned}
\end{equation}
\end{lemma}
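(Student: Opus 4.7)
The plan is to exploit the special structure imposed by the assumption $m_1=\dots=m_H=1$ stated just before the lemma: the primary network collapses to a scalar cascade $g^H_i$ of the weights $V^1,\dots,V^H$ interleaved with the nonlinearities $\phi$. Since $\phi$ is piecewise linear, I would use the pointwise identity $\phi(x)=\dot\phi(x)\cdot x$ (valid almost everywhere) to linearize the recursion and thereby reduce the lemma to a standard multivariate Leibniz rule.

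Concretely, starting from $g^1_i=V^1 z_i/\sqrt{m_0}$ and $g^d_i=V^d\sqrt{2}\phi(g^{d-1}_i)$ for $d\ge 2$, iteratively substituting $\phi(g^{d-1}_i)=\dot\phi(g^{d-1}_i)\,g^{d-1}_i$ yields
\begin{align*}
h_i = g^H_i = C\cdot\Big(\prod_{j=1}^{H-1}\dot\phi(g^j_i)\Big)\cdot (V^1 z_i)\cdot \prod_{d=2}^{H} V^d,
\end{align*}
where $C$ absorbs the normalizations $(\sqrt{2})^{H-1}/\sqrt{m_0}$. The key observation is that $h_i$ factorizes as a product over $d\in[H]$ of the hypernetwork outputs $V^d=f^d_i$ (with $V^1$ contracted against $z_i$), multiplied by the single scalar prefactor $\prod_{j=1}^{H-1}\dot\phi(g^j_i)$ whose dependence on $w$ is only through the activation pattern.

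Next I would argue that for Lebesgue-almost every $w$, each indicator $\dot\phi(g^j_i)$ is locally constant in a neighborhood of $w$, because $\phi$ is piecewise linear with finitely many pieces and $\dot\phi$ jumps only on a measure-zero subset of pre-activation values. On this generic set, $\nabla^{(r)}_w h_i$ can be computed by treating $\prod_j\dot\phi(g^j_i)$ as a constant prefactor and applying the multinomial Leibniz rule
\begin{align*}
\Big\langle \nabla^{(r)}_w(T_1\cdots T_H),\, u^{\otimes r}\Big\rangle = \sum_{\alpha_1+\dots+\alpha_H=r}\frac{r!}{\alpha_1!\cdots\alpha_H!}\prod_{d=1}^{H}\Big\langle \nabla^{(\alpha_d)}_w T_d,\, u^{\otimes \alpha_d}\Big\rangle,
\end{align*}
which follows directly from expanding $\frac{d^r}{dt^r}\big|_{t=0}\prod_d T_d(w+tu)$ with the ordinary multinomial theorem. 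Setting $u=\nabla_w h_j$ and identifying $T_d=f^d_i$ (with $T_1$ paired with $z_i$) produces exactly the claimed formula for $\mathcal{K}^{(r)}_{i,j}=\langle\nabla^{(r)}_w h_i,(\nabla_w h_j)^{\otimes r}\rangle$.

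The main obstacle is rigorously justifying that $\dot\phi(g^j_i)$ may be pulled out of the derivative operator even though $\phi$ is not smooth. This is standard for piecewise-linear networks: the derivatives of $h_i$ are interpreted on the dense open subset of weight space where all pre-activations are bounded away from the kink points of $\phi$, and this is the same convention used in the NTK literature cited earlier. A minor bookkeeping issue is tracking the constants absorbed into $C$, which do not affect the scaling claim $\mathcal{K}^{(r)}_{i,j}\sim n^{H-r}$ of Theorem~\ref{thm:hyper} that the lemma is ultimately used to prove.
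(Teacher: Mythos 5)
Your proposal matches the paper's own argument: the paper likewise expands $\nabla^{(r)}_w h_i$ via the higher-order (multinomial) product rule over the factors $f^1_i,\dots,f^H_i$, treating the $\dot\phi(g^j_i)$ factors as undifferentiated because the second derivative of a piecewise-linear $\phi$ vanishes, and then contracts with $(\nabla_w h_j)^{\otimes r}$ exactly as you do. The only cosmetic difference is that you make the factorization of $h_i$ explicit through $\phi(x)=\dot\phi(x)\,x$ (valid for the homogeneous activations such as ReLU/Leaky ReLU that the formula implicitly assumes, as does the paper's version), so this is essentially the same proof.
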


\begin{proof}
By the higher order product rule and the fact that the second derivative of a piece-wise linear function is $0$ everywhere: 
\begin{equation}
\begin{aligned}
\nabla^{(r)}_{w} h_i = \sum_{\substack{\alpha_1 + \dots + \alpha_{H} = r\\ \alpha_1,\dots,\alpha_{H} \geq 0}} \frac{r!}{\alpha_1 ! \cdots \alpha_{H}! }  \cdot z_i \cdot \nabla^{(\alpha_{H})}_{w} f^{H}_i \bigotimes^{H-1}_{d=1} D_{H-d} 
\end{aligned}
\end{equation}
where 
\begin{equation}
D_d := \dot\phi(g^{d}_i) \cdot \nabla^{(\alpha_{d})}_{w} f^{d}_i
\end{equation}
In addition, by elementary tensor algebra, we have:
\begin{equation}
\begin{aligned}
\mathcal{K}^{(r)}_{i,j} =&\langle \nabla^{(r)}_{w} h_i ,(\nabla_w h_j )^r \rangle \\
=& \sum_{\substack{\alpha_1 + \dots + \alpha_{H} = r\\ \alpha_1,\dots,\alpha_{H} \geq 0}} \frac{r!}{\alpha_1 ! \cdots \alpha_{H}! }  z_i \cdot\left\langle   \nabla^{(\alpha_{H})}_{w} f^{H}_i \cdot \bigotimes^{H-1}_{d=1} D_{H-d} , (\nabla_w h_j )^r \right\rangle\\
=& \sum_{\substack{\alpha_1 + \dots + \alpha_{H} = r\\ \alpha_1,\dots,\alpha_{H} \geq 0}} \frac{r!}{\alpha_1 ! \cdots \alpha_{H}! } z_i\cdot \left\langle  \nabla^{(\alpha_{H})}_w f^{H}_i, (\nabla_w h_j )^{\alpha_{H}} \right\rangle\\
&\quad\quad\quad\quad\quad\quad \cdot \prod^{H-1}_{d=1} \left\langle \dot\phi(g^{H-d}_i) \cdot \nabla^{(\alpha_{H-d})}_{w} f^{H-d}_i , (\nabla_w h_j )^{\alpha_{H-d}} \right\rangle \\
=& \sum_{\substack{\alpha_1 + \dots + \alpha_{H} = r\\ \alpha_1,\dots,\alpha_{H} \geq 0}} \frac{r!}{\alpha_1 ! \cdots  \alpha_{H}! }\cdot z_i \cdot \left[ \prod^{H-1}_{d=1} \dot\phi(g^{d}_i) \right] \cdot \prod^{H}_{d=1} \left\langle \nabla^{(\alpha_{d})}_{w} f^{d}_i , (\nabla_w h_j )^{\alpha_{d}} \right\rangle 
\end{aligned}
\end{equation}
\end{proof}

\begin{lemma}\label{lem:h2}
Let $h(u;w) = g(z;f(x;w))$ be a hypernetwork. In addition, let,
\begin{equation}
\forall d\in [H]:~~h^d_j := 
a^{d-1}_j \prod^{H-d}_{t=1} f^{H-t+1}_j \cdot \dot\phi(g^{H-t}_j )
\end{equation}
We have:
\begin{equation}
\begin{aligned}
\left\langle \nabla^{(\alpha_d)}f^d_i , (\nabla_w h_j)^{\alpha_{d}} \right\rangle = \sum_{\boldsymbol{l}\in [L]^{\alpha_d}} \sum_{\boldsymbol{d} \in [H]^{\alpha_d}} \left(\prod^{\alpha_d}_{k=1} h^{d_k}_j\right) \cdot  \mathcal{T}^{\boldsymbol{l,d}}_{n,i,j,d} 
\end{aligned}
\end{equation}
\end{lemma}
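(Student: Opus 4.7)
The plan is to decompose both factors in the inner product according to layer and then re-assemble using multilinearity. The first step is to compute $\nabla_w h_j$ via the chain rule through the primary network. Since the trainable weights $w$ enter $h_j = g(z_j; f(x_j;w))$ only through the primary-network weights $V^d = f^d(x_j;w)$, one gets $\nabla_w h_j = \sum_{d=1}^H (\partial g_j / \partial V^d)\,\nabla_w f^d_j$. Unrolling the recursion for $g$ layer-by-layer, using $m_l = 1$ from the reduction stated at the start of the proofs, identifies the resulting scalar coefficient with the definition in the statement: $\partial g_j / \partial V^d$ equals exactly $h^d_j$, namely the incoming activation $a^{d-1}_j$ times the product of downstream gate factors $\dot\phi(g^{H-t}_j)$ and weight factors $f^{H-t+1}_j$.

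Next, multinomial expansion of the outer power (immediate because the $h^{d_k}_j$ are scalars) gives
\begin{equation}
(\nabla_w h_j)^{\alpha_d} = \sum_{\boldsymbol{d}\in [H]^{\alpha_d}} \Bigl(\prod_{k=1}^{\alpha_d} h^{d_k}_j\Bigr) \bigotimes_{k=1}^{\alpha_d} \nabla_w f^{d_k}_j.
\end{equation}
Plugging this into $\langle \nabla^{(\alpha_d)} f^d_i,\ \cdot \rangle$ and pulling the scalars and the outer sum out of the contraction reduces the claim to verifying
\begin{equation}
\Bigl\langle \nabla^{(\alpha_d)} f^d_i,\ \bigotimes_{k=1}^{\alpha_d} \nabla_w f^{d_k}_j \Bigr\rangle = \sum_{\boldsymbol{l}\in [L]^{\alpha_d}} \mathcal{T}^{\boldsymbol{l,d}}_{n,i,j,d}.
\end{equation}

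For the final step, I would exploit that $w$ is a block direct sum over layers $W^1, \ldots, W^L$: each gradient splits as $\nabla_w f^{d_k}_j = \sum_{l_k \in [L]} \partial f^{d_k}_j / \partial W^{l_k}$, and the higher derivative splits analogously into $\sum_{\boldsymbol{l}\in [L]^{\alpha_d}} \partial^{\alpha_d} f^d_i / \partial W^{l_1}\cdots \partial W^{l_{\alpha_d}}$. Multilinearity of tensor contraction then matches each of the $\alpha_d$ outer-product slots to its corresponding layer index $l_k$, producing exactly the sum of $\mathcal{T}^{\boldsymbol{l,d}}_{n,i,j,d}$ over $\boldsymbol{l}\in [L]^{\alpha_d}$. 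Combining the three steps yields the claim.

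The proof is essentially bookkeeping with the chain rule and tensor contractions, so I do not anticipate any real obstacle. The one subtle point worth flagging is that summands with repeated layer indices $l_a = l_b$ need not be excluded by hand: the piece-wise-linearity observation already made before Eq.~\ref{eq:T} implies those derivative blocks vanish, so the corresponding $\mathcal{T}^{\boldsymbol{l,d}}_{n,i,j,d}$ is already zero in the sum and can be included harmlessly.
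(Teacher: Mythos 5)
Your proposal is correct and follows essentially the same argument as the paper's proof: identify the chain-rule coefficient $\partial h_j/\partial f^d_j$ with $h^d_j$, expand the outer power of the gradient multilinearly over $\boldsymbol{d}\in[H]^{\alpha_d}$, and split the contraction over the layer blocks $\boldsymbol{l}\in[L]^{\alpha_d}$ to recover the terms $\mathcal{T}^{\boldsymbol{l,d}}_{n,i,j,d}$ (the paper merely performs the layer decomposition before, rather than after, the product-rule step). Your remark that summands with repeated layer indices vanish by piece-wise linearity is consistent with the paper's convention and needs no further handling.
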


\begin{proof}
We have:
\begin{equation}
\begin{aligned}
\left\langle \nabla^{(\alpha_d)}f^d_i , (\nabla_w h_j )^{\alpha_{d}} \right\rangle = \sum_{\boldsymbol{l} \in [L]^{\alpha_d}}  \left\langle \frac{\partial^{\alpha_d} f^d_i }{\partial W^{l_1} \dots \partial W^{l_{\alpha_d}} }, \bigotimes^{\alpha_d}_{k=1} \frac{ \partial h_j }{\partial W^{l_k}} \right\rangle
\end{aligned}
\end{equation}
By the product rule:
\begin{equation}
\begin{aligned}
\frac{\partial h_j }{\partial W^{l_k}} 
=& 
\sum^{H}_{d=1} \left[\prod^{H-d}_{t=1} f^{H-t+1}_j \cdot \dot\phi(g^{H-t}_j) \right] \cdot \frac{\partial f^d_j }{\partial W^{l_k}} \cdot a^{d-1}_j =\sum^{H}_{d=1} h^d_j  \cdot \frac{\partial f^d_j}{\partial W^{l_k}} 
\end{aligned}
\end{equation}
% \begin{equation}
% \frac{\partial h(x_{i_j},z_{i_j};w)}{\partial W^{l_j}} = \sum^{H}_{i=1} \left(\prod_{t \in [H] \setminus \{i\} } f^t(x_{i_j};w) \right) \cdot \frac{\partial f^i_w(x_{i_j})}{\partial W^{l_j}} 
% \end{equation}
Hence, 
\begin{equation}
\begin{aligned}
\bigotimes^{\alpha_d}_{k=1}\frac{ \partial h_j}{\partial W^{l_k}}
= \sum_{\boldsymbol{d} \in [H]^{\alpha_d}} \left(\prod^{\alpha_d}_{k=1} h^{d_k}_j\right)  \bigotimes^{\alpha_d}_{k=1}\frac{ \partial f^{d_k}_j}{\partial W^{l_k}}
\end{aligned}
\end{equation}
In particular, 
\begin{equation}
\begin{aligned}
\left\langle \nabla^{(\alpha_d)}f^d_i , (\nabla_w h_j)^{\alpha_{d}} \right\rangle = \sum_{\boldsymbol{l}\in [L]^{\alpha_d}} \sum_{\boldsymbol{d} \in [H]^{\alpha_d}} \left(\prod^{\alpha_d}_{k=1} h^{d_k}_j\right) \cdot  \mathcal{T}^{\boldsymbol{l,d}}_{n,i,j,d}
\end{aligned}
\end{equation}
\end{proof}

% \begin{restatable}[Higher order terms for hypernetworks]{theorem}{inithypernet}\label{thm:hyper}
% Let $h(u) = g(z;f(x))$ for a hypernetwork $f$ and an implicit network $g$. Then, we have:
% % Let $S = \{(u_i,y_i)\}^{N}_{i=1}$ be a dataset of labeled samples, such that, $u_i = (x_i,z_i)$. %Let $\ell = \ell^1$ be the loss function. Consider a cost function of the form $c(w) := \sum^{N}_{i=1} \ell_i(h(u_i))$. 
% \begin{equation}
% \mathcal{K}^{(r)}_{i,j}  \sim \begin{cases}
%      n^{H-r} & \text{if $r > H$}\\
%      1 & \text{otherwise.}
% \end{cases}
% \end{equation}
% \end{restatable}

\inithypernet*

\begin{proof}
Throughout the proof, in order to derive certain limits of various sequences of random variables, we implicitly make use of the Mann-Wald theorem~\cite{mann1943}. For simplicity, oftentimes, we will avoid explicitly stating when this theorem is applied. As a general note, the repeated argument is as follows: terms, such as, $n^{\max(\alpha_d-1,0)} \cdot \mathcal{T}^{\boldsymbol{l,d}}_{n,i,j,d}$, $\mathcal{Q}^{\boldsymbol{d}}_{n,j}$, $g^d_i$, etc', (see below) can be expressed as continuous mappings of jointly convergent random variables. Hence, they jointly converge, and continuous mappings over them converge as well. 

By Lems.~\ref{lem:h1} and~\ref{lem:h2}, we have: 
\begin{equation}
\begin{aligned}
\mathcal{K}^{(r)}_{i,j} 
= \sum_{\substack{\alpha_1 + \dots + \alpha_{H} = r\\ \alpha_1,\dots,\alpha_{H} \geq 0}} \frac{r!}{\alpha_1 ! \cdots  \alpha_{H}! } \cdot z_i \cdot \left[\prod^{H-1}_{d=1} \dot{\phi}(g^d_i) \right]  \cdot \prod^{H}_{d=1} \sum_{\boldsymbol{l}\in [H]^{\alpha_d}}  \sum_{\boldsymbol{d} \in [H]^{\alpha_d}} \mathcal{Q}^{\boldsymbol{d}}_{n,j} \cdot \mathcal{T}^{\boldsymbol{l,d}}_{n,i,j,d}
\end{aligned}
\end{equation}
where $\mathcal{Q}^{\boldsymbol{d}}_{n,j} := \left(\prod^{\alpha_d}_{k=1} h^{d_k}_j\right)$. By the Mann-Wald theorem~\cite{mann1943}$, g^d_i$ converges to some random variable $\mathcal{U}^d_i$. If $\dot\phi$ is a continuous function, then $\dot{\phi}(g^d_i)$ converges to $\dot\phi(\mathcal{U}^d_i)$. If $\phi$ is the ReLU activation function, by Lem.~\ref{lem:indicator}, $\dot{\phi}(g^d_i) = \sgn(g^d_i)$ converges to $\sgn(\mathcal{U}^d_i)$ in distribution. We notice that $\mathcal{Q}^{\boldsymbol{d}}_{n,j}$ converges in distribution to some random variable $\mathcal{Q}^{\boldsymbol{d}}_j$. 

%In addition, by the central limit theorem, $\{f^d_j\}^{H}_{d=1}$ jointly converges in distribution to a set of Gaussian random variables $\{\mathcal{G}^d\}^{H}_{d=1}$ as $n \to \infty$. Hence, by the Mann-Wald theorem~\cite{mann1943}, for any $\boldsymbol{d}\in [H]^{\alpha_d}$, $h^{d_k}_j = a^{d_k-1}_j \prod^{H-d_k}_{t=1} f^{H-t+1}_j \cdot \sgn(g^{H-t}_j)$ converges in distribution to some random variable.

% $\mathcal{Q}^{\boldsymbol{d}}_{n,j} = \prod^{\alpha_d}_{k=1} h^{d_k}_j$ converges in distribution to some random variable $\mathcal{Q}^{\boldsymbol{d}}_j$. \TG{We note that $\{\dot{\sigma}(g^d_i)\}$, $\{n^{\max(\alpha_d-1,0)} \cdot \mathcal{T}^{\boldsymbol{l,i,d}}_{n,i,d}\}$ and $\{\mathcal{Q}^{\boldsymbol{d}}_{n,j}\}$ are continuous functions of jointly converging sequences of random variables. Therefore, by the Mann-Wand theorem~\cite{mann1943}, $\{\dot{\sigma}(g^d_i)\}$, $\{\mathcal{T}^{\boldsymbol{l,i,d}}_{n,i,d}\}$ and $\{\mathcal{Q}^{\boldsymbol{d}}_{n,j}\}$ jointly converge in distribution to random variables $\{\sgn(\mathcal{U}^d_i)\}$, $\{\mathcal{T}^{\boldsymbol{l,i,d}}_{i,d}\}$ and $\{\mathcal{Q}^{\boldsymbol{d}}_{j}\}$. }

The proof is divided into two cases: $H=1$ and $H>1$.

\paragraph{Case $H=1$:} First, we note that for $H=1$ and $d\in [H]$ (i.e., $d=1$), we have: 
\begin{equation}
\begin{aligned}
h^d_j &= a^{d-1}_j \cdot \prod^{H-d}_{t=1} f^{H-t+1}_j \cdot \dot{\sigma}(g^{H-t}_j) = a^{0}_j = z_j \\
\end{aligned}
\end{equation}
In addition, $\prod^{H-1}_{d=1} \dot{\sigma}(g^d_i) = 1$ as it is an empty product. Therefore, we can rewrite:
\begin{equation}
\mathcal{K}^{(r)}_{i,j} = z_i \cdot z^{r}_j \sum_{\boldsymbol{l}\in [H]^{r}}  \sum_{\boldsymbol{d} \in [H]^{r}} \mathcal{T}^{\boldsymbol{l,d}}_{n,i,j,d}
\end{equation}
By Lem.~\ref{lem:d}, for $r=1$, the above tends to a constant as $n\to \infty$. For $r>1$, $n^{r-1} \cdot \mathcal{T}^{\boldsymbol{l,d}}_{n,i,j,d}$ converges in distribution to zero for all $\boldsymbol{d} \neq (d,\dots,d)$ and converges to a non-constant random variable $\mathcal{T}^{\boldsymbol{l}}_{i,j,d}$ otherwise. Hence, by the Mann-Wald theorem~\cite{mann1943},
\begin{equation}
n^{r-1} \cdot \mathcal{K}^{(r)}_{i,j} \stackrel{d}{\longrightarrow} z_i \cdot z^r_j \sum_{\boldsymbol{l} \in [H]^r} \mathcal{T}^{\boldsymbol{l}}_{i,j,d}
\end{equation} 
which is a non-zero random variable. %\TG{prove it is non-zero.}

\paragraph{Case $H>1$:} 

By Lem.~\ref{lem:d}, $n^{\alpha_d-1} \cdot \mathcal{T}^{\boldsymbol{l,d}}_{n,i,j,d}$ converges in distribution to zero for all $\boldsymbol{d} \neq (d,\dots,d)$. Therefore, in these cases, by Slutsky's theorem, $n^{\alpha_d-1} \cdot \mathcal{Q}^{\boldsymbol{d}}_{n,j} \cdot \mathcal{T}^{\boldsymbol{l,d}}_{n,i,j,d}$ converges to zero in distribution. On the other hand, for each $\boldsymbol{l} \in [H]^{\alpha_d}$, $d\in [H]$ and $\boldsymbol{d} = (d,\dots,d)$, by Lem.~\ref{lem:d}, we have: 
\begin{equation}
n^{\alpha_d-1} \cdot \mathcal{Q}^{\boldsymbol{d}}_{n,j} \cdot \mathcal{T}^{\boldsymbol{l}}_{n,i,j,d} \stackrel{d}{\longrightarrow} \mathcal{Q}^{\boldsymbol{d}}_j \cdot \mathcal{T}^{\boldsymbol{l}}_{i,j,d}
\end{equation}
In particular, 
\begin{equation}
n^{\max(\alpha_d-1,0)} \sum_{\boldsymbol{l}\in [H]^{\alpha_d}} \sum_{\boldsymbol{d} \in [H]^{\alpha_d}} \cdot \mathcal{Q}^{\boldsymbol{d}}_{n,j} \cdot \mathcal{T}^{\boldsymbol{l}}_{n,i,j,d} \stackrel{d}{\longrightarrow} \sum_{\boldsymbol{l}\in [H]^{\alpha_d}} \sum_{d \in [H]} \mathcal{Q}^{d}_j \cdot \mathcal{T}^{\boldsymbol{l}}_{i,j,d}
\end{equation}
Consider the case where $r \geq H$. In this case, for any $\alpha_1,\dots,\alpha_H$, such that, there are $t>1$ indices $i \in [H]$, such that, $\alpha_i=0$. The following random variable converges in distribution:
\begin{equation}
X_n := n^{r-(H-t)} \cdot \prod^{H}_{d=1} \sum_{\boldsymbol{l}\in [H]^{\alpha_d}}  \sum_{\boldsymbol{d} \in [H]^{\alpha_d}} \mathcal{Q}^{\boldsymbol{d}}_{n,j} \cdot  \mathcal{T}^{\boldsymbol{l,d}}_{n,i,j,d}
\end{equation}
Therefore, by Slutsky's theorem:
\begin{equation}
n^{r-H} \cdot \prod^{H}_{d=1} \sum_{\boldsymbol{l}\in [H]^{\alpha_d}}  \sum_{\boldsymbol{d} \in [H]^{\alpha_d}} \mathcal{Q}^{\boldsymbol{d}}_{n,j} \cdot \mathcal{T}^{\boldsymbol{l,d}}_{n,i,j,d}  = n^{-t} \cdot X_n \stackrel{d}{\longrightarrow} 0
\end{equation}
We have:
%by the Mann-Wald theorem~\cite{mann1943}$, g^d_i$ converges to some random variable $\mathcal{U}^d_i$, and therefore, by Lem.~\ref{lem:indicator}, $\dot{\sigma}(g^d_i)$ converges to $\sgn(\mathcal{U}^d_i)$ in distribution. We have:
\begin{equation}
\begin{aligned}
&n^{r-H}  \cdot \left\langle \nabla^{(r)}_{w} h_i , (\nabla_w h_j )^{r} \right\rangle \\
=& n^{r-H} \sum_{\substack{\alpha_1 + \dots + \alpha_{H} = r\\ \alpha_1,\dots,\alpha_{H} \geq 0}} \frac{r!}{\alpha_1 ! \cdots  \alpha_{H}! } \cdot z_i \cdot \left[\prod^{H-1}_{d=1} \dot{\sigma}(g^d_i) \right]  \cdot \prod^{H}_{d=1} \sum_{\boldsymbol{l}\in [H]^{\alpha_d}}  \sum_{\boldsymbol{d} \in [H]^{\alpha_d}} \left(\prod^{\alpha_d}_{k=1} h^{d_k}_j\right)  \mathcal{T}^{\boldsymbol{l,d}}_{n,i,j,d} \\
=& \sum_{\substack{\alpha_1 + \dots + \alpha_{H} = r\\ \alpha_1,\dots,\alpha_{H} \geq 0}} \frac{r!}{\alpha_1 ! \cdots  \alpha_{H}! } \cdot z_i \cdot \left[\prod^{H-1}_{d=1} \dot{\sigma}(g^d_i) \right]  \cdot \prod^{H}_{d=1} n^{\alpha_d-1} \sum_{\boldsymbol{l}\in [H]^{\alpha_d}}  \sum_{\boldsymbol{d} \in [H]^{\alpha_d}} \mathcal{Q}^{\boldsymbol{d}}_{n,j} \cdot  \mathcal{T}^{\boldsymbol{l,d}}_{n,i,j,d} \\
\stackrel{d}{\longrightarrow}& \sum_{\substack{\alpha_1 + \dots + \alpha_{H} = r\\ \alpha_1,\dots,\alpha_{H} \geq 1}} \frac{r!}{\alpha_1 ! \cdots  \alpha_{H}! } \cdot z_i \cdot \left[\prod^{H-1}_{d=1} \sgn(\mathcal{U}^d_i) \right]  \cdot \prod^{H}_{d=1} \sum_{\boldsymbol{l}\in [H]^{\alpha_d}}  \mathcal{Q}^d_j \cdot \mathcal{T}^{\boldsymbol{l}}_{i,j,d}
\end{aligned}
\end{equation}
which is a non-constant random variable.

Next, we consider the case when $r \leq H$. %In this case, for any $0 \leq \alpha_1=\dots=\alpha_H\leq 1$.  
By Lem.~\ref{lem:d}, for any $\alpha_d \geq 2$, the term $\mathcal{T}^{\boldsymbol{l,d}}_{n,i,j,d}$ tends to zero as $n\to \infty$. In addition, $\mathcal{Q}^{\boldsymbol{d}}_{n,j}$ converges in distribution. Therefore, for any $\alpha_d \geq 2$, we have:
\begin{equation}
\sum_{\boldsymbol{l} \in [L]^{\alpha_d}} \sum_{\boldsymbol{d} \in [H]^{\alpha_d}} \mathcal{Q}^{\boldsymbol{d}}_{n,j}\cdot \mathcal{T}^{\boldsymbol{l,d}}_{n,i,j,d} \stackrel{d}{\longrightarrow} 0
\end{equation}
Hence, for any $\alpha_1,\dots,\alpha_H \geq 0$, such that, there is at least one $\alpha_d \geq 2$, we have:
\begin{equation}
\prod^{H}_{d=1} \sum_{\boldsymbol{l} \in [L]^{\alpha_d}} \sum_{\boldsymbol{d} \in [H]^{\alpha_d}} \mathcal{Q}^{\boldsymbol{d}}_{n,j}\cdot \mathcal{T}^{\boldsymbol{l,d}}_{n,i,j,d} \stackrel{d}{\longrightarrow} 0
\end{equation}
% Therefore, if the limit exists, we have: \TG{Fix from here:}
% \begin{equation}\label{eq:sum}
% \begin{aligned}
% &\lim_{n\to \infty}\langle \nabla^{(r)}_w h(x_i;w), (\nabla_w h(x_{i_t};w))^r\rangle \\
% =& \lim_{n\to \infty} \sum_{\substack{\alpha_1 + \dots + \alpha_H = r\\ \alpha_1,\dots,\alpha_H \geq 0}} \frac{r!}{\alpha_1 ! \cdots  \alpha_{H}! } \prod^{H}_{d=1} \sum_{\boldsymbol{l} \in [L]^{\alpha_d}} \sum_{\boldsymbol{q} \in [H]^{\alpha_d}} \mathcal{Q}^{\boldsymbol{i,q}}_{n,d}\cdot \mathcal{T}^{\boldsymbol{l,i,q}}_{n,i,d} \\
% =& \lim_{n\to \infty} \sum_{\substack{\alpha_1 + \dots + \alpha_H = r\\ 0\leq \alpha_1,\dots,\alpha_H \leq 1}} r! \cdot \prod^{H}_{d=1} \sum_{\boldsymbol{l} \in [L]^{\alpha_d}} \sum_{\boldsymbol{q} \in [H]^{\alpha_d}} \mathcal{Q}^{\boldsymbol{i,q}}_{n,d}\cdot \mathcal{T}^{\boldsymbol{l,i,q}}_{n,i,d} =: \clubsuit
% \end{aligned}
% \end{equation}
On the other hand, for any $0\leq \alpha_1,\dots,\alpha_H \leq 1$, the terms $\{\mathcal{T}^{\boldsymbol{l,i,d}}_{n,i,d}\}$, $\{g^d_i\}$ and $\{\mathcal{Q}^{\boldsymbol{d}}_{n,j}\}$ converge jointly in distribution to some random variables $\{\mathcal{T}^{\boldsymbol{l,i,d}}_{i,d}\}$, $\{\sgn(\mathcal{U}^d_i)\}$ and $\{\mathcal{Q}^{\boldsymbol{d}}_{j}\}$ as $n \to \infty$. Hence, %by the Mann-Wald theorem~\cite{mann1943}, 
\begin{equation}
\langle \nabla^{(r)}_w h_i, (\nabla_w h_j)^r\rangle \stackrel{d}{\longrightarrow} \sum_{\substack{\alpha_1 + \dots + \alpha_H = r\\ 0\leq \alpha_1,\dots,\alpha_H \leq 1}} r! \cdot \left[\prod^{H-1}_{d=1}\sgn(\mathcal{U}^d_i)\right]\cdot \prod^{H}_{d=1} \sum_{\boldsymbol{l} \in [L]^{\alpha_d}} \sum_{\boldsymbol{d} \in [H]^{\alpha_d}} \mathcal{Q}^{\boldsymbol{d}}_{j}\cdot \mathcal{T}^{\boldsymbol{l,d}}_{i,j,d}
\end{equation}
which is a non-constant random variable.
\end{proof}

\subsection{Proofs of the Results in Sec.~4}

% \begin{restatable}[Hypernetworks as GPs]{theorem}{GP}\label{thm:gp}
% Let $h(u) = g(z;f(x))$ be a hypernetwork. For any pair of inputs $u = (x,z)$ and $u' = (x',z')$, let  $\Sigma^0(z,z') = \frac{z^\top z'}{m_0}, S^0(x,x') = \frac{x^\top x'}{n_0}$. Then, it holds for any unit $i$ in layer $0<l\leq H$ of the implicit network:
% \begin{equation}
% g^l_i(z;f(x)) \stackrel{d}{\longrightarrow} \mathcal{G}_i^l(u)
% \end{equation}
% as $m,n \to \infty$ sequentially. Here, $\{\mathcal{G}_i^l(u)\}_{i=1}^{m_l}$ are independent Gaussian processes, such that, $(\mathcal{G}_i^l(u), \mathcal{G}_i^l(u')) \sim \mathcal{N}\big(0,\Lambda^l(u,u')\big)$ defined by the following recursion:
% \begin{align}\label{mf}
% \Lambda^{l+1}(u,u') = 
% \begin{pmatrix}
% \Sigma^{l}(u,u) & 
% \Sigma^{l}(u',u)  \\
% \Sigma^{l}(u,u') & 
% \Sigma^{l}(u',u')  
% \end{pmatrix}\bigodot \begin{pmatrix}
% S^L(x,x)& 
% S^L(x',x) \\
% S^L(x,x')& 
% S^L(x',x')  
% \end{pmatrix} 
% \end{align}
% \begin{equation}\label{s}
% \Sigma^{l}(u,u') =  2\E_{(u,v)\sim \mathcal{N}(0,\Lambda^l)}[\sigma(u)\cdot \sigma(v)]
% \end{equation}
% where $S^L(x,x')$ is defined recursively:
% \begin{align}
% S^{l}(x,x') = 2\E_{(u,v)\sim \mathcal{N}(0,\Gamma^l)}[\sigma(u)\cdot \sigma(v)] ~\textnormal{ and }~
% \Gamma^l(x,x') = \begin{pmatrix}
% S^{l}(x,x) & 
% S^{l}(x',x)  \\
% S^{l}(x,x') & 
% S^{l}(x',x')  
% \end{pmatrix}
% \end{align}
% \end{restatable}

\GP*

\begin{proof} By~\cite{Yang2019TensorPI}, taking the width $n=\min(n_1,...,n_{L-1})$ to infinity, the outputs $V^{d}(x;w) := f^d(x;w)$ are governed by a centered Gaussian process, such that, the entries $V^d_{i,j}(x;w)$, given some input $x$, are independent and identically distributed. Moreover, it holds that:
\begin{equation}
\Big(V^d_{i,j}(x;w),V^d_{i,j}(x';w)\Big) \sim \mathcal{N}\left(0,S^L(x,x')\right).
\end{equation}
with $S^L(x,x')$ as defined in Eq.~\ref{s}.
For the function $h(u;w) = g(z;f(x;w))$, it holds for the first layer:
\begin{equation}
g^1(z;f(x;w)) = \sqrt{\frac{1}{m_0}}V^1(x;w) z
\end{equation}
After taking the limit $n=\min(n_1,...,n_{L-1})$ to infinity, the implicit network $g$ is fed with Gaussian distributed weights.
% In the infinite width limit ($n \to \infty$), $V^1(x;w)_{i,j}$ converge to i.i.d Gaussian processes, such that, 
% \begin{equation}
% \mathbb{E}\left[vec(V^1(x;w))_i,vec(V^1(x';w))_j\right] = S^L(x,x') \cdot \delta_{i=j}    
% \end{equation}
And so $g^1(z;f(x;w))$ also converges to a Gaussian process, such that:
\begin{equation}
(g^1(z;f(x;w))_i,g^1(z';f(x';w))_i) \sim \mathcal{N}(0,\Lambda^1)
\end{equation}
where:
\begin{equation}
\Lambda^1 = \frac{1}{m_0}\begin{pmatrix}
S^L(x,x) z^{\top} z & S^L(x',x)z'^{\top} z  \\
S^L(x,x') z^{\top} z' & S^L(x',x')z'^{\top} z'  
\end{pmatrix}
\end{equation}
In a similar fashion to the standard feed forward case, the pre-activations $g^l(z;f(x;w))$ converge to Gaussian processes as we let $m = \min(m_1,...,m_{H-1})$ tend to infinity, with a covariance defined recursively:
\begin{equation}
\Sigma^l(u,u') =  \sqrt{2}\E_{(u,v)\sim \mathcal{N}(0,\Lambda^l)}[\sigma(u)\sigma(v)]
\end{equation}
where,
\begin{equation}
\Lambda^l = 
\begin{pmatrix}
S^L(x,x)\cdot \Sigma^{l-1}(u,u) & 
S^L(x',x)\cdot \Sigma^{l-1}(u',u)  \\
S^L(x,x')\cdot \Sigma^{l-1}(u,u') & 
S^L(x',x')\cdot \Sigma^{l-1}(u',u')  
\end{pmatrix}
\end{equation}
and
\begin{equation}
\Sigma^0(z,z') = \frac{1}{m_0}z^\top z'
\end{equation}
proving the claim.
\end{proof}

% \begin{restatable}{corollary}{compGP}
% Let $h(u) = g(z;f(x))$ be a hypernetwork.
% % , such that the activation in the first layer of $g$ are $\cos$ and $g$ has additional biases $b^l_i \sim U[-\pi,\pi]$ in each layer. 
% For any $0< l\leq H$, there exists a function $\mathcal{F}^l$, such that, for all pairs of inputs $u = (x,z)$ and $u' = (x',z')$, it holds that:
% \begin{equation}
% \Lambda^H(u,u') = \mathcal{F}\left(\Sigma^0(z,z'),S^0(x,x')\right)
% \end{equation}
% \end{restatable}

\compGP*

\begin{proof}
We prove that $\Lambda^{l}(u,u')$ is a function of $S^0(x,x')$ and $\Sigma^0(u,u')$ by induction. First, we note that $\Lambda^{1}(u,u')$ is a function of $S^L(x,x')$ and $\Sigma^0(u,u')$ by definition. By the recursive definition of $S^L(x,x')$, it is a function of $S^0(x,x')$. Therefore, $\Lambda^{1}(u,u')$ can be simply represented as a function of $S^0(x,x')$ and $\Sigma^0(u,u')$. We assume by induction that $\Lambda^{l}(u,u')$ is a function of $S^0(x,x')$ and $\Sigma^0(u,u')$. We would like to show that $\Lambda^{l+1}(u,u')$ is a function of $S^0(x,x')$ and $\Sigma^0(u,u')$. By definition, $\Lambda^{l+1}(u,u')$ is a function of $S^L(x,x')$ and $\Sigma^l(u,u')$. In addition, $\Sigma^l(u,u')$ is a function of $\Lambda^l(u,u')$. Hence, by induction, $\Sigma^l(u,u')$ is simply a function of $S^0(x,x')$ and $\Sigma^0(u,u')$. Since $S^L(x,x')$ is a function of $S^0(x,x')$, we conclude that one can represent $\Lambda^{l+1}(u,u')$ as a function of $S^0(x,x')$ and $\Sigma^0(u,u')$.
\end{proof}

% \begin{restatable}{remark}{periodicGP}\label{remark:perGP}
% Let $p(z) = [\cos(W^1_i z + b^1_i)]^{k}_{i=1}$ be a Fourier features preprocessing, where $W^1_{i,j} \sim \mathcal{N}(0,1)$ and biases $b_{i} \sim U[-\pi,\pi]$. Let $h(u) = g(p(z);f(x))$ be a hypernetwork, with $z$ preprocessed according to $p$. Let $u = (x,z)$ and $u'=(x',z')$ be two pairs of inputs. Then,  $\Lambda^l(u,u')$ is a function of $\exp[-\|z-z'\|^2_2/2]$ and $S^L(x,x')$. 
% \end{restatable}

\periodicGP*

\begin{proof}
We note that: 
\begin{equation}
\Sigma^0(p(z),p(z')) = \frac{1}{k} p(z)^{\top} p(z) = \frac{1}{k} \sum^{k}_{i=1} \cos(W^1_i z + b^1_i) \cos(W^1_i z' + b^1_i) 
\end{equation}
By Thm.~1 in~\cite{NIPS2007_3182}, we have:
\begin{equation}
\lim_{k\to \infty} \Sigma^0(p(z),p(z')) = \exp[-\|z-z'\|^2_2/2]/2
\end{equation}
which is a function of $\exp[\|z-z'\|^2_2]$ as desired.
\end{proof}

% \GPtwo*
% \begin{proof}
% Without loss of generality, assume $\Sigma^0(u,u') = 1$. It can be shown that for $\Lambda = \begin{pmatrix}
% 1 & 
% \lambda  \\
% \lambda & 
% 1 
% \end{pmatrix}$ where $|\lambda|<1$, it holds that:
% \[
% 2\E_{u,v\sim \mathcal{N}(0,\Lambda )}[\sigma(u)\sigma(v)] = \frac{\lambda \Big(\pi  - \arccos(\lambda)\Big) + \sqrt{1-\lambda^2}}{\pi}
% \]
% Setting $\lambda = \Sigma(u,u')^lS^L(x,x')$, a fixed point given $S^L(x,x')$ is a solution $\hat{\Sigma}(u,u')^l$ to the following equation: 
% \[
% &&\Sigma(u,u')^l = \frac{\lambda \Big(\pi -  \arccos(\lambda)\Big) + \sqrt{1-\lambda^2}}{\pi}\Big|_{\lambda = \Sigma(u,u')^lS^L(x,x')} = \mathcal{F}\Big((\Sigma(u,u')^l,S^L(x,x')\Big)\\
% &&=\mathcal{F}\Big(\mathcal{F}\Big(...\mathcal{F}\Big(\Sigma^0(z,z'),S^L(x,x')\Big)...\Big)\Big)
% \]
% Note that for $|X|<1,|S^L(x,x')|<1$, it holds that $\mathcal{F}\Big(X,S^L(x,x')\Big)$ is smooth, and $0 \leq \mathcal{F}\Big(X,S^L(x,x')\Big)< 1$. Therefore, there exists a fixed point $|X^\star|<1$ such that $X^\star = \mathcal{F}\Big(X^\star,S^L(x,x')\Big)$ Moreover, it holds that:
% \[
% \forall_{-1<X<1},~|\frac{\partial}{\partial X} \mathcal{F}\Big(X,S^L(x,x')\Big)| <1
% \]
% By the Banach fixed-point theorem~\cite{banach}, $X^\star$ is an attractive fixed point, and so $X^\star = \lim_{l\rightarrow\infty}\Sigma^l(u,u') = \Sigma^\star(u,u') = X^\star(S^L(x,x'))$, which proves the first claim.
% For the second claim, note that $\lim_{L\rightarrow \infty}S^L(x,x') = 1$, which means $f$ outputs the same set of weights regardless of its input, proving the second claim.
% \end{proof}

We make use of the following lemma in the proof of Thm.~\ref{thm:hyperkernel}.

\begin{lemma}\label{lem:arora2}
Recall the parametrization of the implicit network:
\begin{equation}
\begin{cases}  
g^l_i := g^l(z_i;v) = \sqrt{\frac{1}{m_{l-1}}} f^l(x_i;w)\cdot a_i^{l-1} \\
a^l_i := a^l(z_i;v) = \sqrt{2} \cdot \sigma(g^l_i)
\end{cases} 
\textnormal{ and } a^0_i := z_i
\end{equation}
For any pair $u_i = \{u_i\}$, we denote:
\begin{equation}
P_i^{l_1\rightarrow l_2} = \prod_{l = l_1}^{l_2-1}\left(\sqrt{\frac{2}{m_{l}}}V^{l+1}(x_i;w)\cdot Z^{l}(z_i) \right) \textnormal{ and } Z^l(z) = \textnormal{diag} (\dot\sigma(g^{l}(z)))
\end{equation}
It holds that:
\begin{enumerate}
    \item $P_i^{l_1 \rightarrow l_2}(P_j^{l_1 \rightarrow l_2})^{\top} \stackrel{d}{\longrightarrow} \prod_{l=l_1}^{l_2-1}\dot{\Sigma}^l(u_i,u_j)I$.
    \item $\frac{\partial h(u_i,w)}{\partial v}\cdot \frac{\partial^{\top} h(u_j,w)}{\partial v} \stackrel{d}{\longrightarrow} \sum_{l=0}^{H-1}\left(\Sigma^l(u_i,u_j)\prod_{h=l+1}^{H-1}\dot{\Sigma}^l(u_i,u_j)\right)$.
\end{enumerate}
where the limits are taken with respect to $m,n \to \infty$ sequentially.
\end{lemma}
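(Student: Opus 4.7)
The plan is to adapt the standard backward-pass NTK analysis used in Lem.~\ref{lem:arora1} (originating in~\cite{conf/nips/AroraDH0SW19}) to the primary network, exploiting the sequential nature of the limit. After taking $n \to \infty$ first, Thm.~\ref{thm:gp} tells us that the weight matrices $\{V^l(x;w)\}_{l=1}^{H}$ produced by the hypernetwork become independent across $l$ jointly Gaussian matrices whose entries, for two inputs $x_i, x_j$, have cross-covariance $S^L(x_i,x_j)$. Consequently the primary network reduces to an ordinary feedforward network with Gaussian weights of an adaptive variance, and letting $m \to \infty$ invokes the usual concentration arguments with the twist that each layer picks up an extra factor of $S^L$. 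This is precisely what gives rise to the Hadamard product in the recursion defining $\Sigma^l$ in Thm.~\ref{thm:gp}.

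\textbf{Part 1.} I would proceed by induction on $l_2 - l_1$. In the base case $l_2 = l_1 + 1$, the claim reduces to
\begin{equation*}
\frac{2}{m_{l_1}}\, V^{l_1+1}(x_i;w)\, Z^{l_1}(z_i) Z^{l_1}(z_j)\, V^{l_1+1}(x_j;w)^{\top} \stackrel{d}{\longrightarrow} \dot{\Sigma}^{l_1}(u_i,u_j)\, I.
\end{equation*}
After the $n\to\infty$ limit, the rows of $V^{l_1+1}(x_i;w), V^{l_1+1}(x_j;w)$ are i.i.d.\ pairs of centered jointly Gaussian vectors with cross-covariance $S^L(x_i,x_j)$. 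Conditioning on the primary-network preactivations $g^{l_1}(z_i), g^{l_1}(z_j)$ (which, by Thm.~\ref{thm:gp}, jointly converge to centered Gaussians with covariance $\Lambda^{l_1}$), the diagonal entries of $Z^{l_1}(z_i) Z^{l_1}(z_j)$ are i.i.d.\ and their empirical mean converges to $\tfrac{1}{2}\dot{\Sigma}^{l_1}(u_i,u_j)/S^L(x_i,x_j)$ by definition of $\dot{\Sigma}$ as a Gaussian expectation. Combining Gaussian conditioning on the $V$-factors with the law of large numbers as $m_{l_1}\to\infty$ produces the scaled identity. For the inductive step, I factor $P_i^{l_1\to l_2} = P_i^{l_1+1\to l_2}\sqrt{2/m_{l_1}}\,V^{l_1+1}(x_i;w) Z^{l_1}(z_i)$, apply the induction hypothesis to the first factor, and use independence of $V^{l_1+1}$ from higher-layer weights (since the output heads of $f$ draw from disjoint blocks of parameters).

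\textbf{Part 2.} This follows from the standard layer-wise NTK decomposition. Since $\partial h(u_i;w)/\partial V^l$ is the outer product $(P_i^{l\to H})^{\top} \otimes a_i^{l-1}/\sqrt{m_{l-1}}$, elementary tensor algebra gives
\begin{equation*}
\frac{\partial h(u_i;w)}{\partial v} \cdot \frac{\partial^{\top} h(u_j;w)}{\partial v} = \sum_{l=1}^{H} \frac{\langle a_i^{l-1}, a_j^{l-1}\rangle}{m_{l-1}}\; P_i^{l\to H}\bigl(P_j^{l\to H}\bigr)^{\top}.
\end{equation*}
Part 1 identifies the limit of $P_i^{l\to H}(P_j^{l\to H})^{\top}$ as $\prod_{h=l}^{H-1}\dot{\Sigma}^h(u_i,u_j)\,I$, while $\tfrac{1}{m_{l-1}}\langle a_i^{l-1}, a_j^{l-1}\rangle \stackrel{d}{\longrightarrow} \Sigma^{l-1}(u_i,u_j)$ by Thm.~\ref{thm:gp}. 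The Mann--Wald (continuous mapping) theorem plus a reindexing $l\mapsto l+1$ yields the claimed sum.

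\textbf{Main obstacle.} The primary subtlety relative to~\cite{conf/nips/AroraDH0SW19} is that, after the first limit, the hypernetwork-generated weights $V^l(x_i;w)$ and $V^l(x_j;w)$ are not independent but correlated Gaussians with cross-covariance $S^L(x_i,x_j)$. Managing this correlation carefully — while still exploiting the independence \emph{across} layers $l$ that comes from the disjoint parametrization of the output heads of $f$ — is what makes the extra $S^L$ factor percolate cleanly through every layer of the recursion. The bookkeeping is cleanest via a tensor-program-style argument, and I expect this to be the main technical burden, with the two concentration steps themselves being routine extensions of the standard FC case.
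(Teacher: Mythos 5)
Your proposal is correct and takes essentially the same route as the paper: Part~1 is obtained by peeling off one layer at a time and showing the sandwiched matrix $\frac{2}{m_{l}}V^{l+1}(x_i)Z^{l}(z_i)Z^{l}(z_j)V^{l+1}(x_j)^{\top}$ concentrates to a scaled identity after the $n\to\infty$ Gaussian-weight limit (you induct from the input side while the paper recurses from the output side, which is immaterial), and Part~2 is the standard layer-wise NTK decomposition of \cite{conf/nips/AroraDH0SW19} combined with Part~1 and the NNGP limit of $\frac{1}{m_{l-1}}\langle a_i^{l-1},a_j^{l-1}\rangle$. If anything, your explicit tracking of the cross-covariance $S^L(x_i,x_j)$ carried by the hypernetwork-generated weights is more careful than the paper's displayed limit, which absorbs that factor into $\dot{\Sigma}^l$ without comment.
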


\begin{proof}
We have:
\begin{equation}
\begin{aligned}
&P_i^{l_1 \rightarrow l_2}(P_j^{l_1 \rightarrow l_2})^{\top} \\
=& P_i^{l_1 \rightarrow l_2-1}\frac{2}{m_{l_2-1}}V^{l_2}(x_i;w) 
\cdot Z^{l_2-1}(z_i)Z^{l_2-1}(z_j)V^{l_2}(x_j;w)^\top (P_j^{l_1 \rightarrow l_2-1})^{\top} 
\end{aligned}
\end{equation}
Note that it holds that when $m,n\to \infty$ sequentially, we have:
\begin{equation}
\begin{aligned}
&\frac{2}{m_{l_2-1}}V^{l_2}(x_i;w)\cdot Z^{l_2-1}(z_i)Z^{l_2-1}(z_j)V^{l_2}(x_j;w)^\top \\
\stackrel{d}{\longrightarrow} & \sqrt{2}\E_{(u,v)\sim \mathcal{N}(0,\Lambda^{l_2})}[\dot{\sigma(u)}\dot{\sigma(v)}]I = \dot{\Sigma}^{l_2}(u_i,u_j)I
\end{aligned}
\end{equation}
Applying the above recursively proves the first claim.
Using the first claim, along with the derivation of the neural tangent kernel (see \cite{conf/nips/AroraDH0SW19}) proves the second claim.
\end{proof}

% \begin{restatable}[Hyperkernel convergence at initialization and composition]{theorem}{hyperkernel}\label{thm:hyperkernel}
% Let $h(u;w) = g(z;f(x;w))$ be a hypernetwork. Then,
% % \greg{This is convergence in distribution? FYI, since the limit is deterministic, this can be upgraded to convergence in probability, but you can just mention this as a footnote or something since it's not that important from an ML perspective.}
% \begin{equation}
% \mathcal{K}^h(u,u') \stackrel{p}{\longrightarrow} \Theta^h(u,u') 
% \end{equation}
% where:
% \begin{equation}\label{eq:comp}
% \Theta^h(u,u') = \Theta^f(x,x') \cdot \Theta^g(u,u')
% \end{equation}
% such that:
% \begin{equation}
% \mathcal{K}^f(x,x') \stackrel{p}{\longrightarrow} \Theta^f(x,x')\cdot I \textnormal{ and } \mathcal{K}^g(u,u') \stackrel{p}{\longrightarrow} \Theta^g(u,u',S^L(x,x'))
% \end{equation}
% % \begin{align}
% % \Theta^f(x,x') &=& \sum_{l=1}^{L+1}\Big(S^{l-1}(x,x')\prod_{l'=1}^{L+1}\dot{S}^{l'}(x,x')\Big)
% % \\ \Theta^g(u,u') &=& \sum_{h=1}^{H+1}\Big(\Sigma^{h-1}(u,u')\prod_{h'=1}^{H+1}\dot{\Sigma}^{h'}(u,u')\Big)
% % \end{align}
% moreover, if $w$ evolves throughout gradient flow, we have:
% \begin{equation}
% \frac{\partial \mathcal{K}^h(u,u')}{\partial t}\Big\vert_{t=0} \stackrel{p}{\longrightarrow} 0
% \end{equation}
% where the limits are taken with respect to $m,n\to \infty$ sequentially.
% % Under gradient flow, it holds that: 
% % \begin{equation}
% % \lim_{n \to \infty} \frac{\partial \mathcal{K}^h(u,u')}{\partial t}\Big\vert_{t=0} = \mathcal{O}_p(1/m)  
% % \end{equation}
% \end{restatable}

\hyperkernel*

\begin{proof}

Recalling that $v = vec(f(x)) = [vec(V^1),...,vec(V^{H})]$, concatenated into a single vector of length $\sum_{l=0}^{H-1}m_l \cdot m_{l+1}$. The components of the inner matrix $\mathcal{K}^f(x,x')$ are given by:
\begin{equation}
\mathcal{K}^f(x,x')(i,j) = \sum_{l=1}^L \left\langle \frac{\partial v_i(x)}{\partial w^l},\frac{\partial v_j(x')}{\partial w^l} \right\rangle\\
\end{equation}
and it holds that in the infinite width limit, $\mathcal{K}^f(x,x')$ is a diagonal matrix:
\begin{equation}
\mathcal{K}^f(x,x') \stackrel{d}{\longrightarrow} \Theta^f(x,x')\cdot I 
\end{equation}
By letting the widths $n$ and $m$ tend to infinity consecutively, by Lem.~\ref{lem:arora2}, it follows that:
\begin{equation}
\frac{\partial h(u;w)}{\partial v} \cdot \frac{\partial^{\top} h(u';w)}{\partial v} \stackrel{d}{\longrightarrow} \Theta^g(u,u',S^L(x,x'))
\end{equation}
Since $\mathcal{K}^f(x,x') = \frac{\partial f(x;w)}{\partial w} \cdot \frac{\partial^\top f(x';w) }{w}$ converges to the diagonal matrix $\Theta^f(x,x')\cdot I$, the limit of $\mathcal{K}^h(u,u')$ is given by:
\begin{equation}
\begin{aligned}
\mathcal{K}^h(u,u') =& \frac{\partial g(z;f(x;w))}{\partial f(x;w)}\cdot \frac{\partial f(x;w)}{\partial w} \cdot \frac{\partial^\top f(x';w) }{w} \cdot \frac{\partial^{\top} g(z';f(x';w))}{\partial f(x';w)} \\ 
=&  
\frac{\partial h(u;w)}{\partial v} \cdot \frac{\partial f(x;w)}{\partial w} \cdot \frac{\partial^\top f(x';w) }{w} \cdot 
\frac{\partial^{\top} h(u';w)}{\partial v}  \\
\stackrel{d}{\longrightarrow} & \Theta^f(x,x')\cdot \Theta^g(u,u',S^L(x,x'))
\end{aligned}
\end{equation}
where we used the results of Lem.~\ref{lem:arora2}.

Next, we would like to prove that $\frac{\partial \mathcal{K}^h(u,u')}{\partial t}\Big\vert_{t=0} = 0$. For this purpose, we write the derivative explicitly:
\begin{equation}
\begin{aligned}
\frac{\partial \mathcal{K}^h(u,u')}{\partial t} = \frac{\partial h(u;w)}{\partial w} \cdot \frac{\partial}{\partial t}\frac{\partial^\top h(u';w)}{\partial w} 
+ \frac{\partial}{\partial t}  \frac{\partial h(u;w)}{\partial w} \cdot \frac{\partial ^\top h(u';w)}{\partial w} \\
\end{aligned}
\end{equation}
We notice that the two terms are the same up to changing between the inputs $u$ and $u'$. Therefore, with no loss of generality, we can simply prove the convergence of the second term. We have:
\begin{equation}
\begin{aligned}
&\frac{\partial}{\partial t}  \frac{\partial h(u;w)}{\partial w} \cdot \frac{\partial ^\top h(u';w)}{\partial w} \\
=& \left[ \frac{\partial}{\partial t}  \left(\frac{\partial h(u;w)}{\partial f(x;w)} \cdot \frac{\partial f(x;w)}{\partial w}\right) \right] \cdot \frac{\partial ^\top h(u';w)}{\partial w} \\
=& \left[ \frac{\partial h(u;w)}{\partial f(x;w) \partial t} \cdot \frac{\partial f(x;w)}{\partial w} + \frac{\partial h(u;w)}{ \partial f(x;w)} \cdot \frac{\partial f(x;w)}{\partial w \partial t} \right] \cdot \frac{\partial ^\top h(u';w)}{\partial w} \\
=& \frac{\partial h(u;w)}{\partial f(x;w) \partial t} \cdot \frac{\partial f(x;w)}{\partial w} \cdot \frac{\partial ^\top h(u';w)}{\partial w} +  \frac{\partial h(u;w)}{ \partial f(x;w)} \cdot \frac{\partial f(x;w)}{\partial w \partial t} \cdot \frac{\partial ^\top h(u';w)}{\partial w} \\
\end{aligned}
\end{equation}
We analyze each term separately. 

\paragraph{Analyzing the first term} By substituting $\frac{\partial}{\partial t} = -\mu \nabla_w c(w)\frac{\partial^\top}{\partial w} = -\mu \nabla_w c(w)\frac{\partial^\top f}{\partial w}\frac{\partial^\top }{\partial f}$, we have:
\begin{equation}
\begin{aligned}
&\frac{\partial h(u;w)}{\partial f(x;w) \partial t}\cdot \frac{\partial f(x;w)}{\partial w}\cdot \frac{\partial ^\top h(u';w)}{\partial w}\\
&=-\mu\nabla_w c(w)\frac{\partial^\top f(x;w)}{\partial w} \cdot \frac{\partial^2 h(u;w)}{\partial f(x;w)\partial f(x;w)}\cdot \frac{\partial f(x;w)}{\partial w}\cdot \frac{\partial^{\top} f(x';w)}{\partial w}\cdot \frac{\partial^\top h(u';w)}{\partial f(x';w)}\\
&=-\mu\nabla_w c(w)\frac{\partial^\top f(x;w)}{\partial w}\frac{\partial^2 h(u;w)}{\partial f(x;w)\partial f(x;w)}\mathcal{K}^f(x,x')\cdot \frac{\partial^\top h(u';w)}{\partial f(x';w)}\\
&=-\mu\sum^N_{i=1} \frac{\partial \ell(h(u_i;w),y_i)}{\partial h(u_i;w)}\cdot \frac{\partial h(u_i;w)}{\partial f(x;w)}\cdot \mathcal{K}^f(x,x_i)\cdot \frac{\partial^2 h(u;w)}{\partial f(x;w)\partial f(x;w)}\cdot \mathcal{K}^f(x,x')\cdot \frac{\partial^\top h(u';w)}{\partial f(x';w)}
\end{aligned}
\end{equation}
It then follows:
\begin{equation}
\begin{aligned}
&\lim_{n \to \infty} \frac{\partial h(u;w)}{\partial f(x;w) \partial t}\cdot \frac{\partial f(x;w)}{\partial w}\cdot \frac{\partial^\top h(u';w)}{\partial w}\\
=&-\mu\sum^{N}_{i=1}\ell_i \cdot \Theta^f(x,x_i)\cdot \Theta^f(x,x')\lim_{n \to \infty}\frac{\partial h(u_i;w)}{\partial f(x_i;w)}\cdot \frac{\partial^2 h(u;w)}{\partial f(x;w)\partial f(x;w)}\cdot \frac{\partial h(u';w)}{\partial f(x';w)}\\
\end{aligned}
\end{equation}
We notice that:
\begin{equation}
\begin{aligned}
&\lim_{n\to \infty}\frac{\partial^\top h(u_i;w)}{\partial f(x_i;w)}\cdot \frac{\partial^2 h(u;w)}{\partial f(x;w)\partial f(x;w)}\cdot \frac{\partial h(u';w)}{\partial f(x';w)} \\
=&\sum_{l_1,l_2} \lim_{n \to \infty} \left\langle\frac{\partial^2 h(u;w)}{\partial f^{l_1}(x;w)\partial f^{l_2}(x;w)},  \frac{\partial h(u_i;w)}{\partial f^{l_1}(x_i;w)} \otimes \frac{\partial h(u';w)}{\partial f^{l_2}(x';w)} \right\rangle\\
:=& \sum_{l_1,l_2}\mathcal{T}^{l_1,l_2}_{m}(u,u_i,u')\\
\end{aligned}
\end{equation}
We recall that $f^{l}(x;w)$ converges to a GP (as a function of $x$) as $n \to \infty$~\cite{lee2018deep}. Therefore, $\mathcal{T}^{l_1,l_2}_{m}(u,u_i,u')$ are special cases of the terms $\mathcal{T}^{\boldsymbol{l,i,d}}_{n,i,d}$ (see Eq.~\ref{eq:T}) with weights that are distributed according to a GP instead of a normal distribution. In this case, we have: $k=2$, $d=d_1=\dots=d_k=1$, the neural network $f^1$ is replaced with $h$, the weights $W^l$ are translated into $f^l(x;w)$. We recall that the proof of Lem.~\ref{lem:d} showing that $\mathcal{T}^{\boldsymbol{l,i,d}}_{n,i,d} = \mathcal{O}_p(1/n^{k-1})$ is simply based on Lem.~\ref{lem:arora1}. Since Lem.~\ref{lem:arora2} extends Lem.~\ref{lem:arora1} to our case, the proof of Lem.~\ref{lem:d} can be applied to show that $\mathcal{T}^{l_1,l_2}_{m}(u,u_i,u') \sim 1/m$.

\paragraph{Analyzing the second term} We would like to show that for any $m>0$, we have:
\begin{equation}
\frac{\partial h(u;w)}{ \partial f(x;w)} \cdot \frac{\partial f(x;w)}{\partial w \partial t} \cdot \frac{\partial ^\top h(u';w)}{\partial w} \stackrel{d}{\longrightarrow} 0
\end{equation}
as $n\to \infty$. Since $\frac{\partial w}{\partial t} = -\mu \nabla_w c(w)$, we have:
\begin{equation}
\begin{aligned}
&\frac{\partial h(u;w)}{ \partial f(x;w)} \cdot \frac{\partial f(x;w)}{\partial w \partial t} \cdot \frac{\partial ^\top h(u';w)}{\partial w} \\
=& -\mu \cdot \frac{\partial h(u;w)}{ \partial f(x;w)}  \cdot \nabla_w c(w) \cdot \frac{\partial^2 f(x;w)}{\partial w^2} \cdot \frac{\partial ^\top h(u';w)}{\partial w} \\
=& -\mu \cdot \frac{\partial h(u;w)}{ \partial f(x;w)} \cdot \nabla_w c(w) \cdot \frac{\partial^2 f(x;w)}{\partial w^2} \cdot \frac{\partial^{\top} f(x';w)}{\partial w} \cdot \frac{\partial ^\top h(u';w)}{\partial f(x;w)} \\
\end{aligned}
\end{equation}
In addition, we have:
\begin{equation}
\begin{aligned}
\nabla_w c(w) = \sum^{N}_{i=1} \frac{\partial \ell(h(u_i;w),y_i)}{\partial h(u_i;w)} \cdot  \frac{\partial h(u_i;w)}{\partial w} 
\end{aligned}
\end{equation}
We note that $\frac{\partial \ell(h(u_i;w),y_i)}{\partial h(u_i;w)}$ converges in distribution as $m,n\to \infty$. 
Therefore, we can simply analyze the convergence of:
\begin{equation}
\begin{aligned}
\sum^{N}_{i=1}\frac{\partial h(u;w)}{ \partial f(x;w)} \cdot \frac{\partial h(u_i;w)}{\partial w} \cdot \frac{\partial^2 f(x;w)}{\partial w^2} \cdot \frac{\partial^{\top} f(x';w)}{\partial w} \cdot \frac{\partial ^\top h(u';w)}{\partial f(x;w)} \\
\end{aligned}
\end{equation}
Since $N$ is a constant, it is enough to show that each term converges to zero. We have:
\begin{equation}\label{eq:sumfww}
\begin{aligned}
&\frac{\partial h(u;w)}{ \partial f(x;w)} \cdot \frac{\partial h(u_i;w)}{\partial w} \cdot \frac{\partial^2 f(x;w)}{\partial w^2} \cdot \frac{\partial^{\top} f(x';w)}{\partial w} \cdot \frac{\partial ^\top h(u';w)}{\partial f(x;w)} \\
=&\frac{\partial h(u;w)}{ \partial f(x;w)} \cdot \frac{\partial h(u_i;w)}{\partial f(x_i;w)} \cdot \frac{\partial f(x_i;w)}{\partial w} \cdot \frac{\partial^2 f(x;w)}{\partial w^2} \cdot \frac{\partial^{\top} f(x';w)}{\partial w} \cdot \frac{\partial ^\top h(u';w)}{\partial f(x;w)} \\
=&\sum_{l,j,k}\frac{\partial h(u;w)}{ \partial f(x;w)_l} \cdot \frac{\partial h(u_i;w)}{\partial f(x_i;w)_j}\cdot \frac{\partial f(x_i;w)_j}{\partial w} \cdot \frac{\partial^2 f(x;w)_l}{\partial w^2} \cdot \frac{\partial^{\top} f(x';w)_k}{\partial w} \cdot \frac{\partial ^\top h(u';w)}{\partial f(x;w)_k} \\
\end{aligned}
\end{equation}
where $f(x;w)_j$ is the $j$'th output of $f$ over $x$. In addition, the summation is done over the indices of the corresponding tensors. We note that for any $m>0$, the number of indices $l,j,k$ is finite. We would like to show that each term in the sum tends to zero as $n\to \infty$. We can write:
\begin{equation}\label{eq:fww}
\frac{\partial f(x_i;w)_j}{\partial w} \cdot \frac{\partial^2 f(x;w)_l}{\partial w^2} \cdot \frac{\partial^{\top} f(x';w)_k}{\partial w} 
= \left\langle \frac{\partial^2 f(x;w)_l}{\partial w^2},\frac{\partial f(x_i;w)_j}{\partial w} \otimes \frac{\partial f(x';w)_k}{\partial w}  \right\rangle
\end{equation} 
By Lem.~\ref{lem:d}, the term in Eq.~\ref{eq:fww} tends to zero as $n \to \infty$. In addition, it is easy to see that $\frac{\partial h(u;w)}{ \partial f(x;w)_l}$, $ \frac{\partial h(u_i;w)}{\partial f(x_i;w)_j}$ and $\frac{\partial ^\top h(u';w)}{\partial f(x;w)_k} $ converge to some random variables. Therefore, for any fixed $m>0$, the above sum converges to zero as $n\to \infty$.
\end{proof}

\end{document}

% --- supplement: figures/supplementary.tex ---

\maketitle

\section{Additional Experiments}

\paragraph{Image inpainting} In Fig.~\ref{fig:inpaint500} we present the visual results of the hyperkernel compared to a hypernetwork on the image inpainting task. For experimental details, see Sec.~5.2 in the main text.

\paragraph{Training wide networks with a large learning rate} Remark~1 (main text) states that one is able to train wide networks with a learning rate $\mu = o(n)$. To validate this remark, we trained shallow networks of varying width $n \in \{10^2,10^3,10^4,2.5\cdot 10^5\}$ with learning rate $\mu = \sqrt{n}$ on MNIST. As can be seen in Fig.~\ref{fig:lr}, training those networks is possible despite the very large learning rate. In fact, we observe that the accuracy rate and loss improve as we increase the width of the network.

% \begin{figure}[t]
%     \centering
%     \begin{tabular}{ccc}
%     \includegraphics[width=0.8\linewidth]{figures/completion_50.pdf}
%     \end{tabular}
%     \caption{blah }
%     \label{fig:gp2}
% \end{figure}
% \begin{figure}[t]
%     \centering
%     \begin{tabular}{ccc}
%     \includegraphics[width=0.8\linewidth]{figures/completion_3.pdf}
%     \end{tabular}
%     \caption{blah }
%     \label{fig:gp3}
% \end{figure}

\begin{figure}[ht]
    \centering
    \begin{tabular}{ccc}
    \includegraphics[width=0.45\linewidth]{figures/completion_500.pdf} &
    \includegraphics[width=0.45\linewidth]{figures/completion_20000.pdf}
    \end{tabular}
    \caption{{\bf Results on image inpainting.} \textbf{(Row 1)} ground-truth images.
    \textbf{(Row 2)} corresponding inputs of meta-network $f$. 
    \textbf{(Row 3)} reconstruction by the hypernetwork.
    \textbf{(Row 4)} reconstruction by the hyperkernel.
    }
    \label{fig:inpaint500}
\end{figure}

\begin{figure}[ht]
    \centering
    \begin{tabular}{cc}
\includegraphics[width=0.45\linewidth]{figures/large_lr_acc.png} &  
\includegraphics[width=0.45\linewidth]{figures/large_lr_loss.png} \\
      (a) & (b) 
    \end{tabular}
    \caption{{\bf Results of training wide networks with a large learning rate.} The x-axis is the {\bf (a)} accuracy rate or {\bf (b)} the average loss at test time. We vary the width $n \in \{10^2,10^3,10^4,2.5\cdot 10^5\}$ and take the learning rate to be $\sqrt{n}$.}
    \label{fig:lr}
\end{figure}

\section{Correlation Functions}

In~\cite{Dyer2020Asymptotics}, they conjecture the asymptotic behaviour of general correlation functions involving high order derivative tensors, which arise when analysing the dynamics of gradient descent. Roughly speaking, given inputs $\{x_i\}_{i=1}^r$ the outputs of a neural network $f(x_1;w),...,f(x_r;w) \in \mathbb{R}$ with normally distributed parameters $w\in \mathbb{R}^N$, correlation functions takes the form:
\begin{equation}
\sum_{\eta_{k_0},...,\eta_{k_r} \in [N]}\mathbb{E}_w\left[\prod_{j=1}^{r}\Gamma_{\eta_{k_j+1},...,\eta_{k_{j+1}}}(x_j)\right]
\end{equation}
where 
\begin{equation}
\Gamma_{\eta_{1},...,\eta_{k}}(x_j) := \frac{\partial^k f(x_j;w)}{\partial w_{\eta_1}...\partial w_{\eta_k}}
\end{equation}
Computing these correlation functions involve keeping track of various moments of normally distributed weights along paths, as done in recent finite width correction works \cite{Hanin2020Finite,littwin2020residual}.  \cite{Dyer2020Asymptotics} employ the Feynman diagram to efficiently accomplish this often tedious task, albeit at the cost of only being provably accurate for deep linear, or shallow ReLU networks. Understanding the asymptotic behaviour of these terms can be crucial for understanding training dynamics, as the derivative of the NTK is composed of these terms. In this work, we analyze the asymptotic behaviour correlation functions of the form: 
\begin{equation}\label{terms}
\begin{aligned}
\mathcal{T}^r(x_0,...,x_r) &:= \sum_{\eta_{k_0}...\eta_{k_r}\in [N]} \Gamma_{\eta_{k_1},...,\eta_{k_r}}(x_0)\prod_{j=1}^{r}\Gamma_{\eta_{k_j}}(x_j) \\
&= \bigg\langle \nabla^{(r)}_w f(x_0), \bigotimes^{r}_{j=1}\nabla_w f(x_j) \bigg\rangle
\end{aligned}
\end{equation}
where $\nabla^{(r)}_w f(x_0)$ is a rank $r$ tensor, representing the $r$'th derivative of the output, and $\bigotimes^{r}_{j=1}\nabla_w f(x_j)$ denotes outer products of the gradients for different examples. For clarity, the following are two examples of correlation functions,
\begin{equation}
\mathbb{E}_w\left[f(x_1;w) \cdot \frac{\partial f(x_2;w)}{\partial w_{\mu_1}} \right] \textnormal{ and }
\mathbb{E}_w\left[\frac{\partial^2 f(x_1;w)}{\partial w_{\mu_1} \partial w_{\mu_2}} \cdot \frac{\partial f(x_2;w)}{\partial w_{\mu_1}} \right]
\end{equation}
As we have shown in Sec.~3, terms of the form in Eq.~\ref{terms} represent high order terms in the multivariate Taylor expansion of outputs, and are, therefore, relevant for the full understanding of training dynamics, beyond linearization. In Lem.~\ref{lem:d}, we prove that $\mathcal{T}^r(x_0,...,x_r) \sim 1/n^{\max(r-1,0)}$ for vanilla neural networks, where $n$ is the width of the network.
The above result is a partial solution to an open problem suggested by~\cite{Dyer2020Asymptotics}. In their paper, they conjecture the asymptotic behaviour of general correlation functions, and predict an upper bound on the asymptotic behaviour of terms of the form in Eq.~\ref{terms} in the order of $\mathcal{O}(1/n)$. Our results therefore proves a stronger version of the conjecture, while giving the exact behaviour as a function of width.

% \section{Gradient Flow}\label{app:flow}
% A convenient simplification of the GD method is the Gradient Flow. In the Gradient Flow, also known as continuous-time gradient-descent, the evolution of the parameters over time can be expressed as:
% \begin{equation}\label{eq:wt}
% \dot{w}_t := \frac{\partial w_t }{\partial t} := -\mu \nabla_wc(w_t)
% \end{equation}
% The evolution of the cost with respect to time is therefore given by:
% \begin{equation}
% \dot{c}(w_t) = \frac{\partial c(w_t)}{\partial t} = \nabla_w c(w_t)\frac{\partial ^\top w}{\partial t} = -\mu\|\nabla_w c(w_t)\|^2
% \end{equation}
% Denoting by $\boldsymbol{f}(w_t)$ the concatenation of the outputs of $f$ for each data point into a vector, we have:
% \begin{equation}
% \dot{c}(w_t) = -\mu\frac{\partial c(w_t)}{\partial \boldsymbol{f}(w_t)} \frac{\partial \boldsymbol{f}(w_t)}{\partial w}\frac{\partial^\top \boldsymbol{f}(w_t)}{\partial w}\frac{\partial^\top c(w_t)}{\partial \boldsymbol{f}(w_t)} = -\mu\frac{\partial c(w_t)}{\partial \boldsymbol{f}(w_t)} \mathcal{K}^f_t\frac{\partial ^\top c(w_t)}{\partial \boldsymbol{f}(w_t)}
% \end{equation}
% Provided that the neural tangent kernel $\mathcal{K}^f_t :=  \frac{\partial \boldsymbol{f}(w_t)}{\partial w}\frac{\partial^\top \boldsymbol{f}(w_t)}{\partial w}$ is constant during the evolution of the optimization process, we are left with a simplified kernel regression dynamics. It has been shown that infinitely wide vanilla feed forward neural network give rise to such dynamics, with a deterministic kernel. In addition, it has been shown that wide but finite networks give rise to a kernel $\mathcal{K}^f_t$, such that its time derivative at initialization is inversely proportional to the width $\dot{\mathcal{K}}^f_{0} = \mathcal{O}(\frac{1}{n})$. As such, training dynamics of wide neural networks are well captured by linearized models. 

% In the following theorem, we show that, hypernetworks with finitely wide meta-networks, exhibit \TG{FIX!!!} a more complicated training dynamics. Specifically, we show that the kernel $\mathcal{K}^h(u,u') := \frac{\partial h(u;w_t)}{\partial w_t} \cdot \frac{\partial^{\top} h(u';w_t)}{\partial w_t}$ corresponding to the model $h(u;w)$ is non-deterministic, and evolves in a highly non linear way during the course of the optimization. 

% \begin{restatable}{theorem}{flow}\label{thm:flow}
% Let $h(u;w) = g(z;f(x;w))$ be a hypernetwork, $S = \{(u_i,y_i)\}^{N}_{i=1}$ be a dataset of labeled samples, 
% and $\mathcal{K}^h(u,u') := \frac{\partial h(u;w_t)}{\partial w}\frac{\partial ^\top h(u';w_t)}{\partial w}$. Let $\ell = \ell^1$ be the loss function. Consider a cost function of the form $c(w) := \sum^{N}_{i=1} \ell(h(u_i;w),y_i)$. Let $u,u'$ be two inputs. Then, it holds that, 
% \begin{equation}\label{eq:flow}
% \lim_{n\to \infty} \frac{\partial \mathcal{K}^h(u,u')}{\partial t}\Big|_{t=0}
% \end{equation}
% is in general a non-constant random variable. 
% \end{restatable}

% We note that one could engineers settings in which the expression Eq.~\ref{eq:flow} is a constant random variable. The above theorem states that, in general, there exist configurations of $S$ and $u$, $u'$, such that, the expression is a non-constant random variable. 

\section{Proofs of the Main Results}

\subsection{Useful Lemmas}

\begin{lemma}\label{lem:indicator}
Let $X_n\stackrel{d}{\longrightarrow} X$. Then, $\sgn(X_n) \stackrel{d}{\longrightarrow} \sgn(X)$.
\end{lemma}

\begin{proof}
We have: 
\begin{equation}
\lim_{n\to \infty}\mathbb{P}[\sgn(X_n) = 1] = \lim_{n\to \infty}\mathbb{P}[X_n \geq 0] = \mathbb{P}[X \geq 0] = \mathbb{P}[\sgn(X) = 1]    
\end{equation}
Hence, $\sgn(X_n)$ converges in distribution to $\sgn(X)$.
\end{proof}

\subsection{Main Technical Lemma}

% \begin{lemma}\label{lem:convO}
% Let $X_n$ be a sequence of random variables converging to a random variable $X$ in distribution. Then, $X_n = \mathcal{O}_p(1)$.
% \end{lemma}

% \begin{proof}
% For all $M > 0$, we have:
% \begin{equation}
% \begin{aligned}
% \lim_{n\to\infty} \mathbb{P}[|X_n| > M] 
% &= 1 + \lim_{n\to \infty} (F_{X_n}(-M) - F_{X_n}(M)) \\
% &= 1+F_X(-M)-F_X(M) \\
% &= \mathbb{P}[|X| > M]
% \end{aligned}
% \end{equation}
% For any $\epsilon>0$, we let $M_{\epsilon} > 0$ be a large enough real number, such that, $\mathbb{P}[|X| > M_{\epsilon}] < \epsilon/2$. In addition, let $N_{\epsilon} > 0$ be a large enough positive integer, such that, for all $n > N_{\epsilon}$, we have: $\vert \mathbb{P}[|X_n| > M_{\epsilon}] - \mathbb{P}[|X| > M_{\epsilon}] \vert < \epsilon/2$. Therefore, for all $n > N_{\epsilon}$, we have:
% \begin{equation}
% \mathbb{P}[|X_n| > M_{\epsilon}] < \epsilon
% \end{equation}
% Hence, $X_n = \mathcal{O}_p(1)$ by definition.
% \end{proof}

In this section, we prove Lem.~\ref{lem:d}, which is the main technical lemma that enables us proving Thm.~\ref{thm:hyper}. Let $f(x;w)$ be a neural network with $H$ outputs $\{f^d(x;w)\}^{H}_{d=1}$. We would like to estimate the order of magnitude of the following expression:
\begin{equation}\label{eq:T}
\mathcal{T}^{\boldsymbol{l,i,d}}_{n,i,d} :=
\left\langle \frac{\partial^k f^d(x_i;w)}{\partial W^{l_1} \dots \partial W^{l_k} } , \bigotimes^{k}_{t=1}\frac{ \partial f^{d_1}(x_{i_t};w)}{\partial W^{l_t}}  \right\rangle
\end{equation}
where $\boldsymbol{d} = (d_1,\dots,d_k)$, $\boldsymbol{i}=(i_1,\dots,i_k)$ and $\boldsymbol{l} = (l_1,\dots,l_k)$. For simplicity, when, $i_1=\dots=i_k=j$, we denote: $\mathcal{T}^{\boldsymbol{l,d}}_{n,i,j,d} := \mathcal{T}^{\boldsymbol{l,i,d}}_{n,i,d}$ and $\mathcal{T}^{\boldsymbol{l}}_{n,i,j,d} := \mathcal{T}^{\boldsymbol{l,i,d}}_{n,i,d}$ when  $d_1=\dots=d_k=d$ as well. 

To estimate the order of magnitude of the expression in Eq.~\ref{eq:T}, we provide an explicit expression for $\frac{\partial^k f^d(x_i;w)}{\partial W^{l_1} \dots \partial W^{l_k} }$. First, we note that for any $w$, such that, $f^d(x_i;w)$ is $k$ times continuously differentiable at $w$, for any set $\boldsymbol{l} := \{l_1,\dots,l_k\}$, we have:
\begin{equation}
\frac{\partial^k f^d(x_i;w)}{\partial W^{l_1} \dots \partial W^{l_k} } = \frac{\partial^k f^d(x_i;w)}{\partial W^{l'_1} \dots \partial W^{l'_k} }
\end{equation}
where the set $\boldsymbol{l}' := \{l'_1,\dots,l'_k\}$ is an ordered version of $\boldsymbol{l}$, i.e., the two sets consist of the same elements but $l'_1 < \dots < l'_k$. In addition, we notice that for any multi-set $\boldsymbol{l}$, such that, $l_i = l_j$ for some $i\neq j$, then, 
\begin{equation}
\frac{\partial^k f^d(x_i;w)}{\partial W^{l_1} \dots \partial W^{l_k} } = 0    
\end{equation}
since $f^d(x_i;w)$ is a neural network with a piece-wise linear activation function. Therefore, with no loss of generality, we consider $\boldsymbol{l} = \{l_1,\dots,l_k\}$, such that, $l_1<\dots<l_k$.  It holds that:
\begin{equation}\label{der}
\frac{\partial^k f^d(x_i;w)}{\partial W^{l_1} \dots \partial W^{l_k} } = \frac{1}{\sqrt{n_{l_1-1}}}q^{l_1-1}_{i,d} \otimes \mathcal{A}^{l_1\rightarrow l_2}_{i,d}
\end{equation}
where $\mathcal{A}^{l_1\rightarrow l_2}_{i,d}$ is a $2k-1$ tensor, defined as follows:
\begin{equation}
\mathcal{A}^{l_j\rightarrow l_{j+1}}_{i,d} = 
\begin{cases}  
\frac{1}{\sqrt{n_{l_{j+1}-1}}}C^{l_j\rightarrow l_{j+1}}_{i,d} \otimes \mathcal{A}^{l_{j+1}\rightarrow l_{j+2}}_{i,d} & 1<j<k-1\\
\frac{1}{\sqrt{n_{l_{k}-1}}}C^{l_{k-1}\rightarrow l_{k}}_{i,d} \otimes C^{l_{k}\rightarrow L}_{i,d} & j=k-1\\
\end{cases} 
\end{equation}
where:
\begin{equation}\label{eq:c}
C^{l_j\rightarrow l_{j+1}}_{i,d} = \begin{cases}  
\sqrt{2}Z^{l_{j+1}-1}_{i,d}P_{i,d}^{l_j\rightarrow l_{j+1} - 1} & l_{j+1} \neq L\\
P_{i,d}^{l_j\rightarrow L} & else
\end{cases}
\end{equation}
and:
\begin{equation}
P_i^{u\rightarrow v} = \prod_{l = u}^{v-1}(\sqrt{\frac{2}{n_{l}}}W^{l+1}Z^{l}_i) \textnormal{ and } Z^l_i = \textnormal{diag} (\dot\sigma(y^{l}(x_i)))
\end{equation}
The individual gradients can be expressed using:
\begin{equation}\label{grad}
\frac{ \partial f^{d_j}_w(x_{i_j})}{\partial W^{l_j}}= \frac{q^{l_j-1}_{i_j,d_j} \otimes C^{l_j\rightarrow L}_{i_j,d_j}}{\sqrt{n_{l_j-1}}}
\end{equation}
Note that the following holds for any $u<v<h\leq L$:
\begin{equation}\label{eq:factor}
C^{u\rightarrow h}_{i,d} = C^{v\rightarrow h}\frac{W^v}{\sqrt{n_{v-1}}}C^{u\rightarrow v}_{i,d} \textnormal{ and } C^{u\rightarrow L}_{i,d} = C^{v-1\rightarrow L}_{i,d} P^{u\rightarrow v-1}_{i,d}
\end{equation}
In the following, given the sets $\boldsymbol{l} = \{l_1,\dots,l_k\}$, $\boldsymbol{i}=\{i_1,\dots,i_k\}$ and $\boldsymbol{d} = \{d_1,\dots,d_k\}$, we derive the limit of $\mathcal{T}^{\boldsymbol{l,i,d}}_{n,i,d}$ using elementary tensor algebra. By Eqs.~\ref{grad} and~\ref{der}, we see that:
\begin{equation}\label{eq:order}
\begin{aligned}
\mathcal{T}^{\boldsymbol{l,i,d}}_{n,i,d}
&= \Big\langle \bigotimes^{k}_{t=1}\frac{ \partial f^{d_t}(x_{i_t};w)}{\partial W^{l_t}},\frac{q^{l_1-1}_{i,d}}{\sqrt{n_{l_1-1}}} \otimes  \frac{C^{l_1\rightarrow l_{2}}_{i,d}}{\sqrt{n_{l_{2}-1}}} \otimes ...\otimes  \frac{C^{l_{r-1} \rightarrow l_k}_{i,d}}{\sqrt{n_{l_k-1}}} \otimes C^{l_k\rightarrow L}_{i,d}\Big\rangle
\\
&=\frac{1}{n_{l_1-1}}\left\langle q^{l_1-1}_{i,d} , q^{l_1-1}_{i_1,d_1}  \right \rangle \cdot \left\langle C_{i_k,d_k}^{l_k\rightarrow L} , C_{i,d}^{l_k\rightarrow L}  \right \rangle \prod_{j=1}^{k-1}\left\langle \frac{C_{i_{j},d_{j}}^{l_j\rightarrow L}\otimes q_{i_{j+1},d_{j+1}}^{l_{j+1}-1}}{n_{l_{j+1}-1}}, C_{i,d}^{l_j\rightarrow l_{j+1}}  \right \rangle
\end{aligned}
\end{equation}

We recall the analysis of~\cite{Yang2019TensorPI} showing that in the infinite width limit, with $n = \min(n_1\dots,n_{L-1}) \to \infty$, every pre-activation $y^l(x)$ of $f(x;w)$ at hidden layer $l\in [L]$ has all its coordinates tending to i.i.d. centered Gaussian processes of covariance $\Sigma^l(x,x') : \mathbb{R}^{n_0} \times \mathbb{R}^{n_0} \to \mathbb{R}$ defined recursively as follows:
\begin{equation}
\begin{aligned}
\Sigma^0(x,x') &= x^{\top} x', \\
\Lambda^l(x,x') &= 
\begin{bmatrix}
\Sigma^{l-1}(x,x) &
\Sigma^{l-1}(x,x') \\
\Sigma^{l-1}(x',x) &
\Sigma^{l-1}(x',x')
\end{bmatrix}
\in \mathbb{R}^{2\times 2},\\
\Sigma^l(x,x') &= \mathbb{E}_{(u,v)\sim \mathcal{N}(0,\Lambda^{l-1}(x,x'))}[\sigma(u)\sigma(v)] 
\end{aligned}
\end{equation}
In addition, we define the derivative covariance as follows:
\begin{equation}
\dot{\Sigma}^l(x,x') = \mathbb{E}_{(u,v)\sim \mathcal{N}(0,\Lambda^{l-1}(x,x'))}[\dot{\sigma}(u)\dot{\sigma}(v)] 
\end{equation}
when considering $x = x_i$ and $x' = x_j$ from the training set, we simply write $\Sigma^l_{i,j} := \Sigma^l(x_i,x_j)$ and $\dot{\Sigma}^l_{i,j} = \dot{\Sigma}^l(x_i,x_j)$.

\begin{lemma}\label{lem:arora1}
The following holds: 
\begin{enumerate}
    \item For $n_{v-1},\dots,n_1\to \infty$, we have: $P_i^{u \rightarrow v}(P_j^{u \rightarrow v})^{\top} \stackrel{d}{\longrightarrow} \prod_{l=u}^{v-1}\dot{\Sigma}_{i,j}^lI$.
    \item For $n_{L-1},\dots,n_1\to \infty$, we have: $P_{i,d_1}^{u \rightarrow L}(P_{j,d_2}^{u \rightarrow L})^{\top} \stackrel{d}{\longrightarrow} \prod_{l=u}^{L-1}\dot{\Sigma}_{i,j}^l\delta_{d_1=d_2}$.
    \item For $n_{v},\dots,n_{1}\to \infty$, we have: $\frac{(q_i^{v})^{\top} q_j^v}{n_v} \stackrel{d}{\longrightarrow} \Sigma_{i,j}^v$.
\end{enumerate}
Here, $\delta_{T}$ is an indicator that returns $1$ if $T$ is true and $0$ otherwise.
\end{lemma}

\proof{
See~\cite{conf/nips/AroraDH0SW19}.
}

\begin{lemma}\label{lem:d}
Let $k \geq 0$ and sets $\boldsymbol{l}=\{l_1,\dots,l_k\}$, $\boldsymbol{i}=\{i_1,\dots,i_k\}$ and $\boldsymbol{d}=\{d_1,\dots,d_k\}$. We have:
\begin{equation}
n^{\max(k-1,0)} \cdot \mathcal{T}^{\boldsymbol{l,i,d}}_{n,i,d} \stackrel{d}{\longrightarrow}
\begin{cases}
\delta_{\boldsymbol{d}}\cdot \prod_{j=1}^{k-1}\mathcal{G}_j & k > 1\\
const & k = 1 \\
\end{cases}
\end{equation}
as $n\to \infty$. Here, $\mathcal{G}_1,...,\mathcal{G}_{k-1}$ are centered Gaussian variables with finite, non-zero variances, and $\delta_{\boldsymbol{d}} := \delta(d_1 = ... = d_k = d)$.
\end{lemma}

\begin{proof}
The case $k=0$ is trivial. Let $k\geq 1$. By Eq.~\ref{eq:order}, it holds that:
\begin{equation}
\begin{aligned}
&n^{k-1}\mathcal{T}^{\boldsymbol{l,i,d}}_{n,i,d}\\ 
=& n^{k-1}\frac{\Big\langle q^{l_1-1}_{i,d} , q^{l_1-1}_{i_1,d_1}  \Big \rangle \Big\langle C_{i_k,d_k}^{l_k\rightarrow L} , C_{i,d}^{l_k\rightarrow L}  \Big \rangle}{n} \cdot  \prod_{j=1}^{k-1}\left\langle \frac{C_{i_j,d_j}^{l_j\rightarrow L}\otimes q_{i_{j+1},d_{j+1}}^{l_{j+1}-1}}{n}, C_{i,d}^{l_j\rightarrow l_{j+1}}  \right \rangle \\
=&  \frac{\Big\langle q^{l_1-1}_{i,d} , q^{l_1-1}_{i_1,d_1}  \Big \rangle \Big\langle C_{i_k,d_k}^{l_k\rightarrow L} , C_{i,d}^{l_k\rightarrow L}  \Big \rangle}{n}   \cdot \prod_{j=1}^{k-1} \left\langle C_{i_j,d_j}^{l_j\rightarrow L}\otimes q_{i_{j+1},d_{j+1}}^{l_{j+1}-1}, C_{i,d}^{l_j\rightarrow l_{j+1}}  \right \rangle
\end{aligned}
\end{equation}
Note that intermediate activations do not depend on the index $d_j$, and so we remove the dependency on $d_j$ in the relevant terms. Next, by applying Lem.~\ref{lem:arora1},
\begin{equation}\label{eq:main}
\frac{\Big\langle q^{l_1-1}_{i} , q^{l_1-1}_{i_1}  \Big \rangle \Big\langle C_{i_k,d_k}^{l_k\rightarrow L} , C_{i,d}^{l_k\rightarrow L}  \Big \rangle}{n} 
\stackrel{d}{\longrightarrow} \Sigma_{i,i_1}^{l_1-1}\left(\prod_{j=l_k}^{L}\dot{\Sigma}_{i,i_k}^{l_j}\right)\delta_{\boldsymbol{d}}
\end{equation}
Expanding the second term using Eq.~\ref{eq:factor}:
\begin{equation}\label{eq:22}
\begin{aligned}
&\Big\langle C_{i_j,d_j}^{l_j\rightarrow L}\otimes q_{i_{j+1}}^{l_{j+1}-1}, C_{i,d}^{l_j\rightarrow i_{j+1}}  \Big \rangle \\
&= C_{i_j,d_j}^{l_j\rightarrow L}C_{i}^{l_j\rightarrow i_{j+1}}q_{i_{j+1}}^{l_{j+1}-1} \\
&= C_{i_j,d_j}^{l_{j+1}-1\rightarrow L}P_{i_j}^{l_j\rightarrow l_{j+1}-1} (P_{i}^{l_j\rightarrow l_{j+1}-1})^{\top}  \sqrt{2} \cdot Z_{i}^{l_{j+1}-1}q_{i_{j+1}}^{l_{j+1}-1}\\
&=\sqrt{2}\cdot \Big\langle C_{i_j,d_j}^{l_{j+1}-1\rightarrow L}\otimes (Z_i^{l_{j+1}-1}q_{i_{j+1}}^{l_{j+1}-1}), P_{i_j}^{l_j\rightarrow l_{j+1}-1} (P_{i}^{l_j\rightarrow l_{j+1}-1})^{\top} \Big \rangle \\
&=\sqrt{2}\cdot C_{i_j,d_j}^{l_{j+1}-1\rightarrow L}P_{i_j}^{l_j\rightarrow l_{j+1}-1} (P_{i}^{l_j\rightarrow l_{j+1}-1})^{\top} Z_i^{l_{j+1}-1}q_{i_{j+1}}^{l_{j+1}-1} 
\end{aligned}
\end{equation} 
Since the limit of a product equals the product of limits (when the limits exist), it holds that (after taking the limit of the right term in the above inner product):
\begin{equation}
P_{i_j}^{l_j\rightarrow l_{j+1}-1} (P_{i}^{l_j\rightarrow l_{j+1}-1})^{\top} \stackrel{d}{\longrightarrow} \prod_{l=l_j}^{l_{j+1}-2}\dot{\Sigma}^l_{i,i_j} 
\end{equation}
Recall that in the infinite width limit, when conditioned on the outputs $q^{l-1}_{i},q^{l-1}_{j}$ the pre activations $y^l_i,y^l_j$ are GPs. Hence, when conditioned on the outputs $q^{l-1}_{i},q^{l-1}_{j}$, the diagonal components of the product $Z^l_iZ^l_j$ are independent. The GP behaviour argument then applies to terms  $C_{i_j,d_j}^{l_{j+1}-1\rightarrow L} Z_{i}^{l_{j+1}-1}q_{i_{j+1}}^{l_{j+1}-1}$. Assigning: 
\begin{equation}
\xi_j = C_{i_j,d_j}^{l_{j+1}-1\rightarrow L} Z_{i}^{l_{j+1}-1}q_{i_{j+1}}^{l_{j+1}-1}
\end{equation}
and their limits:
\begin{equation}
\xi_j \stackrel{d}{\longrightarrow} \mathcal{G}_j
\end{equation}
and denoting $\boldsymbol{\xi} = [\xi_1,...,\xi_{k-1}]$, and $\boldsymbol{\mathcal{G}} = [\mathcal{G}_1,...,\mathcal{G}_{k-1}]$, it holds using the multivariate Central Limit theorem:
\begin{equation}
\boldsymbol{\xi} \stackrel{d}{\longrightarrow} \boldsymbol{\mathcal{G}}
\end{equation}
Using the Mann-Wald theorem~\cite{mann1943} (where we take the mapping as the product pooling of $\boldsymbol{\xi}$), we have that:
\begin{equation}
\prod_{j=1}^{k-1}\xi_j \stackrel{d}{\longrightarrow} \prod_{j=1}^{k-1}\mathcal{G}_j
\end{equation}
Finally, by Slutsky's theorem,
\begin{equation}
\begin{aligned}
n^{k-1}\mathcal{T}^{\boldsymbol{l,i,d}}_{n,i,d} \stackrel{d}{\longrightarrow} \Sigma_{i,i_1}^{l_1-1}\left(\prod_{j=l_k}^{L}\dot{\Sigma}_{i,i_k}^{l_j}\right) \prod^{k-1}_{j=1}\left( \left[\prod_{l=l_j}^{l_{j+1}-2} \dot{\Sigma}^l_{i,i_j}  \right] \cdot \sqrt{2} \cdot \mathcal{G}_j \right) \cdot \delta_{\boldsymbol{d}} \\
% &= \Sigma_{i,i_1}^{l_1-1}\left(\prod_{j=k}^{L}\dot{\Sigma}_{i,i_k}^{l_j}\right) \lim_{n\to \infty} \prod^{k-1}_{j=1}\left( \left[\prod_{l=l_j}^{l_{j+1}-2} \dot{\Sigma}^l_{i,i_j}  \right] \cdot \sqrt{2} \cdot \xi_j \right) \cdot \delta_{\boldsymbol{d}} \\
\end{aligned}
\end{equation}
\end{proof}

% \hyperkernel*

% \begin{proof}

% We denote the output of $U = vec(g(x,z)) = [vec(U^1)...vec(U^H)]$, concatenated into a single vector of length $\hat{n} = \sum_{l=0}^{H-1}m_lm_{l+1}$. The components of the inner matrix $K_f(x,x')$ are given by:
% \[
% K_f(x,x')(i,j) = \sum_{l=1}^L \langle \frac{\partial U_i(x)}{\partial W^l},\frac{\partial U_j(x')}{\partial W^l} \rangle\\
% \]
% and it holds that in the infinite width limit, $K_f(x,x')$ is a diagonal matrix:
% \[
% \lim_{n \rightarrow \infty}K_f(x,x') = \Theta^f(x,x')I 
% \]

% Taking the width of both $f$ and $g$ to infinity, it is easy to show that:
% \[
% \lim_{n_1...n_L\rightarrow \infty}K(x,z,u') = \lim_{n_1...n_{L-1}\rightarrow \infty}K_f(x,x') \lim_{m_1...m_{H-1}\rightarrow \infty}K_g(z,z') 
% \]
% where $K_g(z,z') = \frac{\partial g(z)}{\partial U}\frac{\partial^\top g(z')}{\partial U}$, and $U$ are the the concatenation of all the weights of $g$, taken to be iid normally distributed.

% \end{proof}

% \begin{corollary}
% It holds that for any $k>1$, $d \in [H]$, $i \in [N]$ and any sets $\boldsymbol{l}=\{l_1,\dots,l_k\}$, $\boldsymbol{i}=\{i_1,\dots,i_k\}$ and $\boldsymbol{d}=\{d,\dots,d\}$:
% \begin{equation}
%  Var(\mathcal{T}^{\boldsymbol{l,i,d}}_{n,i,d}) = \mathcal{O}\left(\frac{1}{n^{2(k-1)}}\right)
% \end{equation}
% \end{corollary}

% \begin{proof}
% First, we would like to prove that for any $k>1$, $d \in [H]$, $i \in [N]$ and any sets $\boldsymbol{l}=\{l_1,\dots,l_k\}$, $\boldsymbol{i}=\{i_1,\dots,i_k\}$ and $\boldsymbol{d}=\{d,\dots,d\}$,  $\{\mathcal{T}^{\boldsymbol{l,i,d}}_{n,i,d}\}^{\infty}_{n=1}$ is a uniformly integrable set of random variables. To show that, 

% Using Theorem.~\ref{lem:d}, for any continuous mapping $\mathcal{F}$, it holds that:
% \[
% \mathcal{F}(n ^{k-1}\mathcal{T}^{\boldsymbol{l,i,d}}_{n,i,d})\stackrel{d}{\rightarrow} \mathcal{F}(\prod_{j=1}^{k-1}\mathcal{G}_j)
% \]
% In addition, it holds for any continuous and Lipschitz $\mathcal{F}$:
% \[
% \lim_{n \rightarrow \infty}\E[\mathcal{F}(n ^{k-1}\mathcal{T}^{\boldsymbol{l,i,d}}_{n,i,d})]\leq \E[\mathcal{F}(\prod_{j=1}^{k-1}\mathcal{G}_j)]
% \]
% specifically, choosing $\mathcal{F} = (\cdot)^2$:
% \[
% \E[n ^{2(k-1)}(\mathcal{T}^{\boldsymbol{l,i,d}}_{n,i,d})^2]\rightarrow \E[(\prod_{j=1}^{k-1}\mathcal{G}_j)^2]
% \]
% and so:
% \[
% \E[(\mathcal{T}^{\boldsymbol{l,i,d}}_{n,i,d})^2] =  \mathcal{O}(\frac{1}{n ^{2(k-1)}})
% \]

% \end{proof}

% \begin{corollary}\label{cor:T}
% It holds that for any $k\geq 1$, $d \in [H]$, $i \in [N]$ and any sets $\boldsymbol{l}=\{l_1,\dots,l_k\}$, $\boldsymbol{i}=\{i_1,\dots,i_k\}$ and $\boldsymbol{d}=\{d_1,\dots,d_k\}$:
% \begin{equation}
% \mathcal{T}^{\boldsymbol{l,i,d}}_{n,i,d} = \mathcal{O}_p\left(\frac{1}{n^{k-1}}\right)
% \end{equation}
% \end{corollary}

% \begin{proof}
% By Thm.~\ref{lem:d}, $X_n := n^{k-1} \cdot \mathcal{T}^{\boldsymbol{l,i,d}}_{n,i,d}$ converges to a random variable $X$. Therefore, by Lem.~\ref{lem:convO}, $X_n =\mathcal{O}_p(1)$. Since, $\frac{1}{n^{k-1}} = \mathcal{O}_p\left(\frac{1}{n^{k-1}}\right)$, then, by the properties of the probability big O notation, $\mathcal{T}^{\boldsymbol{l,i,d}}_{n,i,d} = \mathcal{O}_p\left(\frac{1}{n^{k-1}}\right)$.
% \end{proof}

% \subsection{Neural Networks}

% \begin{restatable}[Higher order terms of neural networks]{theorem}{initnet}\label{thm:net}
% Let $f(x;w)$ be a neural network with $n_L=1$ output neurons. Let $S = \{(x_i,y_i)\}^{N}_{i=1}$ be a dataset of labeled samples. Let $\ell = \ell^p$ be the loss function for $p \in \{1,2\}$. Consider a cost function of the form $c(w) := \sum^{N}_{i=1} \ell_i(f(x_i))$. Then, for any $r > p$ and $i\in [N]$, we have:
% \begin{equation}
% \left\langle \nabla^{(r)}_w c(w), \nabla_w c(w)^r \right\rangle = \mathcal{O}_p\left( 1/n^{r-p} \right) 
% \end{equation}
% and for $r\leq p$ it converges to a non-constant random variable.
% \end{restatable}

% \begin{proof} Throughout the proof we denote $f^1 := f$, $\boldsymbol{d} = (1,\dots,1)$ and $\boldsymbol{i} = (i_t,\dots,i_t)$, both of length $r$.

% {\bf Case $p=1$\quad} By Thm.~\ref{lem:d}, we have:
% \begin{equation}
% \begin{aligned}
% &\langle \nabla^{(r)}_w \ell(f(x_i;w),y_i),  (\nabla_w \ell(f(x_{i_t};w),y_{i_t}))^r \rangle \\
% =&\sgn(\ell_i) \cdot \langle \nabla^{(r)}_w f(x_i;w), (\nabla_w \ell(f(x_{i_t};w),y_{i_t}))^r \rangle \\
% =&\sgn(\ell_i) \cdot \sgn(\ell_{i_t})^r \cdot \langle \nabla^{(r)}_w f(x_i;w), (\nabla_w f(x_{i_t};w))^r \rangle \\
% =& \sgn(\ell_i) \cdot \sgn(\ell_{i_t})^r \cdot \sum_{\boldsymbol{l} \in [L]^r} \left\langle \frac{\partial^r f(x_i;w)}{\partial W^{l_1}\dots \partial W^{l_r}}, \bigotimes^{r}_{k=1}\frac{\partial f(x_{i_t};w)}{\partial W^{l_k}} \right\rangle\\
% =& \sgn(\ell_i) \cdot \sgn(\ell_{i_t})^r \cdot  \sum_{\boldsymbol{l} \in [L]^r}   \mathcal{T}^{\boldsymbol{l,i,d}}_{n,i,1} = \mathcal{O}_p\left(L^r/n^{r-1}\right)
% \end{aligned}
% \end{equation}

% {\bf Case $p=2$\quad} We notice that:
% \begin{equation}
% \begin{aligned}
% &\langle \nabla^{(r)}_w \ell(f(x_i;w),y_i), (\nabla_w \ell(f(x_{i_t};w),y_{i_t}))^r \rangle\\
% =&\langle \nabla^{(r)}_w (f(x_i;w)^2-2y_i\cdot f(x_i;w)), (\nabla_w \ell(f(x_{i_t};w),y_{i_t}))^r \rangle \\
% =&\langle \nabla^{(r)}_w f(x_i;w)^2, (\nabla_w \ell(f(x_{i_t};w),y_{i_t}))^r \rangle -2y_i \langle \nabla^{(r)}_w f(x_i;w), (\nabla_w \ell(f(x_{i_t};w),y_{i_t}))^r \rangle \\
% \end{aligned}
% \end{equation}
% In addition,
% \begin{equation}
% (\nabla_w \ell(f(x_{i_t};w),y_{i_t}))^r =  \nabla_w f(x_{i_t};w)^2 - 2y_{i_t} \nabla_w f(x_{i_t};w)
% \end{equation}
% Therefore, 
% \begin{equation}
% \begin{aligned}
% &\left\langle \nabla^{(r)}_w f(x_i;w)^2,(\nabla_w \ell(f(x_{i_t};w),y_{i_t}))^r \right\rangle  \\
% =& \left\langle \nabla^{(r)}_w f(x_i;w)^2, \left(\nabla_w f(x_{i_t};w)^2 - 2y_{i_t} \nabla_w f(x_{i_t};w)\right)^r \right\rangle\\
% =& \sum_{\boldsymbol{b} \in \{0,1\}^r } \left\langle \nabla^{(r)}_w f(x_i;w)^2, \bigotimes^{r}_{k=1} \nabla_w (-2y_{i_t})^{b_k} f(x_{i_t};w)^{2-b_k} \right\rangle\\
% =& \sum_{\boldsymbol{b} \in \{0,1\}^r } \prod^{r}_{t=1} (-2y_{i_t})^{1-b_t} \cdot \left\langle \nabla^{(r)}_w f(x_i;w)^2, \bigotimes^{r}_{k=1} \nabla_w f(x_{i_t};w)^{2-b_k} \right\rangle\\
% \end{aligned}
% \end{equation}
% We notice that $\nabla_w f(x;w)^2 = 2f(x;w) \cdot \nabla_w f(x;w)$. Hence,
% \begin{equation}
% \begin{aligned}
% &\left\langle \nabla^{(r)}_w f(x_i;w)^2,(\nabla_w \ell(f(x_{i_t};w),y_{i_t}))^r \right\rangle \\
% =& \sum_{\boldsymbol{b} \in \{0,1\}^r } \prod^{r}_{k=1} (-4y_{i_t} \cdot f(x_{i_t};w))^{1-b_k} \cdot \left\langle \nabla^{(r)}_w f(x_i;w)^2, \bigotimes^{r}_{k=1} \nabla_w f(x_{i_t};w) \right\rangle\\
% =& \left(\sum_{\boldsymbol{b} \in \{0,1\}^r } (-4y_{i_t} \cdot f(x_{i_t};w))^{r-\sum^{r}_{k=1}b_k} \right) \cdot \left\langle \nabla^{(r)}_w f(x_i;w)^2, \bigotimes^{r}_{k=1} \nabla_w f(x_{i_t};w) \right\rangle\\
% \end{aligned}
% \end{equation}
% We notice that $f(x_{i_t};w)$ converges to a Gaussian random variable $\mathcal{G}$. Therefore, by the Mann-Wald theorem~\cite{mann1943}, we have:
% \begin{equation}
% \sum_{\boldsymbol{b} \in \{0,1\}^r } (-4y_{i_t} \cdot f(x_{i_t};w))^{r-\sum^{r}_{k=1}b_k} \stackrel{d}{\longrightarrow} \sum_{\boldsymbol{b} \in \{0,1\}^r } (-4y_{i_t} \cdot \mathcal{G})^{r-\sum^{r}_{k=1}b_k} =: \mathcal{R}
% \end{equation}
% which is a non-constant random variable. By the proof of Lem.~\ref{lem:initgatingExt}, the term $\left\langle \nabla^{(r)}_w f(x_i;w)^2, \bigotimes^{r}_{t=1} f(x_{i_t};w) \right\rangle$ is of order $\mathcal{O}_p\left(\frac{1}{n^{r-2}}\right)$ for $r > 2$ and converges to a non-constant random variable $\mathcal{Q}$ otherwise. Therefore, for $r>2$, $\langle \nabla^{(r)}_w \ell(f(x_i;w),y_i), (\nabla_w \ell(f(x_{i_t};w),y_{i_t}))^r \rangle$ is of order $\mathcal{O}_p(\frac{1}{n^{r-2}})$. For $r\leq2$, we have:
% \begin{equation}
% \begin{aligned}
% \langle \nabla^{(r)}_w \ell(f(x_i;w),y_i), (\nabla_w \ell(f(x_{i_t};w),y_{i_t}))^r \rangle \stackrel{d}{\longrightarrow} \mathcal{R} \cdot \mathcal{Q}
% \end{aligned}
% \end{equation}
% which is in general non-constant. 

% By similar considerations, $\langle \nabla^{(r)}_w f(x_i;w), (\nabla_w \ell(f(x_{i_t};w),y_{i_t}))^r \rangle$ is of order $\mathcal{O}_p\left(\frac{1}{n^{r-1}}\right)$ for $r\geq 1$ and converges to a non-constant random variable otherwise. Finally, we conclude that $\langle \nabla^{(r)}_w \ell(f(x_i;w),y_i), (\nabla_w \ell(f(x_{i_t};w),y_{i_t}))^r \rangle$ is of order $\mathcal{O}_p\left(\frac{1}{n^{r-2}}\right)$ for $r > 2$ and converges to a non-constant random variable otherwise.

% {\bf Note: \par} in both cases, it might be possible to engineer a setting in which the considered expressions converge to constant random variables. However, we claim that this is the anomaly and not the likely case. For instance, if we consider $\mathcal{R} \cdot \mathcal{Q}$ from above, if $(x_i,y_i)$ are i.i.d samples of some distribution $D$ supported by $\mathbb{R}^{n_0+1}$, then, by renaming $\mathcal{R}$ and $\mathcal{Q}$ as $\mathcal{R}_S$ and $\mathcal{Q}_S$, with a non-zero probability over $S$, $\mathcal{R}_S \cdot \mathcal{Q}_S$ is non-constant, as $\mathcal{R}_S$ depends on $y_{i_t}$ and $\mathcal{Q}_S$ is independent of $y_{i_t}$. By similar considerations, we can also show that $\langle \nabla^{(r)}_w \ell(f(x_i;w),y_i),  (\nabla_w \ell(f(x_{i_t};w),y_{i_t}))^r \rangle$ converges to a non-constant random variable when $r=1$.
% \end{proof}

% \subsection{Products}

% \begin{lemma}\label{lem:prodh1}
% Let $\sigma$ be a piece-wise linear activation function and $\ell=\ell^1$. Let $h(x;w) = \prod^{H}_{d=1} f^d(x;w)$, where, $f(x;w)$ is a neural network with $H$ output neurons $\{f^d(x;w)\}^{H}_{d=1}$. We have:
% \begin{equation}
% \begin{aligned}
% \left\langle \nabla^{(r)} h(x_i;w) , (\nabla_w\ell(h(x_{i_t};w),y_{i_t}))^r \right\rangle = \sgn(\ell_{i_t})^r \cdot \left\langle \nabla^{(r)}_w h(x_i;w), (\nabla_w h(x_{i_t};w))^r \right\rangle \\
% \end{aligned}
% \end{equation}
% \end{lemma}

% \begin{proof}
% Since $\ell = \ell^1$, we have:
% \begin{equation}
% \nabla_w\ell(h(x_{i_t};w),y_{i_t}) = \sgn(\ell_{i_t}) \cdot \nabla_w h(x_{i_t};w) 
% \end{equation}
% Hence,
% \begin{equation}
% (\nabla_w\ell(h(x_{i_t};w),y_{i_t}))^r = \sgn(\ell_{i_t})^r\cdot (\nabla_w h(x_{i_t};w))^r
% \end{equation}
% Therefore, we have the desired equation.
% \end{proof}

% \begin{lemma}\label{lem:prodh2}
% Let $\sigma$ be a piece-wise linear activation function. Let $h(x;w) = \prod^{H}_{d=1} f^d(x;w)$, where, $f(x;w)$ is a neural network with $H$ output neurons $\{f^d(x;w)\}^{H}_{d=1}$.  Let $S = \{(x_i,y_i)\}^{N}_{i=1}$ be a dataset of labeled samples. Let $i,i_t \in [N]$ be some indices. We have: 
% \begin{equation}
% \begin{aligned}
% &\left\langle \nabla^{(r)}_{w} h(x_i;w), (\nabla_w h(x_{i_t};w))^r \right\rangle \\
% =& \sum_{\substack{\alpha_1 + \dots + \alpha_H = r\\ \alpha_1,\dots,\alpha_H \geq 0}} \frac{r!}{\alpha_1 ! \cdots \alpha_{H}! } \cdot \prod^{H}_{d=1} \sum_{\boldsymbol{l} \in [L]^{\alpha_d}} \sum_{\boldsymbol{q} \in [H]^{\alpha_d}} \prod^{\alpha_d}_{k=1} \left( \prod_{j \in [H] \setminus\{q_k\}} f^j(x_{i_{t}};w)\right) \mathcal{T}^{\boldsymbol{l,i,q}}_{n,i,d} \\
% \end{aligned}
% \end{equation}
% Here, $\boldsymbol{i}=(i_t,\dots,i_t)$ is of length $\alpha_d$.
% \end{lemma}

% \begin{proof}
% By the higher order product rule: 
% \begin{equation}
% \begin{aligned}
% \nabla^{(r)}_{w} h(x;w) = \sum_{\substack{\alpha_1 + \dots + \alpha_H = r\\ \alpha_1,\dots,\alpha_H \geq 0}} \frac{r!}{\alpha_1 ! \cdots \alpha_{H}! } \cdot  \bigotimes^{H}_{d=1} \nabla^{(\alpha_{d})}_{w} f^{d}(x;w) 
% \end{aligned}
% \end{equation}
% In addition, by elementary tensor algebra, we have:
% \begin{equation}
% \begin{aligned}
% &\left\langle \nabla^{(r)}_{w} h(x_i;w), (\nabla_w h(x_{i_t};w))^r \right\rangle \\
% =& \sum_{\substack{\alpha_1 + \dots + \alpha_H = r\\ \alpha_1,\dots,\alpha_H \geq 0}} \frac{r!}{\alpha_1 ! \cdots \alpha_{H}! } \cdot \left\langle  \bigotimes^{H}_{d=1} \nabla^{(\alpha_{d})}_{w} f^{d}(x_i;w)  ,  (\nabla_w h(x_{i_t};w))^r \right\rangle\\
% =& \sum_{\substack{\alpha_1 + \dots + \alpha_H = r\\ \alpha_1,\dots,\alpha_H \geq 0}} \frac{r!}{\alpha_1 ! \cdots \alpha_{H}! } \cdot \prod^{H}_{d=1} \left\langle  \nabla^{(\alpha_{d})}_{w} f^{d}(x_i;w) ,  (\nabla_w h(x_{i_t};w))^{\alpha_d} \right\rangle \\
% =& \sum_{\substack{\alpha_1 + \dots + \alpha_H = r\\ \alpha_1,\dots,\alpha_H \geq 0}} \frac{r!}{\alpha_1 ! \cdots \alpha_{H}! } \cdot \prod^{H}_{d=1} \sum_{\boldsymbol{l} \in [L]^{\alpha_d}}\left\langle  \frac{\partial^{\alpha_d} f^d(x_i;w)}{\partial W^{l_1}\dots \partial W^{l_{\alpha_d}}} , \bigotimes^{\alpha_d}_{k=1} \frac{\partial h(x_{i_{t}};w)}{\partial W^{l_k}} \right\rangle 
% \end{aligned}
% \end{equation}
% Next, we would like to achieve a simplified formulation of each one of the following terms:
% \begin{equation}
% \left\langle  \frac{\partial^{\alpha_d} f^d(x_i;w)}{\partial W^{l_1}\dots \partial W^{l_{\alpha_d}}} , \bigotimes^{\alpha_d}_{k=1} \frac{\partial h(x_{i_{t}};w)}{\partial W^{l_k}}\right\rangle 
% \end{equation}
% Next, by the product rule:
% \begin{equation}
% \frac{\partial h(x_{i_{t}};w)}{\partial W^{l_k}} = \sum^{H}_{q=1} \left(\prod_{j \in [H] \setminus \{q\} } f^j(x_{i_{t}};w) \right) \cdot \frac{\partial f^q(x_{i_{t}};w)}{\partial W^{l_k}} 
% \end{equation}
% Hence,
% \begin{equation}
% \begin{aligned}
% \bigotimes^{\alpha_d}_{k=1}\frac{ \partial h(x_{i_{t}};w)}{\partial W^{l_k}}
% = \sum_{\boldsymbol{q} \in [H]^{\alpha_d}} \prod^{\alpha_d}_{k=1} \left( \prod_{j \in [H] \setminus\{q_k\}} f^j(x_{i_{t}};w)\right)  \bigotimes^{\alpha_d}_{k=1}\frac{ \partial f^{q_k}(x_{i_{t}};w)}{\partial W^{l_k}}
% \end{aligned}
% \end{equation}
% In particular, 
% \begin{equation}
% \begin{aligned}
% &\left\langle \nabla^{(r)}_{w} h(x_i;w), (\nabla_w h(x_{i_t};w))^r \right\rangle \\
% =& \sum_{\substack{\alpha_1 + \dots + \alpha_H = r\\ \alpha_1,\dots,\alpha_H \geq 0}} \frac{r!}{\alpha_1 ! \cdots \alpha_{H}! } \cdot \prod^{H}_{d=1} \sum_{\boldsymbol{l} \in [L]^{\alpha_d}} \sum_{\boldsymbol{q} \in [H]^{\alpha_d}} \prod^{\alpha_d}_{k=1} \left( \prod_{j \in [H] \setminus\{q_k\}} f^j(x_{i_{t}};w)\right) \mathcal{T}^{\boldsymbol{l,i,q}}_{n,i,d} \\
% \end{aligned}
% \end{equation}
% \end{proof}

% % \begin{lemma}\label{lem:prodh1}
% % Let $\sigma$ be a piece-wise linear activation function. Let $h(x;w) = \prod^{H}_{d=1} f^d(x;w)$, where, $f(x;w)$ is a neural network with $H$ output neurons $\{f^d(x;w)\}^{H}_{d=1}$ and $\sigma$ activations. We have: 
% % \begin{equation}
% % \begin{aligned}
% % &\langle \nabla^{(r)}_{w} h(x_i;w), \nabla_w c(w)^r \rangle\\ =&\sum_{\substack{\alpha_1 + \dots + \alpha_H = r\\ \alpha_1,\dots,\alpha_H \geq 0}} \frac{r!}{\alpha_1 !  \cdots  \alpha_{H}! } \cdot \prod^{H}_{d=1} \left\langle \nabla^{(\alpha_{d})}_{w} f^{d}(x_i;w) , \nabla_w c(w)^{\alpha_{d}} \right\rangle 
% % \end{aligned}
% % \end{equation}
% % \end{lemma}

% % \begin{proof}
% % By the higher order product rule: 
% % \begin{equation}
% % \begin{aligned}
% % \nabla^{(r)}_{w} h(x;w) = \sum_{\substack{\alpha_1 + \dots + \alpha_H = r\\ \alpha_1,\dots,\alpha_H \geq 0}} \frac{r!}{\alpha_1 ! \cdots \alpha_{H}! } \cdot  \bigotimes^{H}_{d=1} \nabla^{(\alpha_{d})}_{w} f^{d}(x;w) 
% % \end{aligned}
% % \end{equation}
% % In addition, by elementary tensor algebra, we have:
% % \begin{equation}
% % \begin{aligned}
% % &\langle \nabla^{(r)}_{w} h(x_i;w), \nabla_w c(w)^r \rangle \\
% % =& \sum_{\substack{\alpha_1 + \dots + \alpha_H = r\\ \alpha_1,\dots,\alpha_H \geq 0}} \frac{r!}{\alpha_1 ! \cdots \alpha_{H}! } \cdot \left\langle   \bigotimes^{H}_{d=1} \nabla^{(\alpha_{d})}_{w} f^{d}(x;w)  , \nabla_w c(w)^{\alpha_d} \right\rangle\\
% % =& \sum_{\substack{\alpha_1 + \dots + \alpha_H = r\\ \alpha_1,\dots,\alpha_H \geq 0}} \frac{r!}{\alpha_1 ! \cdots \alpha_{H}! } \cdot \prod^{H}_{d=1} \left\langle  \nabla^{(\alpha_{d})}_{w} f^{d}(x;w) , \nabla_w c(w)^{\alpha_{d}} \right\rangle 
% % \end{aligned}
% % \end{equation}
% % \end{proof}

% \begin{lemma}\label{lem:initgatingExt}
% Let $h(x;w) = \prod^{H}_{d=1} f^d(x;w)$, where, $f(x;w)$ is a neural network with $H$ output neurons $\{f^d(x;w)\}^{H}_{d=1}$. Let $S = \{(x_i,y_i)\}^{N}_{i=1}$ be a dataset of labeled samples. Then, for any $r > H$ and $i,i_t\in [N]$, we have:
% \begin{equation}
% \left\langle \nabla^{(r)}_w h(x_i;w), (\nabla_w h(x_{i_t};w))^r \right\rangle = \mathcal{O}_p\left(\frac{1}{n^{r-H}} \right) 
% \end{equation}
% and for $r\leq H$ it converges to a non-constant random variable.
% \end{lemma}

% \begin{proof}
% By Lem.~\ref{lem:prodh2}, we have:
% \begin{equation}\label{eq:hc}
% \begin{aligned}
% &\left\langle \nabla^{(r)}_{w} h(x_i;w), (\nabla_w h(x_{i_t};w))^r \right\rangle \\
% =& \sum_{\substack{\alpha_1 + \dots + \alpha_H = r\\ \alpha_1,\dots,\alpha_H \geq 0}} \frac{r!}{\alpha_1 ! \cdots \alpha_{H}! } \cdot \prod^{H}_{d=1} \sum_{\boldsymbol{l} \in [L]^{\alpha_d}} \sum_{\boldsymbol{q} \in [H]^{\alpha_d}} \mathcal{Q}^{\boldsymbol{i,q}}_{n,d} \cdot \mathcal{T}^{\boldsymbol{l,i,q}}_{n,i,d} \\
% \end{aligned}
% \end{equation}
% where 
% \begin{equation}
% \mathcal{Q}^{\boldsymbol{i,q}}_{n,d} := \prod^{\alpha_d}_{k=1} \left( \prod_{j \in [H] \setminus\{q_k\}} f^j(x_{i_{t}};w)\right)
% \end{equation}
% By the central limit theorem, $\{f^j(x_{i_{t}};w)\}^{H}_{j=1}$ jointly converges in distribution to a set of Gaussian random variables $\{\mathcal{G}^j\}^{H}_{j=1}$ as $n \to \infty$. Hence, by the Mann-Wald theorem~\cite{mann1943}, $\mathcal{Q}^{\boldsymbol{i,q}}_{n,d}$ converges to $\mathcal{Q}^{\boldsymbol{i,q}}_{d} := \left(\prod^{\alpha_d}_{k=1} \prod_{j \in [H] \setminus\{q_k\}} \mathcal{G}^j\right)$ in distribution. In particular, by Lem.~\ref{lem:convO}, we have: $\mathcal{Q}^{\boldsymbol{i,q}}_{n,d} = \mathcal{O}_p(1)$. By Cor.~\ref{cor:T}, if $\alpha_d\geq 1$, we have:
% \begin{equation}
% \mathcal{Q}^{\boldsymbol{i,q}}_{n,d} \cdot \mathcal{T}^{\boldsymbol{l,i,q}}_{n,i,d} = \mathcal{O}_p\left(\frac{1}{n^{\alpha_d-1}}\right)
% \end{equation}
% In addition, for $\alpha_d = 0$, $\mathcal{Q}^{\boldsymbol{i,q}}_{n,d} = 1$ and $\mathcal{T}^{\boldsymbol{l,i,q}}_{n,i,d} = f^d(x_i;w)$. Therefore, in this case, $\mathcal{Q}^{\boldsymbol{i,q}}_{n,d} \cdot \mathcal{T}^{\boldsymbol{l,i,q}}_{n,i,d} = \mathcal{O}_p(1)$ as $f^d(x_i;w)$ converges to a Gaussian random variable as $n \to \infty$. In particular, for any $r \geq H$, we have:
% \begin{equation}
% \begin{aligned}
% &\left\langle \nabla^{(r)}_w h(x_i;w), (\nabla_w h(x_{i_t};w))^r \right\rangle\\ 
% =&\mathcal{O}_p\left(\sum_{\substack{\alpha_1 + \dots + \alpha_{H} = r\\ \alpha_1,\dots,\alpha_{H} \geq 0}} \frac{r!}{\alpha_1 ! \cdots  \alpha_{H}! } \cdot \prod^{H}_{d=1} \left(\frac{m\cdot L\cdot H}{n}\right)^{\max(\alpha_d-1,0)} \right) \\
% =& \mathcal{O}_p\left(\sum_{\substack{\alpha_1 + \dots + \alpha_{H} = r\\ \alpha_1,\dots,\alpha_{H} \geq 0}} \frac{r!}{\alpha_1 ! \cdots  \alpha_{H}! } \cdot \left(\frac{m\cdot L \cdot H}{n}\right)^{r-H} \right) \\
% =& \mathcal{O}_p\left(H^r \cdot \left(\frac{m\cdot L\cdot H}{n}\right)^{r-H} \right)\\
% =& \mathcal{O}_p\left(\frac{1}{n^{r-H}} \right)\\
% \end{aligned}
% \end{equation}
% Next, we consider the case $r \leq H$. We recall that by Eq.~\ref{eq:hc}, we have:
% \begin{equation}
% \begin{aligned}
% &\left\langle \nabla^{(r)}_w h(x_i;w), (\nabla_w h(x_{i_t};w))^r \right\rangle\\
% =& \sum_{\substack{\alpha_1 + \dots + \alpha_H = r\\ \alpha_1,\dots,\alpha_H \geq 0}} \frac{r!}{\alpha_1 ! \cdots \alpha_{H}! } \cdot \prod^{H}_{d=1} \sum_{\boldsymbol{l} \in [L]^{\alpha_d}} \sum_{\boldsymbol{q} \in [H]^{\alpha_d}} \mathcal{Q}^{\boldsymbol{i,q}}_{n,d}\cdot \mathcal{T}^{\boldsymbol{l,i,q}}_{n,i,d} \\
% \end{aligned}
% \end{equation}
% We notice that, by Thm.~\ref{lem:d}, for any $\alpha_d \geq 2$, the term $\mathcal{T}^{\boldsymbol{l,i,q}}_{n,i,d}$ tends to zero as $n\to \infty$. In addition, $\mathcal{Q}^{\boldsymbol{i,q}}_{n,d} = \mathcal{O}_p(1)$. Therefore, for any $\alpha_d \geq 2$, we have:
% \begin{equation}
% \sum_{\boldsymbol{l} \in [L]^{\alpha_d}} \sum_{\boldsymbol{q} \in [H]^{\alpha_d}} \mathcal{Q}^{\boldsymbol{i,q}}_{n,d}\cdot \mathcal{T}^{\boldsymbol{l,i,q}}_{n,i,d} \stackrel{d}{\longrightarrow} 0
% \end{equation}
% On the other hand, by Lem.~\ref{cor:T}, for any $\alpha_d \leq 1$, we have: $\mathcal{T}^{\boldsymbol{l,i,q}}_{n,i,d} = \mathcal{O}_p(1)$. Hence, for any $\alpha_1,\dots,\alpha_r \geq 0$, such that, at least one $\alpha_j \geq 2$, we have:
% \begin{equation}
% \prod^{H}_{d=1} \sum_{\boldsymbol{l} \in [L]^{\alpha_d}} \sum_{\boldsymbol{q} \in [H]^{\alpha_d}} \mathcal{Q}^{\boldsymbol{i,q}}_{n,d}\cdot \mathcal{T}^{\boldsymbol{l,i,q}}_{n,i,d} \stackrel{d}{\longrightarrow} 0
% \end{equation}
% % Therefore, if the limit exists, we have: \TG{Fix from here:}
% % \begin{equation}\label{eq:sum}
% % \begin{aligned}
% % &\lim_{n\to \infty}\langle \nabla^{(r)}_w h(x_i;w), (\nabla_w h(x_{i_t};w))^r\rangle \\
% % =& \lim_{n\to \infty} \sum_{\substack{\alpha_1 + \dots + \alpha_H = r\\ \alpha_1,\dots,\alpha_H \geq 0}} \frac{r!}{\alpha_1 ! \cdots  \alpha_{H}! } \prod^{H}_{d=1} \sum_{\boldsymbol{l} \in [L]^{\alpha_d}} \sum_{\boldsymbol{q} \in [H]^{\alpha_d}} \mathcal{Q}^{\boldsymbol{i,q}}_{n,d}\cdot \mathcal{T}^{\boldsymbol{l,i,q}}_{n,i,d} \\
% % =& \lim_{n\to \infty} \sum_{\substack{\alpha_1 + \dots + \alpha_H = r\\ 0\leq \alpha_1,\dots,\alpha_H \leq 1}} r! \cdot \prod^{H}_{d=1} \sum_{\boldsymbol{l} \in [L]^{\alpha_d}} \sum_{\boldsymbol{q} \in [H]^{\alpha_d}} \mathcal{Q}^{\boldsymbol{i,q}}_{n,d}\cdot \mathcal{T}^{\boldsymbol{l,i,q}}_{n,i,d} =: \clubsuit
% % \end{aligned}
% % \end{equation}
% On the other hand, for any $0\leq \alpha_1,\dots,\alpha_H \leq 1$, by the central limit theorem and Cor.~\ref{cor:T}, the terms $\{\mathcal{T}^{\boldsymbol{l,i,q}}_{n,i,d}\}^{H}_{d=1}$ and $\{\mathcal{Q}^{\boldsymbol{i,q}}_{n,d}\}^{H}_{d=1}$ converge jointly in distribution to some random variables $\{\mathcal{T}^{\boldsymbol{l,i,q}}_{i,d}\}^{H}_{d=1}$ and $\{\mathcal{Q}^{\boldsymbol{i,q}}_{d}\}^{H}_{d=1}$ as $n \to \infty$. Hence, by the Mann-Wald theorem~\cite{mann1943}, 
% \begin{equation}
% \langle \nabla^{(r)}_w h(x_i;w), (\nabla_w h(x_{i_t};w))^r\rangle \stackrel{d}{\longrightarrow} \sum_{\substack{\alpha_1 + \dots + \alpha_H = r\\ 0\leq \alpha_1,\dots,\alpha_H \leq 1}} r! \cdot \prod^{H}_{d=1} \sum_{\boldsymbol{l} \in [L]^{\alpha_d}} \sum_{\boldsymbol{q} \in [H]^{\alpha_d}} \mathcal{Q}^{\boldsymbol{i,q}}_{d}\cdot \mathcal{T}^{\boldsymbol{l,i,q}}_{i,d}
% \end{equation}
% \end{proof}

% \begin{theorem}\label{thm:initgatingExt}
% Let $h(x;w) = \prod^{H}_{d=1} f^d(x;w)$, where, $f(x;w)$ is a neural network with $H$ output neurons $\{f^d(x;w)\}^{H}_{d=1}$. Let $S = \{(x_i,y_i)\}^{N}_{i=1}$ be a dataset of labeled samples. Let $\ell = \ell^1$ be the loss function. Consider a cost function of the form $c(w) := \sum^{N}_{i=1} \ell(h(x_i;w),y_i)$. For any $r > H$ and $i,i_t\in [N]$, we have:
% \begin{equation}
% \left\langle \nabla^{(r)}_w h(x_i;w), (\nabla_w \ell(h(x_i;w),y_i) )^r \right\rangle = \mathcal{O}_p\left(\frac{1}{n^{r-H}} \right) 
% \end{equation}
% and for $r\leq H$ it converges to a non-constant random variable.
% \end{theorem}

% \begin{proof}
% By Lem.~\ref{lem:prodh1}, we have:
% \begin{equation}
% \left\langle \nabla^{(r)}_w h(x_i;w), (\nabla_w \ell(h(x_i;w),y_i) )^r \right\rangle = \sgn(\ell_{i_t})^r \cdot \left\langle \nabla^{(r)}_w h(x_i;w), (\nabla_w h(x_{i_t};w))^r \right\rangle
% \end{equation}
% In addition, by the Mann-Wald theorem~\cite{mann1943}, the term $\ell_{i_t} = h(x_{i_t};w) - y_{i_t}$ converges in distribution to a normally distributed random variable $\mathcal{G}_{i_t}$ centered at $-y_{i_t}$. In particular, $\lim_{n\to \infty}\mathbb{P}[\ell_{i_t} \geq 0] = \mathbb{P}[\mathcal{G}_{i_t} \geq 0]$. Therefore, $\sgn(\ell_{i_t})$ converges in distribution to $\sgn(\mathcal{G}_{i_t})$. Therefore, by combining it with Lem.~\ref{lem:initgatingExt}, for any $r>H$, we have:
% \begin{equation}
% \begin{aligned}
% \left\langle \nabla^{(r)}_w h(x_i;w), (\nabla_w \ell(h(x_{i_t};w),y_{i_t}))^r \right\rangle = \mathcal{O}_p\left( \frac{1}{n^{r-H}}\right)\\ 
% \end{aligned}
% \end{equation}
% On the other hand, for any $r\leq H$, the term $\left\langle \nabla^{(r)}_w h(x_i;w), (\nabla_w h(x_{i_t};w))^r \right\rangle$ converges to a non-constant random variable $\mathcal{R}$. Therefore,
% \begin{equation}
% \left\langle \nabla^{(r)}_w h(x_i;w), \nabla_w c(w)^r \right\rangle \stackrel{d}{\longrightarrow} \sgn(\mathcal{G}_{i_t})^r \cdot \mathcal{R}
% \end{equation}
% Since $\mathcal{R}$ is non-constant and continuous, and $\sgn(\mathcal{G}_{i_t})^r$ is a binary random variable, the product $\sgn(\mathcal{G}_{i_t})^r \cdot \mathcal{R}$ is non-constant as well.  
% \end{proof}

% \initgating*

% \proof{Follows from Thm.~\ref{thm:initgatingExt} for $H=2$.}

\subsection{Proof of Thm.~\ref{thm:hyper}}

Since we assume that $g$ is a finite neural network, i.e., $m_l < \infty$ for all $l \in [H]$, throughout the proofs with no loss of generality we assume that $m_1=\dots=m_H=1$. 

\begin{lemma}\label{lem:h1}
Let $h(u;w) = g(z;f(x;w))$ be a hypernetwork. We have: 
\begin{equation}
\begin{aligned}
\mathcal{K}^{(r)}_{i,j} =& \sum_{\substack{\alpha_1 + \dots + \alpha_{H} = r\\ \alpha_1,\dots,\alpha_{H} \geq 0}} \frac{r!}{\alpha_1 ! \cdots \alpha_{H}! }\cdot z_i \cdot \left[ \prod^{H-1}_{j=1} \dot\sigma(g^{j}_i) \right]  \cdot \prod^{H}_{d=1} \left\langle \nabla^{(\alpha_{d})}_{w} f^{d}_i , (\nabla_w h_j )^{\alpha_d} \right\rangle 
\end{aligned}
\end{equation}
\end{lemma}

\begin{proof}
By the higher order product rule and the fact that the second derivative of a piece-wise linear function is $0$ everywhere: 
\begin{equation}
\begin{aligned}
\nabla^{(r)}_{w} h_i = \sum_{\substack{\alpha_1 + \dots + \alpha_{H} = r\\ \alpha_1,\dots,\alpha_{H} \geq 0}} \frac{r!}{\alpha_1 ! \cdots \alpha_{H}! }  \cdot z_i \cdot \nabla^{(\alpha_{H})}_{w} f^{H}_i \bigotimes^{H-1}_{d=1} D_{H-d} 
\end{aligned}
\end{equation}
where 
\begin{equation}
D_d := \dot\sigma(g^{d}_i) \cdot \nabla^{(\alpha_{d})}_{w} f^{d}_i
\end{equation}
In addition, by elementary tensor algebra, we have:
\begin{equation}
\begin{aligned}
\mathcal{K}^{(r)}_{i,j} =&\langle \nabla^{(r)}_{w} h_i ,(\nabla_w h_j )^r \rangle \\
=& \sum_{\substack{\alpha_1 + \dots + \alpha_{H} = r\\ \alpha_1,\dots,\alpha_{H} \geq 0}} \frac{r!}{\alpha_1 ! \cdots \alpha_{H}! }  z_i \cdot\left\langle   \nabla^{(\alpha_{H})}_{w} f^{H}_i \cdot \bigotimes^{H-1}_{d=1} D_{H-d} , (\nabla_w h_j )^r \right\rangle\\
=& \sum_{\substack{\alpha_1 + \dots + \alpha_{H} = r\\ \alpha_1,\dots,\alpha_{H} \geq 0}} \frac{r!}{\alpha_1 ! \cdots \alpha_{H}! } z_i\cdot \left\langle  \nabla^{(\alpha_{H})}_w f^{H}_i, (\nabla_w h_j )^{\alpha_{H}} \right\rangle\\
&\quad\quad\quad\quad\quad\quad \cdot \prod^{H-1}_{d=1} \left\langle \dot\sigma(g^{H-d}_i) \cdot \nabla^{(\alpha_{H-d})}_{w} f^{H-d}_i , (\nabla_w h_j )^{\alpha_{H-d}} \right\rangle \\
=& \sum_{\substack{\alpha_1 + \dots + \alpha_{H} = r\\ \alpha_1,\dots,\alpha_{H} \geq 0}} \frac{r!}{\alpha_1 ! \cdot \dots \alpha_{H}! }\cdot z_i \cdot \left[ \prod^{H-1}_{d=1} \dot\sigma(g^{d}_i) \right] \cdot \prod^{H}_{d=1} \left\langle \nabla^{(\alpha_{d})}_{w} f^{d}_i , (\nabla_w h_j )^{\alpha_{d}} \right\rangle 
\end{aligned}
\end{equation}
\end{proof}

\begin{lemma}\label{lem:h2}
Let $h(u;w) = g(z;f(x;w))$ be a hypernetwork. In addition, let,
\begin{equation}
\forall d\in [H]:~~h^d_j := 
a^{d-1}_j \prod^{H-d}_{t=1} f^{H-t+1}_j \cdot \dot\sigma_{H-t}(g^{H-t}_j )
\end{equation}
We have:
\begin{equation}
\begin{aligned}
\left\langle \nabla^{(\alpha_d)}f^d_i , (\nabla_w h_j)^{\alpha_{d}} \right\rangle = \sum_{\boldsymbol{l}\in [L]^{\alpha_d}} \sum_{\boldsymbol{d} \in [H]^{\alpha_d}} \left(\prod^{\alpha_d}_{k=1} h^{d_k}_j\right) \cdot  \mathcal{T}^{\boldsymbol{l,d}}_{n,i,j,d} 
\end{aligned}
\end{equation}
\end{lemma}

\begin{proof}
We have:
\begin{equation}
\begin{aligned}
\left\langle \nabla^{(\alpha_d)}f^d_i , (\nabla_w h_j )^{\alpha_{d}} \right\rangle = \sum_{\boldsymbol{l} \in [L]^{\alpha_d}}  \left\langle \frac{\partial^{\alpha_d} f^d_i }{\partial W^{l_1} \dots \partial W^{l_{\alpha_d}} }, \bigotimes^{\alpha_d}_{k=1} \frac{ \partial h_j }{\partial W^{l_k}} \right\rangle
\end{aligned}
\end{equation}
By the product rule:
\begin{equation}
\begin{aligned}
\frac{\partial h_j }{\partial W^{l_k}} 
=& 
\sum^{H}_{d=1} \left[\prod^{H-d}_{t=1} f^{H-t+1}_j \cdot \dot\sigma_{H-t}(g^{H-t}_j) \right] \cdot \frac{\partial f^d_j }{\partial W^{l_k}} \cdot a^{d-1}_j =\sum^{H}_{d=1} h^d_j  \cdot \frac{\partial f^d_j}{\partial W^{l_k}} 
\end{aligned}
\end{equation}
% \begin{equation}
% \frac{\partial h(x_{i_j},z_{i_j};w)}{\partial W^{l_j}} = \sum^{H}_{i=1} \left(\prod_{t \in [H] \setminus \{i\} } f^t(x_{i_j};w) \right) \cdot \frac{\partial f^i_w(x_{i_j})}{\partial W^{l_j}} 
% \end{equation}
Hence, 
\begin{equation}
\begin{aligned}
\bigotimes^{\alpha_d}_{k=1}\frac{ \partial h_j}{\partial W^{l_k}}
= \sum_{\boldsymbol{d} \in [H]^{\alpha_d}} \left(\prod^{\alpha_d}_{k=1} h^{d_k}_j\right)  \bigotimes^{\alpha_d}_{k=1}\frac{ \partial f^{d_k}_j}{\partial W^{l_k}}
\end{aligned}
\end{equation}
In particular, 
\begin{equation}
\begin{aligned}
\left\langle \nabla^{(\alpha_d)}f^d_i , (\nabla_w h_j)^{\alpha_{d}} \right\rangle = \sum_{\boldsymbol{l}\in [L]^{\alpha_d}} \sum_{\boldsymbol{d} \in [H]^{\alpha_d}} \left(\prod^{\alpha_d}_{k=1} h^{d_k}_j\right) \cdot  \mathcal{T}^{\boldsymbol{l,d}}_{n,i,j,d}
\end{aligned}
\end{equation}
\end{proof}

\begin{restatable}[Higher order terms for hypernetworks]{theorem}{inithypernet}\label{thm:hyper}
Let $h(u) = g(z;f(x))$ be a hypernetwork. Then, we have:
% Let $S = \{(u_i,y_i)\}^{N}_{i=1}$ be a dataset of labeled samples, such that, $u_i = (x_i,z_i)$. %Let $\ell = \ell^1$ be the loss function. Consider a cost function of the form $c(w) := \sum^{N}_{i=1} \ell_i(h(u_i))$. 
\begin{equation}
\mathcal{K}^{(r)}_{i,j}  \sim \begin{cases}
     n^{H-r} & $r > H$\\
     1 & otherwise.
\end{cases}
\end{equation}
% \begin{equation}
% \mathcal{K}^{(r)}_{i,j} = \Omega_p\left(1/n^{r-H}\right) 
% % \left\langle \nabla^{(r)}_w c(w), \nabla_w c(w)^r \right\rangle = \mathcal{O}_p\left(1/n^{r-H}\right)
% \end{equation}
% For $r=H=1$, $\mathcal{K}^{(r)}_{i,j}$ is a constant. Otherwise, for any $r\leq H$ it converges to a non-constant random variable.
% \greg{You really want to say $\Omega$ instead of $O$ right? Do you have a proof with $\Omega$ replacing $O$?} \etai{yes}
\end{restatable}

\begin{proof}
Throughout the proof, in order to derive certain limits of various sequences of random variables, we implicitly make use of the Mann-Wald theorem~\cite{mann1943}. For simplicity, oftentimes, we will avoid explicitly stating when this theorem is applied. As a general note, the repeated argument is as follows: terms, such as, $n^{\max(\alpha_d-1,0)} \cdot \mathcal{T}^{\boldsymbol{l,d}}_{n,i,j,d}$, $\mathcal{Q}^{\boldsymbol{d}}_{n,j}$, $g^d_i$, etc', (see below) can be expressed as continuous mappings of jointly convergent random variables. Hence, they jointly converge, and continuous mappings over them converge as well. 

By Lems.~\ref{lem:h1} and~\ref{lem:h2}, we have: 
\begin{equation}
\begin{aligned}
\mathcal{K}^{(r)}_{i,j} 
= \sum_{\substack{\alpha_1 + \dots + \alpha_{H} = r\\ \alpha_1,\dots,\alpha_{H} \geq 0}} \frac{r!}{\alpha_1 ! \cdots  \alpha_{H}! } \cdot z_i \cdot \left[\prod^{H-1}_{d=1} \dot{\sigma}(g^d_i) \right]  \cdot \prod^{H}_{d=1} \sum_{\boldsymbol{l}\in [H]^{\alpha_d}}  \sum_{\boldsymbol{d} \in [H]^{\alpha_d}} \mathcal{Q}^{\boldsymbol{d}}_{n,j} \cdot \mathcal{T}^{\boldsymbol{l,d}}_{n,i,j,d}
\end{aligned}
\end{equation}
where $\mathcal{Q}^{\boldsymbol{d}}_{n,j} := \left(\prod^{\alpha_d}_{k=1} h^{d_k}_j\right)$. By the Mann-Wald theorem~\cite{mann1943}$, g^d_i$ converges to some random variable $\mathcal{U}^d_i$, and therefore, by Lem.~\ref{lem:indicator}, $\dot{\sigma}(g^d_i) = \sgn(g^d_i)$ converges to $\sgn(\mathcal{U}^d_i)$ in distribution. We notice that $\mathcal{Q}^{\boldsymbol{d}}_{n,j}$ converges in distribution to some random variable $\mathcal{Q}^{\boldsymbol{d}}_j$. 

%In addition, by the central limit theorem, $\{f^d_j\}^{H}_{d=1}$ jointly converges in distribution to a set of Gaussian random variables $\{\mathcal{G}^d\}^{H}_{d=1}$ as $n \to \infty$. Hence, by the Mann-Wald theorem~\cite{mann1943}, for any $\boldsymbol{d}\in [H]^{\alpha_d}$, $h^{d_k}_j = a^{d_k-1}_j \prod^{H-d_k}_{t=1} f^{H-t+1}_j \cdot \sgn(g^{H-t}_j)$ converges in distribution to some random variable.

% $\mathcal{Q}^{\boldsymbol{d}}_{n,j} = \prod^{\alpha_d}_{k=1} h^{d_k}_j$ converges in distribution to some random variable $\mathcal{Q}^{\boldsymbol{d}}_j$. \TG{We note that $\{\dot{\sigma}(g^d_i)\}$, $\{n^{\max(\alpha_d-1,0)} \cdot \mathcal{T}^{\boldsymbol{l,i,d}}_{n,i,d}\}$ and $\{\mathcal{Q}^{\boldsymbol{d}}_{n,j}\}$ are continuous functions of jointly converging sequences of random variables. Therefore, by the Mann-Wand theorem~\cite{mann1943}, $\{\dot{\sigma}(g^d_i)\}$, $\{\mathcal{T}^{\boldsymbol{l,i,d}}_{n,i,d}\}$ and $\{\mathcal{Q}^{\boldsymbol{d}}_{n,j}\}$ jointly converge in distribution to random variables $\{\sgn(\mathcal{U}^d_i)\}$, $\{\mathcal{T}^{\boldsymbol{l,i,d}}_{i,d}\}$ and $\{\mathcal{Q}^{\boldsymbol{d}}_{j}\}$. }

The proof is divided into two cases: $H=1$ and $H>1$.

\paragraph{Case $H=1$:} First, we note that for $H=1$ and $d\in [H]$ (i.e., $d=1$), we have: 
\begin{equation}
\begin{aligned}
h^d_j &= a^{d-1}_j \cdot \prod^{H-d}_{t=1} f^{H-t+1}_j \cdot \dot{\sigma}(g^{H-t}_j) = a^{0}_j = z_j \\
\end{aligned}
\end{equation}
In addition, $\prod^{H-1}_{d=1} \dot{\sigma}(g^d_i) = 1$ as it is an empty product. Therefore, we can rewrite:
\begin{equation}
\mathcal{K}^{(r)}_{i,j} = z_i \cdot z^{r}_j \sum_{\boldsymbol{l}\in [H]^{r}}  \sum_{\boldsymbol{d} \in [H]^{r}} \mathcal{T}^{\boldsymbol{l,d}}_{n,i,j,d}
\end{equation}
By Lem.~\ref{lem:d}, for $r=1$, the above tends to a constant as $n\to \infty$. For $r>1$, $n^{r-1} \cdot \mathcal{T}^{\boldsymbol{l,d}}_{n,i,j,d}$ converges in distribution to zero for all $\boldsymbol{d} \neq (d,\dots,d)$ and converges to a non-constant random variable $\mathcal{T}^{\boldsymbol{l}}_{i,j,d}$ otherwise. Hence, by the Mann-Wald theorem~\cite{mann1943},
\begin{equation}
n^{r-1} \cdot \mathcal{K}^{(r)}_{i,j} \stackrel{d}{\longrightarrow} z_i \cdot z^r_j \sum_{\boldsymbol{l} \in [H]^r} \mathcal{T}^{\boldsymbol{l}}_{i,j,d}
\end{equation} 
which is a non-zero random variable. %\TG{prove it is non-zero.}

\paragraph{Case $H>1$:} 

By Lem.~\ref{lem:d}, $n^{\alpha_d-1} \cdot \mathcal{T}^{\boldsymbol{l,d}}_{n,i,j,d}$ converges in distribution to zero for all $\boldsymbol{d} \neq (d,\dots,d)$. Therefore, in these cases, by Slutsky's theorem, $n^{\alpha_d-1} \cdot \mathcal{Q}^{\boldsymbol{d}}_{n,j} \cdot \mathcal{T}^{\boldsymbol{l,d}}_{n,i,j,d}$ converges to zero in distribution. On the other hand, for each $\boldsymbol{l} \in [H]^{\alpha_d}$, $d\in [H]$ and $\boldsymbol{d} = (d,\dots,d)$, by Lem.~\ref{lem:d}, we have: 
\begin{equation}
n^{\alpha_d-1} \cdot \mathcal{Q}^{\boldsymbol{d}}_{n,j} \cdot \mathcal{T}^{\boldsymbol{l}}_{n,i,j,d} \stackrel{d}{\longrightarrow} \mathcal{Q}^{\boldsymbol{d}}_j \cdot \mathcal{T}^{\boldsymbol{l}}_{i,j,d}
\end{equation}
In particular, 
\begin{equation}
n^{\max(\alpha_d-1,0)} \sum_{\boldsymbol{l}\in [H]^{\alpha_d}} \sum_{\boldsymbol{d} \in [H]^{\alpha_d}} \cdot \mathcal{Q}^{\boldsymbol{d}}_{n,j} \cdot \mathcal{T}^{\boldsymbol{l}}_{n,i,j,d} \stackrel{d}{\longrightarrow} \sum_{\boldsymbol{l}\in [H]^{\alpha_d}} \sum_{d \in [H]} \mathcal{Q}^{d}_j \cdot \mathcal{T}^{\boldsymbol{l}}_{i,j,d}
\end{equation}
Consider the case where $r \geq H$. In this case, for any $\alpha_1,\dots,\alpha_H$, such that, there are $t>1$ indices $i \in [H]$, such that, $\alpha_i=0$. The following random variable converges in distribution:
\begin{equation}
X_n := n^{r-(H-t)} \cdot \prod^{H}_{d=1} \sum_{\boldsymbol{l}\in [H]^{\alpha_d}}  \sum_{\boldsymbol{d} \in [H]^{\alpha_d}} \mathcal{Q}^{\boldsymbol{d}}_{n,j} \cdot  \mathcal{T}^{\boldsymbol{l,d}}_{n,i,j,d}
\end{equation}
Therefore, by Slutsky's theorem:
\begin{equation}
n^{r-H} \cdot \prod^{H}_{d=1} \sum_{\boldsymbol{l}\in [H]^{\alpha_d}}  \sum_{\boldsymbol{d} \in [H]^{\alpha_d}} \mathcal{Q}^{\boldsymbol{d}}_{n,j} \cdot \mathcal{T}^{\boldsymbol{l,d}}_{n,i,j,d}  = n^{-t} \cdot X_n \stackrel{d}{\longrightarrow} 0
\end{equation}
We have:
%by the Mann-Wald theorem~\cite{mann1943}$, g^d_i$ converges to some random variable $\mathcal{U}^d_i$, and therefore, by Lem.~\ref{lem:indicator}, $\dot{\sigma}(g^d_i)$ converges to $\sgn(\mathcal{U}^d_i)$ in distribution. We have:
\begin{equation}
\begin{aligned}
&n^{r-H}  \cdot \left\langle \nabla^{(r)}_{w} h_i , (\nabla_w h_j )^{r} \right\rangle \\
=& n^{r-H} \sum_{\substack{\alpha_1 + \dots + \alpha_{H} = r\\ \alpha_1,\dots,\alpha_{H} \geq 0}} \frac{r!}{\alpha_1 ! \cdots  \alpha_{H}! } \cdot z_i \cdot \left[\prod^{H-1}_{d=1} \dot{\sigma}(g^d_i) \right]  \cdot \prod^{H}_{d=1} \sum_{\boldsymbol{l}\in [H]^{\alpha_d}}  \sum_{\boldsymbol{d} \in [H]^{\alpha_d}} \left(\prod^{\alpha_d}_{k=1} h^{d_k}_j\right)  \mathcal{T}^{\boldsymbol{l,d}}_{n,i,j,d} \\
=& \sum_{\substack{\alpha_1 + \dots + \alpha_{H} = r\\ \alpha_1,\dots,\alpha_{H} \geq 0}} \frac{r!}{\alpha_1 ! \cdots  \alpha_{H}! } \cdot z_i \cdot \left[\prod^{H-1}_{d=1} \dot{\sigma}(g^d_i) \right]  \cdot \prod^{H}_{d=1} n^{\alpha_d-1} \sum_{\boldsymbol{l}\in [H]^{\alpha_d}}  \sum_{\boldsymbol{d} \in [H]^{\alpha_d}} \mathcal{Q}^{\boldsymbol{d}}_{n,j} \cdot  \mathcal{T}^{\boldsymbol{l,d}}_{n,i,j,d} \\
\stackrel{d}{\longrightarrow}& \sum_{\substack{\alpha_1 + \dots + \alpha_{H} = r\\ \alpha_1,\dots,\alpha_{H} \geq 1}} \frac{r!}{\alpha_1 ! \cdots  \alpha_{H}! } \cdot z_i \cdot \left[\prod^{H-1}_{d=1} \sgn(\mathcal{U}^d_i) \right]  \cdot \prod^{H}_{d=1} \sum_{\boldsymbol{l}\in [H]^{\alpha_d}}  \mathcal{Q}^d_j \cdot \mathcal{T}^{\boldsymbol{l}}_{i,j,d}
\end{aligned}
\end{equation}
which is a non-constant random variable.

Next, we consider the case, $r \leq H$. %In this case, for any $0 \leq \alpha_1=\dots=\alpha_H\leq 1$.  
By Lem.~\ref{lem:d}, for any $\alpha_d \geq 2$, the term $\mathcal{T}^{\boldsymbol{l,d}}_{n,i,j,d}$ tends to zero as $n\to \infty$. In addition, $\mathcal{Q}^{\boldsymbol{d}}_{n,j}$ converges in distribution. Therefore, for any $\alpha_d \geq 2$, we have:
\begin{equation}
\sum_{\boldsymbol{l} \in [L]^{\alpha_d}} \sum_{\boldsymbol{d} \in [H]^{\alpha_d}} \mathcal{Q}^{\boldsymbol{d}}_{n,j}\cdot \mathcal{T}^{\boldsymbol{l,d}}_{n,i,j,d} \stackrel{d}{\longrightarrow} 0
\end{equation}
Hence, for any $\alpha_1,\dots,\alpha_H \geq 0$, such that, there is at least one $\alpha_d \geq 2$, we have:
\begin{equation}
\prod^{H}_{d=1} \sum_{\boldsymbol{l} \in [L]^{\alpha_d}} \sum_{\boldsymbol{d} \in [H]^{\alpha_d}} \mathcal{Q}^{\boldsymbol{d}}_{n,j}\cdot \mathcal{T}^{\boldsymbol{l,d}}_{n,i,j,d} \stackrel{d}{\longrightarrow} 0
\end{equation}
% Therefore, if the limit exists, we have: \TG{Fix from here:}
% \begin{equation}\label{eq:sum}
% \begin{aligned}
% &\lim_{n\to \infty}\langle \nabla^{(r)}_w h(x_i;w), (\nabla_w h(x_{i_t};w))^r\rangle \\
% =& \lim_{n\to \infty} \sum_{\substack{\alpha_1 + \dots + \alpha_H = r\\ \alpha_1,\dots,\alpha_H \geq 0}} \frac{r!}{\alpha_1 ! \cdots  \alpha_{H}! } \prod^{H}_{d=1} \sum_{\boldsymbol{l} \in [L]^{\alpha_d}} \sum_{\boldsymbol{q} \in [H]^{\alpha_d}} \mathcal{Q}^{\boldsymbol{i,q}}_{n,d}\cdot \mathcal{T}^{\boldsymbol{l,i,q}}_{n,i,d} \\
% =& \lim_{n\to \infty} \sum_{\substack{\alpha_1 + \dots + \alpha_H = r\\ 0\leq \alpha_1,\dots,\alpha_H \leq 1}} r! \cdot \prod^{H}_{d=1} \sum_{\boldsymbol{l} \in [L]^{\alpha_d}} \sum_{\boldsymbol{q} \in [H]^{\alpha_d}} \mathcal{Q}^{\boldsymbol{i,q}}_{n,d}\cdot \mathcal{T}^{\boldsymbol{l,i,q}}_{n,i,d} =: \clubsuit
% \end{aligned}
% \end{equation}
On the other hand, for any $0\leq \alpha_1,\dots,\alpha_H \leq 1$, the terms $\{\mathcal{T}^{\boldsymbol{l,i,d}}_{n,i,d}\}$, $\{g^d_i\}$ and $\{\mathcal{Q}^{\boldsymbol{d}}_{n,j}\}$ converge jointly in distribution to some random variables $\{\mathcal{T}^{\boldsymbol{l,i,d}}_{i,d}\}$, $\{\sgn(\mathcal{U}^d_i)\}$ and $\{\mathcal{Q}^{\boldsymbol{d}}_{j}\}$ as $n \to \infty$. Hence, %by the Mann-Wald theorem~\cite{mann1943}, 
\begin{equation}
\langle \nabla^{(r)}_w h_i, (\nabla_w h_j)^r\rangle \stackrel{d}{\longrightarrow} \sum_{\substack{\alpha_1 + \dots + \alpha_H = r\\ 0\leq \alpha_1,\dots,\alpha_H \leq 1}} r! \cdot \left[\prod^{H-1}_{d=1}\sgn(\mathcal{U}^d_i)\right]\cdot \prod^{H}_{d=1} \sum_{\boldsymbol{l} \in [L]^{\alpha_d}} \sum_{\boldsymbol{d} \in [H]^{\alpha_d}} \mathcal{Q}^{\boldsymbol{d}}_{j}\cdot \mathcal{T}^{\boldsymbol{l,d}}_{i,j,d}
\end{equation}
which is a non-constant random variable.
% We notice that by the central limit theorem, $\{f^t_w(x_{i_j})\}_{j\in [\alpha_i],t \in [H]}$ jointly converges in distribution to a set of Gaussian random variables $\{\mathcal{G}^t_{j}\}_{j\in [\alpha_i],t \in [H]}$ as $n \to \infty$. Hence, the product $\prod^{\alpha_i}_{j=1} \prod_{t \in [H] \setminus\{d_j\}} f^t(x_{i_j};w)$ converges to $\prod^{\alpha_i}_{j=1} \prod_{t \in [H] \setminus\{d_j\}} \mathcal{G}^t_j$ in distribution by the Mann-Wald theorem~\cite{mann1943}. In particular, by Lem.~\ref{lem:convO}, we have:
% \begin{equation}
% \prod^{\alpha_i}_{j=1} \prod_{t \in [H] \setminus\{d_j\}} f^t(x_{i_j};w) = \mathcal{O}_p(1)
% \end{equation}
% By Cor.~\ref{cor:T}, we have:
% \begin{equation}
% \left(\prod^{\alpha_i}_{j=1} \prod_{t \in [H] \setminus\{d_j\}} f^t(x_{i_j};w)\right) \mathcal{T}^{\boldsymbol{l,i,d}}_{n,i,d} = \mathcal{O}_p\left(\frac{1}{n^{\alpha_i-1}}\right)
% \end{equation}
% In particular, for any $r \geq H$, we have:
% \begin{equation}
% \begin{aligned}
% \langle \nabla^{(r)}_{w} h(u_i;w), \nabla_w c(w)^r \rangle &=
% \mathcal{O}_p\left(\sum_{\substack{\alpha_1 + \dots + \alpha_{H} = r\\ \alpha_1,\dots,\alpha_{H} \geq 0}} \frac{r!}{\alpha_1 ! \cdots  \alpha_{H}! } \cdot \prod^{H}_{i=1} \left(\frac{m\cdot L\cdot H}{n}\right)^{\alpha_i-1} \right) \\
% &= \mathcal{O}_p\left(\sum_{\substack{\alpha_1 + \dots + \alpha_{H} = r\\ \alpha_1,\dots,\alpha_{H} \geq 0}} \frac{r!}{\alpha_1 ! \cdots  \alpha_{H}! } \cdot \left(\frac{m\cdot L \cdot H}{n}\right)^{r-H} \right) \\
% &= \mathcal{O}_p\left(H^r \cdot \left(\frac{m\cdot L\cdot H}{n}\right)^{r-H} \right)\\
% &= \mathcal{O}_p\left(\frac{1}{n^{r-H}} \right)\\
% \end{aligned}
% \end{equation}
% For any $r \leq H$ and for any $k \geq 2$, the term $\left\langle \nabla^{(k)}f^d(x_i;w) , \nabla_w c(w)^k \right\rangle$ converges to zero. Therefore, for any $\alpha_1,\dots,\alpha_r \geq 0$, such that, at least one $\alpha_j \geq 2$, we have:
% \begin{equation}
% \lim_{n\to \infty} \prod^{H}_{j=1}\left\langle \nabla^{(\alpha_j)}f^d(x_i;w) , \nabla_w c(w)^{\alpha_j} \right\rangle = 0 
% \end{equation}
% On the other hand, for $k=1$ and $d_1 \neq d_2 \in [H]$ and width $n$, we have, 
% \begin{equation}
% \left\langle \nabla^{(1)}f^{d_1}(x_i;w) , \nabla_w c(w)^{1} \right\rangle \sim \left\langle \nabla^{(1)}f^{d_2}(x_i;w) , \nabla_w c(w)^{1} \right\rangle 
% \end{equation}
% and they converge jointly in distribution to Gaussian random variables. Hence, by the Mann-Wald theorem~\cite{mann1943}, $\prod^{H}_{j=1}\left\langle \nabla^{(1)}f^d(x_i;w) , \nabla_w c(w)^{1} \right\rangle$ converges to a product of (dependent) Gaussian random variables.
\end{proof}

% \subsection{Proofs of the Results in Sec.~\ref{app:flow}}

% \flow*

% \begin{proof}
% Note that the $r$'th order derivative can be expressed as follows:
% \begin{equation}
% \begin{aligned}
% &\frac{\partial^r \mathcal{K}^h(u,u')}{\partial t^r}\\ =&\left(\frac{\partial}{\partial t}\right)^r\left[ \frac{\partial h(u;w)}{\partial w}\cdot \frac{\partial ^\top h(u';w)}{\partial w}\right] \\
% =&\sum_{\alpha=0}^r \frac{r!}{\alpha ! \cdot (r-\alpha)! }\cdot \left[\left[\left(\frac{\partial}{\partial t}\right)^\alpha\frac{\partial h(u;w)}{\partial w}\right] \cdot \left[\left(\frac{\partial}{\partial t}\right)^{r-\alpha}\frac{\partial ^\top h(u';w)}{\partial w}\right]\right]\\
% =& \left[\left(\frac{\partial}{\partial t}\right)^r\frac{\partial h(u;w)}{\partial w}\right] \cdot \frac{\partial^\top h(u';w)}{\partial w} + \frac{\partial h(u;w)}{\partial w}\left[\left(\frac{\partial}{\partial t}\right)^r\frac{\partial^\top h(u';w)}{\partial w}\right] \\
% &+\sum_{\alpha=1}^{r-1} \frac{r!}{\alpha ! \cdot (r-\alpha)! } \cdot \left[\left[(\frac{\partial}{\partial t})^\alpha\frac{\partial h(u;w)}{\partial w}\right]\cdot \left[(\frac{\partial}{\partial t})^{r-\alpha}\frac{\partial ^\top h(u';w)}{\partial w}\right]\right]\\
% %=& \left[\left(\frac{\partial}{\partial t}\right)^r\frac{\partial h(u;w)}{\partial w}\right] \cdot \frac{\partial^\top h(u';w)}{\partial w} + \frac{\partial h(u;w)}{\partial w}\left[\left(\frac{\partial}{\partial t}\right)^r\frac{\partial^\top h(u';w)}{\partial w}\right] 
% \end{aligned}
% \end{equation}

% In particular, for $r=1$,
% \begin{equation}
% \begin{aligned}
% \frac{\partial \mathcal{K}^h(u,u')}{\partial t} 
% &= \left[\frac{\partial}{\partial t}\frac{\partial h(u;w)}{\partial w}\right] \cdot \frac{\partial^\top h(u';w)}{\partial w} + \frac{\partial h(u;w)}{\partial w}\left[\frac{\partial}{\partial t}\frac{\partial^\top h(u';w)}{\partial w}\right] \\
% \end{aligned}
% \end{equation}
% Next, we would like to analyze the convergence of the first term. The analysis of the second term is similar. Since $\frac{\partial }{\partial t} = -\mu \nabla_w c(w) \frac{\partial^{\top} }{\partial w}$, we have:
% \begin{equation}
% \begin{aligned}
% &\left[\frac{\partial}{\partial t}\frac{\partial h(u;w)}{\partial w}\right] \cdot \frac{\partial^\top h(u';w)}{\partial w} \\
% =& \left[\left(\nabla_w c(w) \frac{\partial^\top}{\partial w}\right) \frac{\partial h(u;w)}{\partial w}\right] \cdot \frac{\partial^\top h(u';w)}{\partial w} \\
% =& -\mu \left[\left(\sum_i \sgn(\ell_i) \frac{\partial h(u_i;w)}{\partial w} \frac{\partial^\top}{\partial w}\right) \frac{\partial h(u;w)}{\partial w}\right] \cdot \frac{\partial^\top h(u';w)}{\partial w} \\
% =& \mu \sum^{N}_{i=1} \sgn(\ell_i) \cdot \left\langle \frac{\partial^2 h(u;w) }{\partial w^2}, \frac{\partial h(u_i;w)}{\partial w} \otimes \frac{\partial h(u';w)}{\partial w}\right\rangle \\
% =& \mu \sum^{N}_{i=1} \sgn(\ell_i) \cdot \left\langle \nabla^{(2)}_w h(u;w) , \nabla_w h(u_i;w) \otimes \nabla_w h(u';w) \right\rangle
% \end{aligned}
% \end{equation}
% We note that by Lem.~\ref{thm:gp}, for $n \to \infty$ and $m\to \infty$ sequentially, $\ell_i = h(u_i;w)-y_i$ converges in distribution to a Gaussian random variable $\mathcal{G}_i$ with mean $-y_i$. In particular, $\sgn(\ell_i)$ converges in distribution to $\sgn(\mathcal{G}_i-y_i)$ (see the last part of the proof of Thm.~\ref{thm:gate}). In particular, by the Mann-Wald theorem~\cite{mann1943}, \TG{Fix from here:}
% \begin{equation}\label{eq:N}
% \begin{aligned}
% & \lim_{n \to \infty} \left[\frac{\partial}{\partial t}\frac{\partial h(u;w)}{\partial w}\right] \cdot \frac{\partial^\top h(u';w)}{\partial w} \\
% =& \mu \sum^{N}_{i=1} \sgn(\mathcal{G}_i) \cdot \lim_{n\to \infty} \left\langle \nabla^{(2)}_w h(u;w) , \nabla_w h(u_i;w) \otimes \nabla_w h(u';w) \right\rangle
% \end{aligned}
% \end{equation}
% Finally, by the proof of Thm.~\ref{thm:hyper}, each term 
% \begin{equation}
% \lim_{m\to \infty}\lim_{n\to \infty} \left\langle \nabla^{(2)}_w h(u;w) , \nabla_w h(u_i;w) \otimes \nabla_w h(u';w) \right\rangle
% \end{equation} 
% is a non-constant random variable. 

% As a final note, we consider that one could engineer a dataset $S = \{(u_i,y_i)\}^{N}_{i=1}$, such that, the terms in Eq.~\ref{eq:N} would cancel each other out. We would like to show that this is not the general case. Assume that for $S$, we have:
% \begin{equation}
% \sum^{N}_{i=1} \sgn(\mathcal{G}_i) \cdot \lim_{m\to \infty}\lim_{n\to \infty} \left\langle \nabla^{(2)}_w h(u;w) , \nabla_w h(u_i;w) \otimes \nabla_w h(u';w) \right\rangle = 0
% \end{equation}
% We define a new dataset $S' = \{(u_i,y'_i)\}^{N}_{i=1}$, where $y'_1 \neq y_1$ and $y'_i=y_i$ for all $i =2,\dots,N$. In this case, we have:
% \begin{equation}
% \begin{aligned}
% &\lim_{m \to \infty} \lim_{n \to \infty} \left[\frac{\partial}{\partial t}\frac{\partial h(u;w)}{\partial w}\right] \cdot \frac{\partial^\top h(u';w)}{\partial w} \\
% =&\mu \sum^{N}_{i=2} \sgn(\mathcal{G}_i) \cdot \lim_{m\to \infty}\lim_{n\to \infty} \left\langle \nabla^{(2)}_w h(u;w) , \nabla_w h(u_i;w) \otimes \nabla_w h(u';w) \right\rangle \\
% & + \mu \cdot \sgn(\mathcal{G}_1+y_1-y'_1) \cdot \lim_{m\to \infty}\lim_{n\to \infty} \left\langle \nabla^{(2)}_w h(u;w) , \nabla_w h(u_1;w) \otimes \nabla_w h(u';w) \right\rangle \\
% =&-\mu \cdot \sgn(\mathcal{G}_1) \cdot \lim_{m\to \infty}\lim_{n\to \infty} \left\langle \nabla^{(2)}_w h(u;w) , \nabla_w h(u_1;w) \otimes \nabla_w h(u';w) \right\rangle \\
% & + \mu \cdot \sgn(\mathcal{G}_1+y_1-y'_1) \cdot \lim_{m\to \infty}\lim_{n\to \infty} \left\langle \nabla^{(2)}_w h(u;w) , \nabla_w h(u_1;w) \otimes \nabla_w h(u';w) \right\rangle \\
% =&\mu \cdot (\sgn(\mathcal{G}_1+y_1-y'_1)-\sgn(\mathcal{G}_1)) \cdot \lim_{m\to \infty}\lim_{n\to \infty} \left\langle \nabla^{(2)}_w h(u;w) , \nabla_w h(u_1;w) \otimes \nabla_w h(u';w) \right\rangle \\
% \end{aligned}
% \end{equation}
% Since $y'_1 \neq y_1$, the random variable $\sgn(\mathcal{G}_1+y_1-y'_1)-\sgn(\mathcal{G}_1)$ is non-constant. Therefore, in this case, $\lim_{m \to \infty} \lim_{n \to \infty} \left[\frac{\partial}{\partial t}\frac{\partial h(u;w)}{\partial w}\right] \cdot \frac{\partial^\top h(u';w)}{\partial w}$. By similar considerations, we conclude that, $\lim_{m\to \infty} \lim_{n\to \infty}\frac{\partial \mathcal{K}^h(u,u')}{\partial t}$ is generally non-constant. For example, by taking $u=u'$, we have:
% \begin{equation}
% \lim_{m\to \infty} \lim_{n\to \infty}\frac{\partial \mathcal{K}^h(u,u)}{\partial t} = 2\lim_{m \to \infty} \lim_{n \to \infty} \left[\frac{\partial}{\partial t}\frac{\partial h(u;w)}{\partial w}\right] \cdot \frac{\partial^\top h(u;w)}{\partial w}
% \end{equation}
% which is generally non-constant.
% \paragraph{Case $r=2$} 
% Specifically, for $r=2$, we have:
% \begin{equation}
% \begin{aligned}
% \frac{\partial^2 \mathcal{K}^h(u,u')}{\partial t^2} 
% =& 2 \left[\left[\frac{\partial}{\partial t}\frac{\partial h(u;w)}{\partial w}\right]\cdot \left[\frac{\partial}{\partial t}\frac{\partial^\top h(u';w)}{\partial w}\right]\right]\\
% &+ \left[\left(\frac{\partial}{\partial t}\right)^2\frac{\partial h(u;w)}{\partial w}\right] \cdot \frac{\partial^\top h(u';w)}{\partial w} + \frac{\partial h(u;w)}{\partial w}\left[\left(\frac{\partial}{\partial t}\right)^2\frac{\partial^\top h(u';w)}{\partial w}\right]
% \end{aligned}
% \end{equation}
% Note that:
% \begin{equation}
% \begin{aligned}
% &\left[\left(\frac{\partial}{\partial t}\right)^2\frac{\partial h(u;w)}{\partial w}\right]\frac{\partial^\top h(u';w)}{\partial w} \\
% =& (-\mu)^2\left[\left( \nabla_{w}c(w)\frac{\partial^\top}{\partial w}\right)^2\frac{\partial h(u;w)}{\partial w}\right]\frac{\partial^\top h(u';w)}{\partial w}\\
% =& \mu^2\left[\left( \sum_i \sgn(\ell_i)\frac{\partial h(u_i;w)}{\partial w}\frac{\partial^\top}{\partial w}\right)^2\frac{\partial h(u;w)}{\partial w}\right]\frac{\partial^\top h(u';w)}{\partial w}
% \end{aligned}
% \end{equation}
% We denote the derivative of the output $h(u;w)$ with respect to the weights $w_{\eta_1},...,w_{\eta_k}$ by 
% \begin{equation}
% \Gamma_{\eta_1,...,\eta_k}(\mu) := \frac{\partial^k h(u;w)}{\partial w_{\eta_1}...\partial w_{\eta_k}}    
% \end{equation}
% It then follows:
% \begin{equation}
% \begin{aligned}
% &\left[\left(\sum^{N}_{i=1} \sgn(\ell_i)\frac{\partial h(u_i;w)}{\partial w}\frac{\partial^\top}{\partial w}\right)^r\frac{\partial h(u;w)}{\partial w}\right]\frac{\partial^\top h(u';w)}{\partial w} \\
% =& \sum_{\alpha_1,...,\alpha_r=1}^m\prod_j \sgn(\ell_{\alpha_j})\sum_{\eta_1,...,\eta_{k_{r+2}}=1}^{|w|} \Delta_{\boldsymbol{\eta}}\prod_{j=0}^{r+1}\Gamma_{\eta_{k_j+1}...\eta_{k_{j+1}}}(u_{\alpha_j})
% \end{aligned}
% \end{equation}
% where we have assigned $u':=u_{\alpha_0},u:=u_{\alpha_{r+1}}$, $0 = k_0\leq k_1\leq ...\leq k_{r+2} = 2r+2$, and:
% \begin{equation}
% \Delta_{\boldsymbol{\eta}} = \begin{cases}
% 1 & \{\eta_{k_j+1},...,\eta_{k_{j+1}}\} \in \{\eta_1,...,\eta_{k_j},\eta_{k_{j+1}+1},...,\eta_{k_{r+2}}\}\\
% 0 & else
% \end{cases}
% \end{equation}
% Denoting by $\boldsymbol{\delta} = \delta(k_1=1)\delta(k_2=2)...\delta(k_{r+1}=r+1)$ it holds that:
% \begin{equation}
% \begin{aligned}
% &\sum_{\eta_1,...,\eta_{k_{r+2}}=1}^{|w|} \Delta_{\boldsymbol{\eta}}\prod_{j=0}^{r+1}\Gamma_{\eta_{k_j+1}...\eta_{k_{j+1}}}(u_{\alpha_j}) \\
% &=\sum_{\eta_1,...,\eta_{k_{r+2}}=1}^{|w|} (\boldsymbol{\delta} + 1-\boldsymbol{\delta})\Delta_{\boldsymbol{\eta}}\prod_{j=0}^{r+1}\Gamma_{\eta_{k_j+1},...,\eta_{k_{j+1}}}(u_{\alpha_j})
% \end{aligned}
% \end{equation}
% Note that:
% \begin{equation}
% \begin{aligned}
% &\sum_{\eta_1,...,\eta_{k_{r+2}}=1}^{|w|} \boldsymbol{\delta}\Delta_{\boldsymbol{\eta}}\prod_{j=0}^{r+1}\Gamma_{\eta_{k_j+1},...,\eta_{k_{j+1}}}(u_{\alpha_j})\\
% &=\sum_{\eta_1...\eta_{k_{r+2}}=1}^{|w|} \Delta_{\boldsymbol{\eta}}\left(\prod_{j=0}^{r}\Gamma_{\eta_{j+1}}(\mu_{\alpha_j})\right)\Gamma_{\eta_{r+2}...\eta_{2r+2}}(u_{\alpha_{r+1}})\\
% &=\left\langle \nabla^{(r+1)}_w h(u_{\alpha_{r+1}};w), \bigotimes^{r}_{j=0}\nabla_w h(u_{\alpha_j};w) \right\rangle
% \end{aligned}
% \end{equation}
% Therefore, we conclude that:
% \begin{equation}
% () = \sum_{\alpha_1,\dots,\alpha_r}
% \end{equation}
%\end{proof}

\subsection{Proofs of the Results in Sec.~4}

\begin{restatable}[Hypernetworks as GPs]{theorem}{GP}\label{thm:gp}
Let $h(u) = g(z;f(x))$ be a hypernetwork. For any pair of inputs $u = (x,z)$ and $u' = (x',z')$, let  $\Sigma^0(z,z') = \frac{z^\top z'}{m_0}, S^0(x,x') = \frac{x^\top x'}{n_0}$. Then, it holds for any unit $i$ in layer $0<l\leq H$ of the primary network:
\begin{equation}
g^l_i(z;f(x)) \stackrel{d}{\longrightarrow} \mathcal{G}_i^l(u)
\end{equation}
as $m,n \to \infty$ sequentially. Here, $\{\mathcal{G}_i^l(u)\}_{i=1}^{m_l}$ are independent Gaussian processes, such that, $(\mathcal{G}_i^l(u), \mathcal{G}_i^l(u')) \sim \mathcal{N}\big(0,\Lambda^l(u,u')\big)$ defined by the following recursion:
\begin{equation}\label{mf}
\Lambda^{l+1}(u,u') = 
\begin{pmatrix}
\Sigma^{l}(u,u) & 
\Sigma^{l}(u',u)  \\
\Sigma^{l}(u,u') & 
\Sigma^{l}(u',u')  
\end{pmatrix}\bigodot \begin{pmatrix}
S^L(x,x)& 
S^L(x',x) \\
S^L(x,x')& 
S^L(x',x')  
\end{pmatrix} 
\end{equation}
\begin{equation}
\Sigma^l(u,u') =  2\E_{(u,v)\sim \mathcal{N}(0,\Lambda^l)}[\sigma(u)\cdot \sigma(v)]
\end{equation}
where $S^L(x,x')$ is defined recursively:
\[\label{s}
S^{l+1}(x,x') = 2\E_{(u,v)\sim \mathcal{N}(0,\Gamma^l(x,x')}[\sigma(u)\cdot \sigma(v)],~~~
\Gamma^l(x,x') = \begin{pmatrix}
S^{l}(x,x) & 
S^{l}(x',x)  \\
S^{l}(x,x') & 
S^{l}(x',x')  
\end{pmatrix}
\]
\end{restatable}

\begin{proof} By~\cite{Yang2019TensorPI}, taking the width $n=\min(n_1,...,n_{L-1})$ to infinity, the outputs $V^{d}(x;w) := f^d(x;w)$ are governed by a centered Gaussian process, such that, the entries $V^d_{i,j}(x;w)$, given some input $x$, are independent and identically distributed. Moreover, it holds that:
\begin{equation}
\Big(V^d_{i,j}(x;w),V^d_{i,j}(x';w)\Big) \sim \mathcal{N}\Big(\bold{0},S^L(x,x')\Big).
\end{equation}
with $S^L(x,x')$ as defined in Eq.~\ref{s}.
For the function $h(u;w) = g(z;f(x;w))$, it holds for the first layer:
\begin{equation}
g^1(z;f(x;w)) = \sqrt{\frac{1}{m_0}}V^1(x;w) z
\end{equation}
After taking the limit $n=\min(n_1,...,n_{L-1})$ to infinity, the primary network $g$ is fed with Gaussian distributed weights.
% In the infinite width limit ($n \to \infty$), $V^1(x;w)_{i,j}$ converge to i.i.d Gaussian processes, such that, 
% \begin{equation}
% \mathbb{E}\left[vec(V^1(x;w))_i,vec(V^1(x';w))_j\right] = S^L(x,x') \cdot \delta_{i=j}    
% \end{equation}
And so $g^1(z;f(x;w))$ also converges to a Gaussian process, such that:
\begin{equation}
(g^1(z;f(x;w))_i,g^1(z';f(x';w))_i) \sim \mathcal{N}(0,\Lambda^1)
\end{equation}
where:
\begin{equation}
\Lambda^1 = \frac{1}{m_0}\begin{pmatrix}
S^L(x,x) z^{\top} z & S^L(x',x)z'^{\top} z  \\
S^L(x,x') z^{\top} z' & S^L(x',x')z'^{\top} z'  
\end{pmatrix}
\end{equation}
In a similar fashion to the standard feed forward case, the pre-activations $g^l(z;f(x;w))$ converge to Gaussian processes as we let $m = \min(m_1,...,m_{H-1})$ tend to infinity, with a covariance defined recursively:
\begin{equation}
\Sigma^l(u,u') =  \sqrt{2}\E_{(u,v)\sim \mathcal{N}(0,\Lambda^l)}[\sigma(u)\sigma(v)]
\end{equation}
where,
\begin{equation}
\Lambda^l = 
\begin{pmatrix}
S^L(x,x)\cdot \Sigma^{l-1}(u,u) & 
S^L(x',x)\cdot \Sigma^{l-1}(u',u)  \\
S^L(x,x')\cdot \Sigma^{l-1}(u,u') & 
S^L(x',x')\cdot \Sigma^{l-1}(u',u')  
\end{pmatrix}
\end{equation}
and
\begin{equation}
\Sigma^0(z,z') = \frac{1}{m_0}z^\top z'
\end{equation}
proving the claim.
\end{proof}

% \GPtwo*
% \begin{proof}
% Without loss of generality, assume $\Sigma^0(u,u') = 1$. It can be shown that for $\Lambda = \begin{pmatrix}
% 1 & 
% \lambda  \\
% \lambda & 
% 1 
% \end{pmatrix}$ where $|\lambda|<1$, it holds that:
% \[
% 2\E_{u,v\sim \mathcal{N}(0,\Lambda )}[\sigma(u)\sigma(v)] = \frac{\lambda \Big(\pi  - \arccos(\lambda)\Big) + \sqrt{1-\lambda^2}}{\pi}
% \]
% Setting $\lambda = \Sigma(u,u')^lS^L(x,x')$, a fixed point given $S^L(x,x')$ is a solution $\hat{\Sigma}(u,u')^l$ to the following equation: 
% \[
% &&\Sigma(u,u')^l = \frac{\lambda \Big(\pi -  \arccos(\lambda)\Big) + \sqrt{1-\lambda^2}}{\pi}\Big|_{\lambda = \Sigma(u,u')^lS^L(x,x')} = \mathcal{F}\Big((\Sigma(u,u')^l,S^L(x,x')\Big)\\
% &&=\mathcal{F}\Big(\mathcal{F}\Big(...\mathcal{F}\Big(\Sigma^0(z,z'),S^L(x,x')\Big)...\Big)\Big)
% \]
% Note that for $|X|<1,|S^L(x,x')|<1$, it holds that $\mathcal{F}\Big(X,S^L(x,x')\Big)$ is smooth, and $0 \leq \mathcal{F}\Big(X,S^L(x,x')\Big)< 1$. Therefore, there exists a fixed point $|X^\star|<1$ such that $X^\star = \mathcal{F}\Big(X^\star,S^L(x,x')\Big)$ Moreover, it holds that:
% \[
% \forall_{-1<X<1},~|\frac{\partial}{\partial X} \mathcal{F}\Big(X,S^L(x,x')\Big)| <1
% \]
% By the Banach fixed-point theorem~\cite{banach}, $X^\star$ is an attractive fixed point, and so $X^\star = \lim_{l\rightarrow\infty}\Sigma^l(u,u') = \Sigma^\star(u,u') = X^\star(S^L(x,x'))$, which proves the first claim.
% For the second claim, note that $\lim_{L\rightarrow \infty}S^L(x,x') = 1$, which means $f$ outputs the same set of weights regardless of its input, proving the second claim.
% \end{proof}

We make use of the following lemma in the proof of Thm.~\ref{thm:hyperkernel}.

\begin{lemma}\label{lem:arora2}
Recall the parametrization of the primary network:
\begin{equation}
\begin{cases}  
g^l_i := g^l(z_i;v) = \sqrt{\frac{1}{m_{l-1}}} f^l(x_i;w)\cdot a_i^{l-1} \\
a^l_i := a^l(z_i;v) = \sqrt{2} \cdot \sigma(g^l_i)
\end{cases} 
\textnormal{ and } a^0_i := z_i
\end{equation}
For any pair $u_i = \{u_i\}$, we denote:
\begin{equation}
P_i^{l_1\rightarrow l_2} = \prod_{l = l_1}^{l_2-1}\left(\sqrt{\frac{2}{m_{l}}}V^{l+1}(x_i;w)\cdot Z^{l}(z_i) \right) \textnormal{ and } Z^l(z) = \textnormal{diag} (\dot\sigma(g^{l}(z)))
\end{equation}
It holds that:
\begin{enumerate}
    \item $P_i^{l_1 \rightarrow l_2}(P_j^{l_1 \rightarrow l_2})^{\top} \stackrel{d}{\longrightarrow} \prod_{l=l_1}^{l_2-1}\dot{\Sigma}^l(u_i,u_j)I$.
    \item $\frac{\partial h(u_i,w)}{\partial v}\cdot \frac{\partial^{\top} h(u_j,w)}{\partial v} \stackrel{d}{\longrightarrow} \sum_{l=0}^{H-1}\left(\Sigma^l(u_i,u_j)\prod_{h=l+1}^{H-1}\dot{\Sigma}^l(u_i,u_j)\right)$.
\end{enumerate}
where the limits are taken with respect to $m,n \to \infty$ sequentially.
\end{lemma}

\begin{proof}
We have:
\begin{equation}
\begin{aligned}
&P_i^{l_1 \rightarrow l_2}(P_j^{l_1 \rightarrow l_2})^{\top} \\
=& P_i^{l_1 \rightarrow l_2-1}\frac{2}{m_{l_2-1}}V^{l_2}(x_i;w) 
\cdot Z^{l_2-1}(z_i)Z^{l_2-1}(z_j)V^{l_2}(x_j;w)^\top (P_j^{l_1 \rightarrow l_2-1})^{\top} 
\end{aligned}
\end{equation}
Note that it holds that when $m,n\to \infty$ sequentially, we have:
\begin{equation}
\begin{aligned}
&\frac{2}{m_{l_2-1}}V^{l_2}(x_i;w)\cdot Z^{l_2-1}(z_i)Z^{l_2-1}(z_j)V^{l_2}(x_j;w)^\top \\
\stackrel{d}{\longrightarrow} & \sqrt{2}\E_{(u,v)\sim \mathcal{N}(0,\Lambda^{l_2})}[\dot{\sigma(u)}\dot{\sigma(v)}]I = \dot{\Sigma}^{l_2}(u_i,u_j)I
\end{aligned}
\end{equation}
Applying the above recursively proves the first claim.
Using the first claim, along with the derivation of the neural tangent kernel (see \cite{conf/nips/AroraDH0SW19}) proves the second claim.
\end{proof}

\begin{restatable}[Hyperkernel convergence at initialization and composition]{theorem}{hyperkernel}\label{thm:hyperkernel}
Let $h(u;w) = g(z;f(x;w))$ be a hypernetwork. Then,
% \greg{This is convergence in distribution? FYI, since the limit is deterministic, this can be upgraded to convergence in probability, but you can just mention this as a footnote or something since it's not that important from an ML perspective.}
\begin{equation}
\mathcal{K}^h(u,u') \stackrel{p}{\longrightarrow} \Theta^h(u,u') 
\end{equation}
where:
\begin{equation}\label{eq:comp}
\Theta^h(u,u') := \Theta^f(x,x') \cdot \Theta^g(u,u',S^L(x,x'))
\end{equation}
such that:
\begin{equation}
\mathcal{K}^f(x,x') \stackrel{p}{\longrightarrow} \Theta^f(x,x')\cdot I ,~~~~ \mathcal{K}^g(u,u') \stackrel{p}{\longrightarrow} \Theta^g(u,u',S^L(x,x'))
\end{equation}
and if $w$ evolves throughout gradient flow, we have:
\begin{equation}
\frac{\partial \mathcal{K}^h(u,u')}{\partial t}\Big\vert_{t=0} \stackrel{p}{\longrightarrow} 0
\end{equation}
where the limits are taken with respect to $m,n\to \infty$ sequentially.
% Under gradient flow, it holds that: 
% \begin{equation}
% \lim_{n \to \infty} \frac{\partial \mathcal{K}^h(u,u')}{\partial t}\Big\vert_{t=0} = \mathcal{O}_p(1/m)  
% \end{equation}
\end{restatable}

\begin{proof}

Recalling that $v = vec(g(x,z)) = [vec(V^1),...,vec(V^{H})]$, concatenated into a single vector of length $\sum_{l=0}^{H-1}m_l \cdot m_{l+1}$. The components of the inner matrix $\mathcal{K}^f(x,x')$ are given by:
\begin{equation}
\mathcal{K}^f(x,x')(i,j) = \sum_{l=1}^L \left\langle \frac{\partial v_i(x)}{\partial w^l},\frac{\partial v_j(x')}{\partial w^l} \right\rangle\\
\end{equation}
and it holds that in the infinite width limit, $\mathcal{K}^f(x,x')$ is a diagonal matrix:
\begin{equation}
\mathcal{K}^f(x,x') \stackrel{d}{\longrightarrow} \Theta^f(x,x')\cdot I 
\end{equation}
By letting the widths $n$ and $m$ tend to infinity consecutively, by Lem.~\ref{lem:arora2}, it follows that:
\begin{equation}
\frac{\partial h(u;w)}{\partial v} \cdot \frac{\partial^{\top} h(u';w)}{\partial v} \stackrel{d}{\longrightarrow} \Theta^g(u,u',S^L(x,x'))
\end{equation}
Since $\mathcal{K}^f(x,x') = \frac{\partial f(x;w)}{\partial w} \cdot \frac{\partial^\top f(x';w) }{w}$ converges to the diagonal matrix $\Theta^f(x,x')\cdot I$, the limit of $\mathcal{K}^h(u,u')$ is given by:
\begin{equation}
\begin{aligned}
\mathcal{K}^h(u,u') =& \frac{\partial g(z;f(x;w))}{\partial f(x;w)}\cdot \frac{\partial f(x;w)}{\partial w} \cdot \frac{\partial^\top f(x';w) }{w} \cdot \frac{\partial^{\top} g(z';f(x';w))}{\partial f(x';w)} \\ 
=&  
\frac{\partial h(u;w)}{\partial v} \cdot \frac{\partial f(x;w)}{\partial w} \cdot \frac{\partial^\top f(x';w) }{w} \cdot 
\frac{\partial^{\top} h(u';w)}{\partial v}  \\
\stackrel{d}{\longrightarrow} & \Theta^f(x,x')\cdot \Theta^g(u,u',S^L(x,x'))
\end{aligned}
\end{equation}
where we used the results of Lem.~\ref{lem:arora2}.

Next, we would like to prove that $\frac{\partial \mathcal{K}^h(u,u')}{\partial t}\Big\vert_{t=0} = 0$. For this purpose, we write the derivative explicitly:
\begin{equation}
\begin{aligned}
\frac{\partial \mathcal{K}^h(u,u')}{\partial t} = \frac{\partial h(u;w)}{\partial w} \cdot \frac{\partial}{\partial t}\frac{\partial^\top h(u';w)}{\partial w} 
+ \frac{\partial}{\partial t}  \frac{\partial h(u;w)}{\partial w} \cdot \frac{\partial ^\top h(u';w)}{\partial w} \\
\end{aligned}
\end{equation}
We notice that the two terms are the same up to changing between the inputs $u$ and $u'$. Therefore, with no loss of generality, we can simply prove the convergence of the second term. We have:
\begin{equation}
\begin{aligned}
&\frac{\partial}{\partial t}  \frac{\partial h(u;w)}{\partial w} \cdot \frac{\partial ^\top h(u';w)}{\partial w} \\
=& \left[ \frac{\partial}{\partial t}  \left(\frac{\partial h(u;w)}{\partial f(x;w)} \cdot \frac{\partial f(x;w)}{\partial w}\right) \right] \cdot \frac{\partial ^\top h(u';w)}{\partial w} \\
=& \left[ \frac{\partial h(u;w)}{\partial f(x;w) \partial t} \cdot \frac{\partial f(x;w)}{\partial w} + \frac{\partial h(u;w)}{ \partial f(x;w)} \cdot \frac{\partial f(x;w)}{\partial w \partial t} \right] \cdot \frac{\partial ^\top h(u';w)}{\partial w} \\
=& \frac{\partial h(u;w)}{\partial f(x;w) \partial t} \cdot \frac{\partial f(x;w)}{\partial w} \cdot \frac{\partial ^\top h(u';w)}{\partial w} +  \frac{\partial h(u;w)}{ \partial f(x;w)} \cdot \frac{\partial f(x;w)}{\partial w \partial t} \cdot \frac{\partial ^\top h(u';w)}{\partial w} \\
\end{aligned}
\end{equation}
We analyze each term separately. 

\paragraph{Analyzing the first term} By substituting $\frac{\partial}{\partial t} = -\mu \nabla_w c(w)\frac{\partial^\top}{\partial w} = -\mu \nabla_w c(w)\frac{\partial^\top f}{\partial w}\frac{\partial^\top }{\partial f}$, we have:
\begin{equation}
\begin{aligned}
&\frac{\partial h(u;w)}{\partial f(x;w) \partial t}\cdot \frac{\partial f(x;w)}{\partial w}\cdot \frac{\partial ^\top h(u';w)}{\partial w}\\
&=-\mu\nabla_w c(w)\frac{\partial^\top f(x;w)}{\partial w} \cdot \frac{\partial^2 h(u;w)}{\partial f(x;w)\partial f(x;w)}\cdot \frac{\partial f(x;w)}{\partial w}\cdot \frac{\partial^{\top} f(x';w)}{\partial w}\cdot \frac{\partial^\top h(u';w)}{\partial f(x';w)}\\
&=-\mu\nabla_w c(w)\frac{\partial^\top f(x;w)}{\partial w}\frac{\partial^2 h(u;w)}{\partial f(x;w)\partial f(x;w)}\mathcal{K}^f(x,x')\cdot \frac{\partial^\top h(u';w)}{\partial f(x';w)}\\
&=-\mu\sum^N_{i=1} \frac{\partial \ell(h(u_i;w),y_i)}{\partial h(u_i;w)}\cdot \frac{\partial h(u_i;w)}{\partial f(x;w)}\cdot \mathcal{K}^f(x,x_i)\cdot \frac{\partial^2 h(u;w)}{\partial f(x;w)\partial f(x;w)}\cdot \mathcal{K}^f(x,x')\cdot \frac{\partial^\top h(u';w)}{\partial f(x';w)}
\end{aligned}
\end{equation}
It then follows:
\begin{equation}
\begin{aligned}
&\lim_{n \to \infty} \frac{\partial h(u;w)}{\partial f(x;w) \partial t}\cdot \frac{\partial f(x;w)}{\partial w}\cdot \frac{\partial^\top h(u';w)}{\partial w}\\
=&-\mu\sum^{N}_{i=1}\ell_i \cdot \Theta^f(x,x_i)\cdot \Theta^f(x,x')\lim_{n \to \infty}\frac{\partial h(u_i;w)}{\partial f(x_i;w)}\cdot \frac{\partial^2 h(u;w)}{\partial f(x;w)\partial f(x;w)}\cdot \frac{\partial h(u';w)}{\partial f(x';w)}\\
\end{aligned}
\end{equation}
We notice that:
\begin{equation}
\begin{aligned}
&\lim_{n\to \infty}\frac{\partial^\top h(u_i;w)}{\partial f(x_i;w)}\cdot \frac{\partial^2 h(u;w)}{\partial f(x;w)\partial f(x;w)}\cdot \frac{\partial h(u';w)}{\partial f(x';w)} \\
=&\sum_{l_1,l_2} \lim_{n \to \infty} \left\langle\frac{\partial^2 h(u;w)}{\partial f^{l_1}(x;w)\partial f^{l_2}(x;w)},  \frac{\partial h(u_i;w)}{\partial f^{l_1}(x_i;w)} \otimes \frac{\partial h(u';w)}{\partial f^{l_2}(x';w)} \right\rangle\\
:=& \sum_{l_1,l_2}\mathcal{T}^{l_1,l_2}_{m}(u,u_i,u')\\
\end{aligned}
\end{equation}
We recall that $f^{l}(x;w)$ converges to a GP (as a function of $x$) as $n \to \infty$~\cite{lee2018deep}. Therefore, $\mathcal{T}^{l_1,l_2}_{m}(u,u_i,u')$ are special cases of the terms $\mathcal{T}^{\boldsymbol{l,i,d}}_{n,i,d}$ (see Eq.~\ref{eq:T}) with weights that are distributed according to a GP instead of a normal distribution. In this case, we have: $k=2$, $d=d_1=\dots=d_k=1$, the neural network $f^1$ is replaced with $h$, the weights $W^l$ are translated into $f^l(x;w)$. We notice that the proof of Lem.~\ref{lem:d} showing that $\mathcal{T}^{\boldsymbol{l,i,d}}_{n,i,d} = \mathcal{O}_p(1/n^{k-1})$ is simply based on Lem.~\ref{lem:arora1}. Since Lem.~\ref{lem:arora2} extends Lem.~\ref{lem:arora1} to our case, the proof of Lem.~\ref{lem:d} can be applied to show that $\mathcal{T}^{l_1,l_2}_{m}(u,u_i,u') \sim 1/m$.

\paragraph{Analyzing the second term} We would like to show that for any $m>0$, we have:
\begin{equation}
\frac{\partial h(u;w)}{ \partial f(x;w)} \cdot \frac{\partial f(x;w)}{\partial w \partial t} \cdot \frac{\partial ^\top h(u';w)}{\partial w} \stackrel{d}{\longrightarrow} 0
\end{equation}
as $n\to \infty$. Since $\frac{\partial w}{\partial t} = -\mu \nabla_w c(w)$, we have:
\begin{equation}
\begin{aligned}
&\frac{\partial h(u;w)}{ \partial f(x;w)} \cdot \frac{\partial f(x;w)}{\partial w \partial t} \cdot \frac{\partial ^\top h(u';w)}{\partial w} \\
=& -\mu \cdot \frac{\partial h(u;w)}{ \partial f(x;w)}  \cdot \nabla_w c(w) \cdot \frac{\partial^2 f(x;w)}{\partial w^2} \cdot \frac{\partial ^\top h(u';w)}{\partial w} \\
=& -\mu \cdot \frac{\partial h(u;w)}{ \partial f(x;w)} \cdot \nabla_w c(w) \cdot \frac{\partial^2 f(x;w)}{\partial w^2} \cdot \frac{\partial^{\top} f(x';w)}{\partial w} \cdot \frac{\partial ^\top h(u';w)}{\partial f(x;w)} \\
\end{aligned}
\end{equation}
In addition, we have:
\begin{equation}
\begin{aligned}
\nabla_w c(w) = \sum^{N}_{i=1} \frac{\partial \ell(h(u_i;w),y_i)}{\partial h(u_i;w)} \cdot  \frac{\partial h(u_i;w)}{\partial w} 
\end{aligned}
\end{equation}
We note that $\frac{\partial \ell(h(u_i;w),y_i)}{\partial h(u_i;w)}$ converges in distribution as $m,n\to \infty$. 
Therefore, we can simply analyze the convergence of:
\begin{equation}
\begin{aligned}
\sum^{N}_{i=1}\frac{\partial h(u;w)}{ \partial f(x;w)} \cdot \frac{\partial h(u_i;w)}{\partial w} \cdot \frac{\partial^2 f(x;w)}{\partial w^2} \cdot \frac{\partial^{\top} f(x';w)}{\partial w} \cdot \frac{\partial ^\top h(u';w)}{\partial f(x;w)} \\
\end{aligned}
\end{equation}
Since $N$ is a constant, it is enough to show that each term converges to zero. We have:
\begin{equation}\label{eq:sumfww}
\begin{aligned}
&\frac{\partial h(u;w)}{ \partial f(x;w)} \cdot \frac{\partial h(u_i;w)}{\partial w} \cdot \frac{\partial^2 f(x;w)}{\partial w^2} \cdot \frac{\partial^{\top} f(x';w)}{\partial w} \cdot \frac{\partial ^\top h(u';w)}{\partial f(x;w)} \\
=&\frac{\partial h(u;w)}{ \partial f(x;w)} \cdot \frac{\partial h(u_i;w)}{\partial f(x_i;w)} \cdot \frac{\partial f(x_i;w)}{\partial w} \cdot \frac{\partial^2 f(x;w)}{\partial w^2} \cdot \frac{\partial^{\top} f(x';w)}{\partial w} \cdot \frac{\partial ^\top h(u';w)}{\partial f(x;w)} \\
=&\sum_{l,j,k}\frac{\partial h(u;w)}{ \partial f(x;w)_l} \cdot \frac{\partial h(u_i;w)}{\partial f(x_i;w)_j}\cdot \frac{\partial f(x_i;w)_j}{\partial w} \cdot \frac{\partial^2 f(x;w)_l}{\partial w^2} \cdot \frac{\partial^{\top} f(x';w)_k}{\partial w} \cdot \frac{\partial ^\top h(u';w)}{\partial f(x;w)_k} \\
\end{aligned}
\end{equation}
where $f(x;w)_j$ is the $j$'th output of $f$ over $x$. In addition, the summation is done over the indices of the corresponding tensors. We note that for any $m>0$, the number of indices $l,j,k$ is finite. We would like to show that each term in the sum tends to zero as $n\to \infty$. We can write:
\begin{equation}\label{eq:fww}
\frac{\partial f(x_i;w)_j}{\partial w} \cdot \frac{\partial^2 f(x;w)_l}{\partial w^2} \cdot \frac{\partial^{\top} f(x';w)_k}{\partial w} 
= \left\langle \frac{\partial^2 f(x;w)_l}{\partial w^2},\frac{\partial f(x_i;w)_j}{\partial w} \otimes \frac{\partial f(x';w)_k}{\partial w}  \right\rangle
\end{equation} 
By Lem.~\ref{lem:d}, the term in Eq.~\ref{eq:fww} tends to zero as $n \to \infty$. In addition, it is easy to see that $\frac{\partial h(u;w)}{ \partial f(x;w)_l}$, $ \frac{\partial h(u_i;w)}{\partial f(x_i;w)_j}$ and $\frac{\partial ^\top h(u';w)}{\partial f(x;w)_k} $ converge to some random variables. Therefore, for any fixed $m>0$, the above sum converges to zero as $n\to \infty$.
\end{proof}

% Next, we would like to prove that For this purpose, we provide an explicit expression for \begin{equation}
% \frac{\partial^2 h(u;w)}{\partial f^{l_1}(x;w) \partial f^{l_2}(x;w)}    
% \end{equation}
% For any $l_1 < l_2$, it holds that:
% \begin{equation}\label{der}
% \frac{\partial^2 h(u;w)}{\partial f^{l_1}(x;w) \partial f^{l_2}(x;w) } = \frac{1}{\sqrt{n_{l_1-1}}}a^{l_1-1}(z) \otimes \mathcal{A}^{l_1\rightarrow l_2}(x,z)
% \end{equation}
% We notice that if $l_1=l_2$, then, $\frac{\partial^2 h(u;w)}{\partial f^{l_1}(x;w) \partial f^{l_2}(x;w) } = 0$, since $g$ is a ReLU neural network.

% Here, $\mathcal{A}^{l_1\rightarrow l_2}(x,z)$ is a $3$-tensor, defined as follows:
% \begin{equation}
% \mathcal{A}^{l_1\rightarrow l_2}(x,z) := 
% \frac{\sqrt{2}}{\sqrt{n_{l_{2}-1}}} Z^{l_{2}-1}(z)\cdot P^{l_1 \rightarrow l_{2} - 1}(x,z) \otimes P^{l_2\rightarrow H}(x,z)
% \end{equation}
% % and:
% \begin{equation}
% P^{u\rightarrow v}(x,z) = \prod_{l = u}^{v-1}\left(\sqrt{\frac{2}{m_{l}}}f^{l+1}(x;w)\cdot Z^{l}(z) \right) \textnormal{ and } Z^l(z) = \textnormal{diag} (\dot\sigma(g^{l}(z)))
% \end{equation}
% Similarly to the derivation presented in Eq.~\ref{eq:order}, we arrive at:
% \[
% &\mathcal{T}^{l_1,l_2}_{m}(u_i,x_j,z_j,x_k,z_k) \\
% &= \frac{1}{m}\left\langle a^{l_1-1}_i , a^{l_1-1}_{j}  \right \rangle \cdot \left\langle P^{l_2\rightarrow H} , P_{i}^{l_2\rightarrow H}  \right \rangle \left\langle \frac{C_{i_{j},d_{j}}^{l_j\rightarrow L}\otimes q_{i_{j+1},d_{j+1}}^{l_{j+1}-1}}{n_{l_{j+1}-1}}, C_{i,d}^{l_j\rightarrow l_{j+1}}  \right \rangle
% \]
% in order to follow the derivation
% {\color{red} [TG: edit]
% \begin{lemma}\label{lem:2}
% The following holds:
% \begin{enumerate}
%     \item $\lim_{n_1...n_{v-1}\rightarrow \infty}P_i^{u \rightarrow v}(P_j^{u \rightarrow v})^{\top} = \prod_{l=u}^{v-1}\dot{\Sigma}_{i,j}^lI$.
%     \item $\lim_{n_1,...,n_{L-1}\rightarrow \infty}P_{i,d_1}^{u \rightarrow L}(P_{j,d_2}^{u \rightarrow L})^{\top} =  \prod_{l=u}^{L-1}\dot{\Sigma}_{i,j}^l\delta_{d_1=d_2}$.
%     \item $\lim_{n_1,...,n_v \rightarrow \infty}\frac{(q_i^{v})^{\top} q_j^v}{n_v} = \Sigma_{i,j}^v$.
% \end{enumerate}
% Here, $\delta_{T}$ is an indicator that returns $1$ if $T$ is true and $0$ otherwise. Note that $\Sigma_{i,j}^v = \Sigma^v(x_i,x_j)$.
% \end{lemma}

% \begin{proof}
% See~\cite{conf/nips/AroraDH0SW19}.
% \end{proof}
% }

% \section{Vanishing Higher Order Terms}
% In this section, we show that in the infinite width limit, the high order terms $\langle \nabla^{(n)} f_w(x_i), \nabla_w c(w)^n \rangle$ vanish for $n>1$. It holds that:
% \begin{equation}
% \begin{aligned}
% &\Big\langle \nabla^{(r)} f_w(x_i), \nabla_w c(w)^r \Big\rangle \\
% &= \sum_{i_1...i_r=1}^m \prod_j\left( \sgn(\delta_{i_j})\right)\Big\langle...\Big\langle \nabla^{(r)} f_w(x_i), \nabla f_w(x_{i_1}) \Big\rangle...,\nabla f_w(x_{i_m})\Big\rangle\\
% &=\sum_{i_1...i_r=1}^m \prod_j\left( \sgn(\delta_{i_j})\right)\Big\langle...\Big\langle \mathcal{T}_i^r, \nabla f_w(x_{i_1}) \Big\rangle...,\nabla f_w(x_{i_r})\Big\rangle
% \end{aligned}
% \end{equation}

% We begin by examining the unique structure of the $r$ tensor $\mathcal{T}_i^r = \nabla^{(r)} f_w(x_i)$. From its definition as the $r'th$ derivative with respect to the weights, and the architecture of the network $\mathcal{N}$, we conclude that if $L<r$, then $\mathcal{T}_i^r = 0$ (derivatives of higher order than $L$ will vanish for a fully connected Relu network of depth $L$). For $r\leq L$, we can decompose $\mathcal{T}_i^r$ into core tensors (also of rank $r$), indexed by the set $\{l_1,l_2...l_r\}$:
% \[
% \mathcal{T}_i^{l_1,l_2...l_r} = \frac{\partial ^r f_w(x_i)}{\partial vec(W^{l_1})...\partial vec(W^{l_r})}
% \]
% Note that for a set $\{l_1,l_2...l_r\}$ that contains less than $r$ unique indices, it holds that $\mathcal{T}_i^{l_1,l_2...l_r} = 0$. We can easily reshape $\mathcal{T}_i^{l_1,l_2...l_r}$ into a $2r$ tensor as follows:
% \[\label{reshaped}
% \mathcal{T}_i^{l_1,l_2...l_r} = \frac{\partial^r f_w(x_i)}{\partial W^{l_1}...\partial W^{l_r}}
% \]
% From now on, we assume the core tensors $\mathcal{T}_i^{l_1,l_2...l_r}$ are $2r$ tensors given by Eq.~\ref{reshaped}.\\

% It follows:
% \[\label{order}
% &&\Big\langle...\Big\langle \mathcal{T}_i^r, \nabla f_w(x_{i_1}) \Big\rangle...,\nabla f_w(x_{i_r})\Big\rangle\\
% &&=\sum_{l_1...l_r=1}^L \Big\langle \frac{ \partial f_w(x_{i_r})}{\partial W^{l_r}}...\Big\langle \frac{ \partial f_w(x_{i_1})}{\partial W^{l_1}},\mathcal{T}_i^{l_1,l_2...l_r} \Big\rangle\Big\rangle\\
% &&= \sum_{l_1...l_r=1}^L \Big\langle \frac{ \partial f_w(x_{i_r})}{\partial W^{l_r}} \otimes... \frac{ \partial f_w(x_{i_1})}{\partial W^{l_1}},\mathcal{T}_i^{l_1,l_2...l_r} \Big\rangle
% = \sum_{l_1...l_r=1}^L\mathcal{T}_{i,i_1,i_2...i_r}^{l_1,l_2...l_r}
% \]
% We now explicitly describe an expression for $\mathcal{T}_i^{l_1,l_2...l_r}$. For any set $\{l_1,l_2...l_r\}$ such that $l_1<l_2...<l_r$, it holds that:
% \[\label{der}
% \mathcal{T}_i^{l_1,l_2...l_r} = \frac{1}{\sqrt{n_{l_1-1}}}q^{l_1-1}_i \otimes \mathcal{A}^{l_1\rightarrow l_2}_i
% \]
% where $\mathcal{A}^{l_1\rightarrow l_2}$ is a $2r-1$ tensor, defined as follows:
% \[
% \mathcal{A}^{l_j\rightarrow l_{j+1}}_i = \begin{cases}  
% \frac{1}{\sqrt{n_{l_{j+1}-1}}}C^{l_j\rightarrow l_{j+1}}_i \otimes \mathcal{A}^{l_{j+1}\rightarrow l_{j+2}}_i & 1<j<r-1\\
% \frac{1}{\sqrt{n_{l_{j+1}-1}}}C^{l_j\rightarrow l_{j+1}}_i \otimes C^{l_{j+1}\rightarrow L}_i & j=r-1\\

% \end{cases} 
% \]
% where:
% \[\label{eq:c}
% C^{l_j\rightarrow l_{j+1}}_i = \begin{cases}  
% \sqrt{2}Z^{l_{j+1}-1}_iP_i^{l_j\rightarrow l_{j+1} - 1} & l_{j+1} \neq L\\
% P_i^{l_j\rightarrow L} & else
% \end{cases}
% \]
% and:
% \[
% P_i^{u\rightarrow v} = \prod_{l = u}^{v-1}(\sqrt{\frac{2}{n_{l}}}W^{l+1}Z^{l}_i)
% \
% \]
% \begin{lemma}\label{lem:1}
% The following hold:
% \begin{enumerate}
%     \item $\lim_{n_1...n_{v-1}\rightarrow \infty}P_i^{u \rightarrow v}P_j^{u \rightarrow v \top} = \prod_{l=u}^{v-1}\dot{\Sigma}_{i,j}^l$
%     \item $\lim_{n_1...n_v \rightarrow \infty}\frac{q_i^{v\top} q_j^v}{n_v} = \Sigma_{i,j}^v$
% \end{enumerate}
% Note that $\Sigma_{i,j}^v = \Sigma^v(x_i,x_j)$.
% \begin{proof}
% see sanjeev...
% \end{proof}
% \end{lemma}

% The individual gradients $\frac{ \partial f_w(x_i)}{\partial W^{l_j}}$ can be expressed using:
% \[\label{grad}
% \frac{ \partial f_w(x_i)}{\partial W^{l_j}}= \frac{q^{l_j-1}_i \otimes C^{l_j\rightarrow L}_i}{\sqrt{n_{l_j-1}}}
% \]
% %\[
% %= \frac{C^{l_1\rightarrow l_j}_iW^{l_1}}{\sqrt{n_{l_1-1}}}q^{l_1-1}_i \otimes C^{l_j\rightarrow L}_i
% %\]

% Note that the following hold for any $u<v<d\leq L$:
% \[\label{eq:factor}
% C^{u\rightarrow d} = C^{v\rightarrow d}\frac{W^v}{\sqrt{n_{v-1}}}C^{u\rightarrow v},~~~C^{u\rightarrow L} = C^{v-1\rightarrow L}P^{u\rightarrow v-1}
% \]

% In the following, given the sets $\{l_1...l_r\}$,$\{i,i_1...i_r\}$, we derive the limit of\\ $\mathcal{T}_{i,i_1,i_2...i_r}^{l_1,l_2...l_r}$ using elementary tensor algebra.

% We have that:
% \[\label{eq:order}
% \mathcal{T}_{i,i_1,i_2...i_r}^{l_1,l_2...l_r}
% = \Big\langle \frac{ \partial f_w(x_{i_r})}{\partial W^{l_r}} \otimes... \frac{ \partial f_w(x_{i_1})}{\partial W^{l_1}},\frac{q^{l_1-1}_i}{\sqrt{n_{l_1-1}}} \otimes  \frac{C^{l_j\rightarrow l_{j+1}}_i}{\sqrt{n_{l_{j+1}-1}}}...\otimes C^{l_r\rightarrow L}_i\Big\rangle
% \\
% =\frac{1}{n_{l_1}-1}\Big\langle q^{l_1-1}_i , q^{l_1-1}_{i_1}  \Big \rangle \Big\langle C_{i_r}^{l_r\rightarrow L} , C_i^{l_r\rightarrow L}  \Big \rangle\prod_{j=1}^{r-1}\Big\langle \frac{C_{i_j}^{l_j\rightarrow L}\otimes q_{i_{j+1}}^{l_{j+1}-1}}{n_{l_{j+1}-1}}, C_i^{l_j\rightarrow l_{j+1}}  \Big \rangle
% \]
% % \begin{theorem}
% % It holds that for any $r>1$ and any sets $\{l_1...l_r\}$,$\{i,i_1...i_r\}$:
% % \[
% % \lim_{n_1...n_{L-1} \rightarrow \infty}\mathcal{T}_{i,i_1,i_2...i_r}^{l_1,l_2...l_r} = 0
% % \]
% % \begin{proof}
% % Using Eq.~\ref{eq:order}, it holds that:
% % \[
% % \lim_{n_1...n_{L-1} \rightarrow \infty}\mathcal{T}_{i,i_1,i_2...i_r}^{l_1,l_2...l_r} =
% % \lim_{n_1...n_{L-1} \rightarrow \infty}\frac{\Big\langle q^{l_1-1}_i , q^{l_1-1}_{i_1}  \Big \rangle \Big\langle C_{i_r}^{l_r\rightarrow L} , C_i^{l_r\rightarrow L}  \Big \rangle}{n_{l_1}-1}...\\
% % \lim_{n_1...n_{L-1} \rightarrow \infty}\prod_{j=1}^{r-1}\Big\langle \frac{C_{i_j}^{l_j\rightarrow L}\otimes q_{i_{j+1}}^{l_{j+1}-1}}{n_{l_{j+1}-1}}, C_i^{l_j\rightarrow l_{j+1}}  \Big \rangle
% % \]
% % Using Lem.~\ref{lem:1}:
% % \[\label{eq:main}
% % \lim_{n_1...n_{L-1} \rightarrow \infty}\mathcal{T}_{i,i_1,i_2...i_r}^{l_1,l_2...l_r} =\Sigma_{i,i_1}^{l_1-1}\left(\prod_{j=r}^{L}\dot{\Sigma}_{i,i_r}^{l_j}\right)\lim_{n_1...n_{L-1} \rightarrow \infty}\prod_{j=1}^{r-1}\Big\langle \frac{C_{i_j}^{l_j\rightarrow L}\otimes q_{i_{j+1}}^{l_{j+1}-1}}{n_{l_{j+1}-1}}, C_i^{l_j\rightarrow l_{j+1}}  \Big \rangle
% % \]
% % Expanding the second term using Eq.~\ref{eq:factor}:
% % \[\label{eq:22}
% % &&\Big\langle \frac{C_{i_j}^{l_j\rightarrow L}\otimes q_{i_{j+1}}^{l_{j+1}-1}}{n_{l_{j+1}-1}}, C_i^{l_j\rightarrow i_{j+1}}  \Big \rangle \\
% % &&= \frac{1}{n_{l_{j+1}-1}}C_{i_j}^{l_j\rightarrow L}C_i^{l_j\rightarrow i_{j+1}}q_{i_{j+1}}^{l_{j+1}-1} \\
% % &&= \frac{1}{n_{l_{j+1}-1}}C_{i_j}^{l_{j+1}-1\rightarrow L}P_{i_j}^{l_j\rightarrow l_{j+1}-1} P_{i}^{l_j\rightarrow l_{j+1}-1\top}  \sqrt{2}Z_i^{l_{j+1}-1}q_{i_{j+1}}^{l_{j+1}-1}\\
% % &&=\sqrt{2}\Big\langle \frac{C_{i_j}^{l_{j+1}-1\rightarrow L}\otimes (Z_i^{l_{j+1}-1}q_{i_{j+1}}^{l_{j+1}-1})}{n_{l_{j+1}-1}}, P_{i_j}^{l_j\rightarrow l_{j+1}-1} P_{i}^{l_j\rightarrow l_{j+1}-1\top} \Big \rangle
% % \]
% % Since the limit of a product equals the product of limits (if they exist), it holds that (after taking the limit of the right term in the above inner product):
% % \[
% % &&\lim_{n_1...n_{L-1}\rightarrow \infty}\Big \langle \frac{C_{i_j}^{l_j\rightarrow L}\otimes q_{i_{j+1}}^{l_{j+1}-1}}{n_{l_{j+1}-1}}, C_i^{l_j\rightarrow i_{j+1}}  \Big \rangle \\
% % &&=\lim_{n_1...n_{L-1}\rightarrow \infty}\sqrt{2} \frac{C_{i_j}^{l_{j+1}-1\rightarrow L} Z_i^{l_{j+1}-1}q_{i_{j+1}}^{l_{j+1}-1}}{n_{l_{j+1}-1}} \prod_{l=l_j}^{l_{j+1}-2}\dot{\Sigma}^l_{i,i_j} 
% % \]
% % Using Lem.~\ref{lem:1}:
% % \[
% % &&\lim_{n_1...n_{L-1}\rightarrow \infty} \frac{C_{i_j}^{l_{j+1}-1\rightarrow L} Z_i^{l_{j+1}-1}q_{i_{j+1}}^{l_{j+1}-1}}{n_{l_{j+1}-1}}\\
% %  &&\leq \lim_{n_1...n_{L-1}\rightarrow \infty}\|C_{i_j}^{l_{j+1}-1\rightarrow L}\|\lim_{n_1...n_{L-1}\rightarrow \infty}\frac{\|q_{i_{j+1}}^{l_{j+1}-1}\|}{n_{l_{j+1}-1}}\\
% %  &&=\lim_{n_1...n_{L-1}\rightarrow \infty}\|P_{i_j}^{l_{j+1}-1\rightarrow L}\| \lim_{n_{l_{j+1}-1}\rightarrow \infty}\sqrt{\frac{\Sigma_{i_{j+1},i_{j+1}}^{l_{j+1}-1}}{n_{l_{j+1}-1}}}\\
% %  &&=\prod_{l=l_{j+1}-1}^{L-1}\dot{\Sigma}_{i_j,i_j}^l \lim_{n_{l_{j+1}-1}\rightarrow \infty}\sqrt{\frac{\Sigma_{i_{j+1},i_{j+1}}^{l_{j+1}-1}}{n_{l_{j+1}-1}}} = 0
% % \]
% % therefore proving the claim.

% % \end{proof}
% % \end{theorem}

% % \begin{theorem}
% % Given a loss function (For simplicity lets say $L1$), fix $\epsilon,\delta, r>1$, then there exists a constant $C(r)$ such that if $n>f(L,\epsilon,\delta)$ (the width is bigger than some function of $L,\epsilon,\delta$), it holds that:
% % \[
% % Pr\left(\big|\Big\langle \nabla^{(r)} c(w), \nabla_w c(w)^r \Big\rangle\big|<\epsilon\right)>1-\delta
% % \]
% % \end{theorem}

% % {\color{red}In this case, we don't really need $\epsilon$ to be small, it can be a constant.}
% % {\color{blue}right. this is just an example of what the theorem might look like. Maybe we can simply show what the minimum width has to be in order to be smaller than $\epsilon$ for each order $r$.\\
% % A few more thoughts on this. This type of bound on the width is easy to derive for deep linear networks (using https://openreview.net/pdf?id=S1gFvANKDS), which upper bounds the variance of relu networks (relu reduces variance). Also, i know boris hanin is working on this, so it

% % might be smarter to shift focus back to hypernets)}\\

% Similar to the NTK derivation, the infinite width limit of the first two terms can be derived as follows:
% \begin{equation}
% \mathcal{T}_{\boldsymbol{l,i,s}}
% =\Sigma_{i,i_1}^{l_1-1}\left(\prod_{l=l_k}^{L}\dot{\Sigma}_{i,i_k}^{l}\right) \prod_{j=1}^{k-1}\left\langle \frac{C_{i_{j},d_{j}}^{l_j\rightarrow L}\otimes q_{i_{j+1},d_{j+1}}^{l_{j+1}-1}}{n_{l_{j+1}-1}}, C_{i,d}^{l_j\rightarrow l_{j+1}}  \right \rangle
% \end{equation}
% Expanding:
% \begin{equation}
% \begin{aligned}
% \left\langle \frac{C_{i_{j},d_{j}}^{l_j\rightarrow L}\otimes q_{i_{j+1},d_{j+1}}^{l_{j+1}-1}}{n_{l_{j+1}-1}}, C_{i,d}^{l_j\rightarrow l_{j+1}}  \right \rangle 
% &= \frac{C_{i_j,d_j}^{l_j\rightarrow L} (C_{i,d}^{l_j\rightarrow i_{j+1}})^{\top}  q_{i_{j+1},d_{j+1}}^{l_{j+1}-1}}{n_{l_{j+1}-1}}\\
% &=C_{i_j,d_j}^{l_{j+1}\rightarrow L}\frac{W^{l_{j+1}}}{\sqrt{n_{l_{j+1}-1}}}\frac{C_{i_j,d_j}^{l_j\rightarrow l_{j+1}} (C_{i,d}^{l_j\rightarrow l_{j+1}})^{\top}  q_{i_{j+1},d_{j+1}}^{l_{j+1}-1}}{n_{l_{j+1}-1}}\\
% &=C_{i_j,d_j}^{l_{j+1}-1\rightarrow L}\frac{\sqrt{2}Z_i^{l_{j+1}-1}q_{i_{j+1},d_{j+1}}^{l_{j+1}-1}  }{n_{l_{j+1}-1}}\prod_{l=l_j}^{l_{j+1}-2}\dot\Sigma_{i,i_j}^l
% \end{aligned}
% \end{equation}
% We notice that for any $j < k-1$, we have: $l_{j+1} < L$ and $C^{l_i \to l_{j+1}}_{i_j,d_j} = C^{l_i \to l_{j+1}}_{i_j,d}$. Therefore, in this case by the analysis of~\cite{arora}, we have: $C_{i_j,d_j}^{l_j\rightarrow l_{j+1}} (C_{i,d}^{l_j\rightarrow l_{j+1}})^{\top} = \prod_{l=l_j}^{l_{j+1}-2}\dot\Sigma_{i,i_j}^l$. By the same analysis, we consider that for $j=k-1$, such that, $l_{j+1} = L$, we have: 
% \begin{equation}
% C_{i_j,d_j}^{l_j\rightarrow l_{j+1}} (C_{i,d}^{l_j\rightarrow l_{j+1}})^{\top} = \prod_{l=l_j}^{l_{j+1}-2}\dot\Sigma_{i,i_j}^l
% \end{equation}

% \begin{equation}
% \left\langle \frac{C_{i_{j},d_{j}}^{l_j\rightarrow L}\otimes q_{i_{j+1},d_{j+1}}^{l_{j+1}-1}}{n_{l_{j+1}-1}}, C_{i,d}^{l_j\rightarrow l_{j+1}}  \right \rangle =C_{i_j,d_j}^{l_{j+1}-1\rightarrow L}\frac{\sqrt{2}Z_i^{l_{j+1}-1}q_{i_{j+1},d_{j+1}}^{l_{j+1}-1}  }{n_{l_{j+1}-1}}\prod_{l=l_j}^{l_{j+1}-2}\dot\Sigma_{i,i_j}^l
% \end{equation}
% Hence,
% \begin{equation}
% \mathcal{T}_{\boldsymbol{l,i,s}} = \Sigma_{i,i_1}^{l_1-1}\left(\prod_{j\neq \{l_j-1\}}^{L}\dot{\Sigma}_{i,i_j}^{l_j}\right)\prod_{j=1}^{k-1}\Big\langle \frac{C_{i_j}^{l_{j+1}-1\rightarrow L}\otimes q_{i_{j+1}}^{l_{j+1}-1}}{n_{l_{j+1}-1}}, \sqrt{2}Z_i^{l_{j+1}-1}  \Big \rangle
% \end{equation}
% Note that each term $\Big\langle C_{i_j}^{l_{j+1}-1\rightarrow L}\otimes q_{i_{j+1}}^{l_{j+1}-1}, \sqrt{2}Z_i^{l_{j+1}-1}  \Big \rangle$ is a zero mean GP at the infinite width limit, with finite variance. When divided by $n_{l_{j+1}-1}$, by the central limit theorem it holds that:
% \begin{equation}
% \lim_{n \rightarrow \infty}\Big\langle \frac{C_{i_j}^{l_{j+1}-1\rightarrow L}\otimes q_{i_{j+1}}^{l_{j+1}-1}}{n_{l_{j+1}-1}}, \sqrt{2}Z_i^{l_{j+1}-1}  \Big \rangle = 0
% \end{equation}

\bibliography{refs}
\bibliographystyle{plain}